\documentclass[a4paper, 12pt]{article}

\usepackage[letterpaper, top=2.5cm, bottom=2.5cm, left=2.0cm, right=2.0cm, marginparwidth=1cm]{geometry}

\usepackage{graphicx} 
\usepackage{upgreek}
\usepackage{amsmath}
\usepackage{amsthm}
\usepackage{amsfonts}
\usepackage{amssymb}
\usepackage{mathtools}
\usepackage{bm}
\usepackage{mathrsfs}
\usepackage{amssymb}
\usepackage{caption}
\usepackage{fancyhdr}
\usepackage{booktabs}
\usepackage{float}
\usepackage{diagbox}
\usepackage{listings, comment}
\usepackage{natbib}
\usepackage{enumitem}
\usepackage{mdframed}
\usepackage{authblk}
\RequirePackage[colorlinks,citecolor=blue,urlcolor=blue]{hyperref}

\newtheorem{theorem}{Theorem}
\newtheorem{proposition}{Proposition}
\newtheorem{lemma}{Lemma}
\newtheorem{assumption}{Assumption}

\newtheorem{remark}{Remark}

\usepackage{algorithm}%
\usepackage{algorithmicx}%
\usepackage{algpseudocode}%

\usepackage{setspace}

\usepackage{tikz}
\usetikzlibrary{patterns}
\usetikzlibrary{decorations.pathreplacing,calligraphy}
\usetikzlibrary{patterns.meta}
\usetikzlibrary{arrows.meta, positioning, shapes.multipart, calc, fit}
\usetikzlibrary{shapes.geometric}
\usetikzlibrary{shadows}
\usetikzlibrary{intersections,calc}
\tikzdeclarepattern{
  name=mylines,
  parameters={
      \pgfkeysvalueof{/pgf/pattern keys/size},
      \pgfkeysvalueof{/pgf/pattern keys/angle},
      \pgfkeysvalueof{/pgf/pattern keys/line width},
  },
  bounding box={
    (0,-0.5*\pgfkeysvalueof{/pgf/pattern keys/line width}) and
    (\pgfkeysvalueof{/pgf/pattern keys/size},
0.5*\pgfkeysvalueof{/pgf/pattern keys/line width})},
  tile size={(\pgfkeysvalueof{/pgf/pattern keys/size},
\pgfkeysvalueof{/pgf/pattern keys/size})},
  tile transformation={rotate=\pgfkeysvalueof{/pgf/pattern keys/angle}},
  defaults={
    size/.initial=5pt,
    angle/.initial=45,
    line width/.initial=.4pt,
  },
  code={
      \draw [line width=\pgfkeysvalueof{/pgf/pattern keys/line width}]
        (0,0) -- (\pgfkeysvalueof{/pgf/pattern keys/size},0);
  },
}
\tikzset{global scale/.style={
    scale=#1,
    every node/.append style={scale=#1}
  }
}

\renewcommand{\hat}[1]{\widehat{#1}}
\usepackage[english]{babel}
\newcommand{\mI}{\mathcal{I}}

\newcommand{\mS}{\mathcal{S}} 
\newcommand{\xj}[1]{ X_{\mathcal{I}_{#1}} }

\newif\ifEnon
\Enontrue
\ifEnon
\newcommand{\zhifan}[1]{\color{red}{[{\bf Zhifan:} #1}]\color{black}\,}
\else
\newcommand{\zhifan}[1]{}
\fi

\begin{document}

\title{High-dimensional online learning via asynchronous decomposition: Non-divergent results, dynamic regularization, and beyond}

\author[1]{Shixiang Liu\thanks{The author thanks Chenghao Zhou from the Chinese Academy of Agricultural Sciences for a warm discussion.}} 
\author[2]{Zhifan Li}
\author[3]{Hanming Yang}
\author[4, 1]{Jianxin Yin}

\affil[1]{\footnotesize School of Statistics, Renmin University of China}
\affil[2]{\footnotesize School of Statistics and Mathematics, Zhongnan University of Economics and Law}
\affil[3]{\footnotesize Institute of Statistics and Big Data, Renmin University of China}
\affil[4]{\footnotesize Center for Applied Statistics, Renmin University of China}

\date{}
\maketitle \sloppy

\begin{abstract}
Existing high-dimensional online learning methods often face the challenge that their error bounds, or per-batch sample sizes, diverge as the number of data batches increases. 
To address this issue, we propose an asynchronous decomposition framework that leverages summary statistics to construct a surrogate score function for current-batch learning. 
This framework is implemented via a dynamic-regularized iterative hard thresholding algorithm, providing a computationally and memory-efficient solution for sparse online optimization. 
We provide a unified theoretical analysis that accounts for both the streaming computational error and statistical accuracy, establishing that our estimator maintains non-divergent error bounds and $\ell_0$ sparsity across all batches.
Furthermore, the proposed estimator adaptively achieves additional gains as batches accumulate, attaining the oracle accuracy as if the entire historical dataset were accessible and the true support were known. 
These theoretical properties are further illustrated through an example of the generalized linear model.

\end{abstract}

\section{Introduction}\label{sec: intro}
In the era of big data, streaming or online data have become increasingly prevalent in fields such as high-frequency financial trading and large-scale network analysis.
Because streaming data are continuously generated at high volume and velocity, it is often computationally impractical to store all historical observations for offline analysis.
Consequently, statistical learning should be conducted using incremental algorithms that update estimators sequentially.
Consider the linear model as an illustrative example, where data batches $D_j = \left( X_{\mI_j}, Y_{\mI_j} \right)\in \mathbb R^{n_j \times p} \times \mathbb R^{n_j}$ arrive at each time $j \ge 1$:
\begin{equation}\label{eq: example}
    Y_{\mI_j} = X_{\mI_j} \beta^* + \xi_{\mI_j} \in \mathbb R^{n_j},
\end{equation}
Here $n_j$ is the batch sample size and $\mathcal{I}_j$ denotes the observation index set for the $j$-th batch.
In this online setting, the goal is to update the preceding estimator $\hat{\beta}^{(j-1)}$ by leveraging the current batch $\mathcal{D}_j$ and a set of cumulative summary statistics, thereby avoiding the need to re-access historical raw data.
In this study, we focus on the high-dimensional streaming problem 
and propose an asynchronous decomposition framework for sparse online learning.

\subsection{Related work and marginal contribution}
\paragraph{Upper bounds in High-dimensional online learning}
In the online setting, stochastic gradient descent (SGD) and Polyak-Ruppert averaging procedure (Averaging SGD, \citet{Ruppert1988ASGD, Polyak1992ASGD}) are widely used for their $O(p)$ storage efficiency and effectiveness in low-dimensional inference \citep{zhu2022beyond, Lee2025quantile}.
However, in high-dimensional cases, these methods are known to be sub-optimal, yielding a $\ell_2$ estimation rate of order $N^{-1/4}$ \citep{Agarwal2012lower}.

Building on Nesterov’s dual averaging framework \citep{Nesterov2009primal}, \citet{Agarwal2012RADAR} introduced the Regularization Annealed epoch Dual AveRaging (RADAR) algorithm, 
which attain minimax near-optimality in the high-dimensional case. 
This framework has been extended to online debiased inference for high-dimensional linear models \citep{Chen2020SGD, Luo2023inference} and Generalized Linear Models (GLMs) \citep{Han2025adaptivedebiased}. 
However, a limitation of RADAR is its reliance on \textit{geometrically increasing batch sizes} (e.g., $n_b \asymp 2^{b-1} n_1$), which might be difficult to achieve in practice. Although some variants with fixed epoch lengths have been explored (e.g., Section 3.3 in \citet{Agarwal2012RADAR}, \citet{Juditsky2023asy}), these methods often depend on knowing the total sample size $N$ (a preset budget) in advance and are thus not directly applicable to a fully online setting.

Within a batch-wise framework, \citet{Song2020renewable} proposed a renewable online learning method, where the maximum likelihood estimator is recursively updated using the current data batch and historical summary statistics, which requires $O(p^2)$ storage. 
\citet{luo2023GLM} extended this framework to high-dimensional GLMs by incorporating $\ell_1$ regularization and developing an online debiasing procedure. 
It has also been generalized to high-dimensional single-index models \citep{Huang2024SIM}, support vector machines \citep{rao2025svm, zhang2025svm}, and quantile regression \citep{Kong2025SQR}. 
Despite its versatility, the high-dimensional renewable framework suffers from the \textit{exponential inflation of its theoretical error bounds}, which grow as $\| \hat\beta^{(b)} - \beta^*\|_2 \le  C^b \sqrt{(s\log p)/N_b}$ \citep{luo2023GLM, Huang2024SIM, rao2025svm, Kong2025SQR}, where $b\ge 1$ is the batch index, $N_b = \sum_{j=1}^b n_j$ is the cumulative sample size, and $C >1$ is a universal constant. 
This implies that, unless the cumulative sample size $N_b$ scales \textit{exponentially} with $b$, the error bound and subsequent asymptotic inference become statistically uninformative as $b$ grows large. 

Building on the aforementioned analysis, we pose the primary research question that drives this study: 
\begin{mdframed}[
  linewidth=0.5pt,
  innerleftmargin=8pt,
  innerrightmargin=8pt
]
\itshape
\textbf{Question 1:} Can we develop a high-dimensional online learning framework that operates with non-divergent batch sizes and maintains non-divergent error bounds?
\end{mdframed}

\paragraph{Influence of signal strength}
In a high-dimensional sparse model with $n$ observations, the signal strength of non-zero components plays a pivotal role in both signal estimation and inference. 
From an information-theoretic perspective, the necessity of signal strength in a variable selection task is well studied \citep{WJM07rec, butucea18AOS, castillo24sharp}. 
Furthermore, by utilizing the interplay between estimation and support recovery, \citet{Ndaoud2019interpaly} systematically quantified the influence of signal strength via a minimax phase-transition lower bound: when $\min_{i: \beta^*_i \ne 0} |\beta_i^*| \gtrsim \sqrt{{\log(ep/s)}/n}$, the $\ell_2$ estimation error $\| \hat \beta - \beta^*\|_2$ improves from the standard minimax rate $\sqrt{{s \log (ep/s)}/{n}}$ to the oracle $\ell_2$ rate $\sqrt{{s}/{n}}$ (as if the support were known). 
This phenomenon motivates the development of signal-adaptive procedures \citep{Ndaoud2020, Zhou22quantile} that preserve minimax optimality when signals are weak, and adaptively attain the oracle rate when signals are relatively strong, which is a property that standard Lasso-type estimators do not generally provide \citep{lounici11lower, bellec2018noise}.

In the batch-wise online setting, the minimum signal condition (required for oracle efficiency) relaxes as the cumulative sample size $N_b$ grows.
Consequently, for any fixed $\beta^*$, this condition is inevitably satisfied once the batch index $b$ is sufficiently large.
This suggests that online learning may benefit from two mechanisms: (i) reducing estimation error by increasing sample size, and (ii) obtaining further gains 
once the minimum signal condition is met, after which the $\ell_2$ error may improve to the oracle rate $\sqrt{s/N_b}$.
Existing online learning literature rarely treats these two phenomena jointly, and thus, we pose the second question: 

\begin{mdframed}[
  linewidth=0.5pt,
  innerleftmargin=8pt,
  innerrightmargin=8pt
]
\itshape
\textbf{Question 2:} Can we develop a high-dimensional online learning framework that benefits from both ``increasing sample size'' and ``evolving minimum signal condition''?
\end{mdframed}

\paragraph{Optimization error and statistical error} 
To implement high-dimensional renewable estimation, one typically constructs lasso-based M-estimators sequentially on each incoming batch \citep{luo2023GLM, Huang2024SIM, Kong2025SQR}.
Although the $\ell_1$-regularized optimization is a convex program, optimization errors are still unavoidable in practice due to numerical precision and algorithmic tolerances \citep{Bottou2007tradeoff, Agarwal2012global}.
Within the renewable framework, these optimization errors may accumulate over batches, potentially degrading the estimation accuracy of subsequent batches.
However, the impact of batch-wise optimization errors remains unexplored.
This observation motivates the following question:

\begin{mdframed}[
  linewidth=0.5pt,
  innerleftmargin=8pt,
  innerrightmargin=8pt
]
\itshape
\textbf{Question 3:} Can we simultaneously and sequentially control both the optimization error and the statistical accuracy in every batch?
\end{mdframed}

\paragraph{Main contributions}
We provide affirmative answers to the three aforementioned questions:
In Section \ref{sec: method}, we introduce an asynchronous decomposition framework for high-dimensional online learning with streaming data, and implement it with a dynamic iterative hard thresholding algorithm (AD-IHT).
This approach enables both computational and memory efficiency for per-batch updates. 
In Section \ref{sec: theory}, we provide general theoretical foundations showing that our estimation error remains non-divergent uniformly, and the required per-batch sample size does not grow exponentially with the number of batches (\textbf{Question 1}).
From the perspective of signal strength, we further show that our estimator adaptively sharpens as batches accumulate, ultimately achieving the oracle rate as if one had the entire dataset and knew the true support (\textbf{Question 2}).
These properties are then illustrated through a GLM example, with the key results visualized in Figure \ref{fig:rate-descend}.
Moreover, because our analysis tracks the actual iterative procedure, it simultaneously accounts for per-batch computational error and statistical accuracy, thereby enhancing the practical relevance of our theoretical results (\textbf{Question 3}).
 
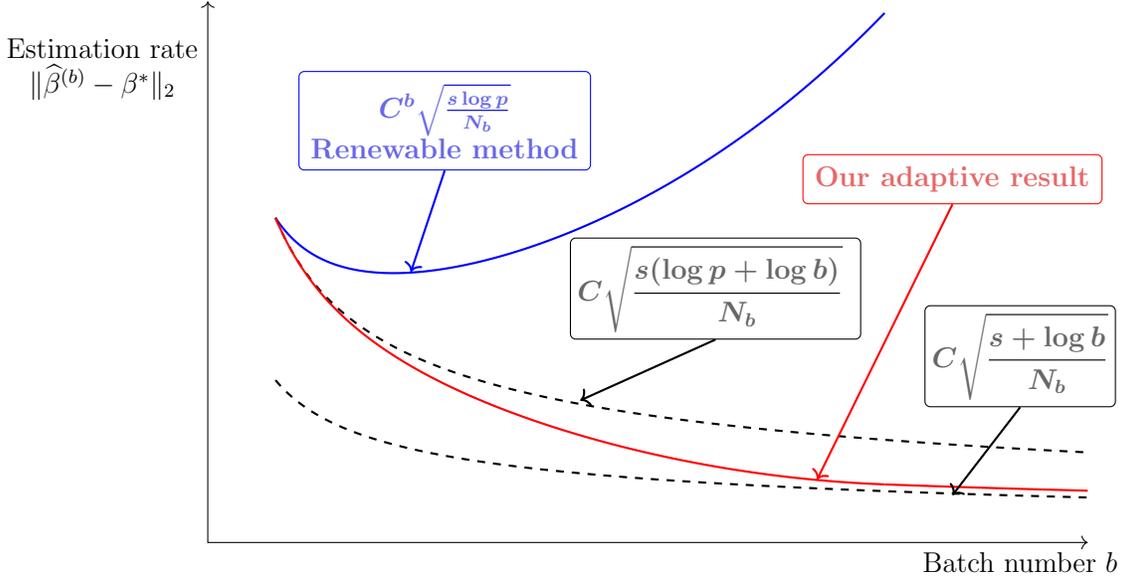
\begin{figure*}[ht]
  \centering
  \begin{tikzpicture}[global scale=0.9]
  
    \draw[<->](0,8)--(0,0)--(13,0);
    \node[left] at (0,7) {\shortstack{Estimation rate\\
    $\| \hat \beta^{(b)} - \beta^*\|_2$} };
    \node[below] at (12,0) { \shortstack{Batch number $b$} };

    \draw[blue, thick, domain=1:10, samples=200, smooth]
      plot(\x, {1.2^\x * 4/sqrt(\x)});

    \draw[black, thick, dashed, domain=1:13, samples=300, smooth]
      plot(\x, {(4.8 /sqrt(\x) });

\draw[black, thick, dashed, domain=1:13, samples=300, smooth]
      plot(\x, {2.4 /sqrt(\x) });
      
\draw[red, thick, domain=1:10, samples=300, smooth]
plot(\x, {2.4 /sqrt(\x)+0.1+ ( 1-3*( (\x-1)/9 )^2 + 2*( (\x-1)/9 )^3 ) * (2.4 /sqrt(\x)- 0.1)   } ) ;

\draw[red, thick, domain=10:13, samples=300, smooth]
  plot(\x, {2.4 /sqrt(\x)+0.1} ) ;

    \draw[->, blue, thick] (3.5, 5.5) -- (3,4);
    \node[above, blue, draw=blue, fill=white, fill opacity=0.6, align=center,  rounded corners=2pt, inner sep=5pt, text=blue!85!black,font=\bfseries\boldmath] at (3.5,5.5)
    {  $C^b \sqrt{\frac{s \log p}{N_b}}$ \\ 
    Renewable method
}; 
 
    \draw[->, black, thick] (7.5,3) -- (5.5,2.1);
    \node[above, black, draw= black, fill=white, fill opacity=0.6, rounded corners=2pt, inner sep=3pt, align=center,text=black,font=\bfseries\boldmath] at (7.5,3)
      { $\displaystyle C \sqrt{\frac{s( \log p+ \log b)}{N_b}}$ };
    
    \draw[->, black, thick] (12,2) -- (11,0.7);
     \node[above, black, draw=black, fill= white, fill opacity=0.6, rounded corners=2pt, inner sep=3pt,text=black,font=\bfseries\boldmath] at (12,2)
          {$\displaystyle C \sqrt{\frac{s + \log b}{N_b}}$};

     \draw[->,red, thick] (11,5) -- (9,0.93 );
     \node[above, red, draw=red, fill= white, fill opacity=0.6, rounded corners=2pt, inner sep=5pt,text=red!85!black ,font=\bfseries\boldmath] at (11,5)
          {Our adaptive result};

  \end{tikzpicture}
  \caption{\label{fig:rate-descend}
  Relationship between the $\ell_2$ error and the batch number $b$ in online learning of high-dimensional GLMs, where for ease of display we assume each batch contains $n$ samples, so that $N_b = \sum_{j=1}^b n_j = nb$.
  The $\log b$ term is introduced to ensure the error bounds hold uniformly for all batches $b \ge 1$.}
\end{figure*}

\subsection{Notation} 
For sequences $a_n$ and $b_n$, we write $a_n = O(b_n)$ (or $a_n \lesssim b_n$) if $a_n \le Cb_n$ for all large $n$ with some universal constant $C>0$, and $a_n \prec b_n$ if $a_n/b_n \to 0$ as $n \to \infty$.
We write that $a_n \asymp b_n$ if $a_n = O(b_n)$ and $b_n  = O(a_n)$.
Let $[m] = \{1,2,\dots,m\}$, and $\mathbf{1} (\cdot)$ be the indicator function. 
Let $\mathbf{0}_d$ denote the $d$-dimensional zero vector.
Define $x \vee y = \max\{x,y\}$.  
For sets $A$ and $B$ with sizes $|A|$ and $|B|$, let $\beta_A = (\beta_j)_{j \in A} \in \mathbb{R}^{|A|}$, and $X_{A,B} \in \mathbb{R}^{|A| \times |B|}$ be the submatrix of $X \in \mathbb R^{ n \times p}$ with rows and columns in $A$ and $B$.  
For a vector $\beta$, denote $\|\beta\|_2$ as its Euclidean norm, $\|\beta\|_0$ as the number of its nonzero entries, and $\operatorname{supp}(\beta)$ as its support set. 
For a matrix $X$, let $\| X \|_2$ denote its spectral norm. For a square matrix $X$, $\Lambda_{\max}(X)$ and $\Lambda_{\min}(X)$ denote its maximum and minimum eigenvalues , respectively.
For any $\mu \in \mathbb{R}^p$ and $\gamma > 0$, let $\mathbb{B}_q^p(\mu, \gamma) = \{x \in \mathbb{R}^p: \|x - \mu\|_q \le \gamma\}$ denote the $\ell_q$-ball in $\mathbb{R}^p$ centered at $\mu$ with radius $\gamma$. For simplicity, we write $\mathbb{B}_q^p(\gamma)$ when $\mu = \mathbf{0}_p$. 
Let $c, C, C_1, C_e,\ldots$ denote positive and universal constants whose actual values may vary from line to line.

\section{Methodology}\label{sec: method}

\subsection{Preliminary}
Here, we give a general setup for high-dimensional online learning.
Assume data arrive sequentially in batches. Let $D_j$ denote the $j$-th data batch with sample size $n_j$, and let $N_j = \sum_{k=1}^j n_k$ be the cumulative sample size up to batch $j$. 
Each $D_j$ has an associated twice-differentiable loss $f_j(D_j;\beta)$, abbreviated $f_j(\beta)$, where $\beta\in\mathbb{R}^p$ is the high-dimensional parameter of interest and we aim to learn sparse solutions sequentially. 

For example, in the linear model \eqref{eq: example}, the data for the \(j\)-th batch are \(D_j = (X_{\mI_j}, Y_{\mI_j})\in \mathbb R^{n_j \times p} \times \mathbb R^{n_j}\), where $\mI_j = \{N_{j-1}+1, \cdots,  N_j \}$ is the observation index for this batch. The ordinary least squares loss for the $j$-th batch is \(f_j(\beta) = \| Y_{\mI_j} - X_{\mI_j} \beta\|_2^2\).

\subsection{Revisit high-dimensional renewable learning}
This subsection introduces the well-known renewable method \citep{Song2020renewable} from a score function perspective and discusses the main issues that arise in the high-dimensional setting.

\paragraph{Renewable learning}
Suppose that an initial estimator $\hat{\beta}^{(1)}$ has already been obtained from the first batch data $D_1$. When the second batch \(D_2\) arrives, \citet{Song2020renewable} approximate the cumulative score $\nabla_\beta f_2(\beta) + \nabla_\beta f_1(\beta)$ via a Taylor expansion around $\hat{\beta}^{(1)}$:
\begin{equation}\label{eq: renew ini score decomposition}
\begin{aligned}
\nabla_\beta f_2( \beta ) + \nabla_\beta f_1( \beta)  
= &  \nabla_\beta f_2( \beta ) +  \nabla_\beta f_1(\hat \beta^{(1)})  + \nabla_\beta^2 f_1( \hat \beta^{(1)} )  (\beta  - \hat \beta^{(1)} ) + N_1\mathcal O_p( \| \hat\beta^{(1)} - \beta \|_2^2 )\\ 
\approx  & \nabla_\beta f_2(  \beta ) + \nabla_\beta^2 f_1( \hat \beta^{(1)} )  (\beta  - \hat \beta^{(1)} ) ,
\end{aligned}
\end{equation}
where the last line ignores higher-order error and approximates \(\nabla_\beta f_1(\hat \beta^{(1)}) \approx 0_p\). 
Consequently, only the summary Hessian $\nabla_\beta^2 f_1(\hat \beta^{(1)})$ needs to be retained from $D_1$ to construct the approximate score function.
We denote the resulting estimator based on this surrogate score as $\hat{\beta}^{(2)}$. 
Similarly, to get the approximate cumulative score for any batch $b \ge 2$, this approach generalizes by expanding the historical score $\sum_{j=1}^{b-1} \nabla_\beta f_j(\beta)$ around the previous estimate $\hat{\beta}^{(b-1)}$:
\begin{equation}\label{eq: renewable batch b}
\begin{aligned}
& \nabla_\beta f_b(  \beta ) + \sum_{j=1}^{b-1} \nabla_\beta f_j( \beta ) \\
= &  \nabla_\beta f_b( \beta ) + \sum_{j=1}^{b-1} \nabla_\beta f_j( \hat \beta^{(b-1)} ) + \sum_{j=1}^{b-1} \nabla_\beta^2 f_j( \hat \beta^{(b-1)} )~ (\beta  - \hat \beta^{(b-1)}) + N_{b-1} \mathcal O_p( \| \beta  - \hat \beta^{(b-1)}\|_2^2) \\ 
\approx  & \nabla_\beta f_b( \beta ) + \sum_{j=1}^{b-1} \nabla_\beta^2 f_j( \hat \beta^{(j)} ) ~(\beta  - \hat \beta^{(b-1)} ) ,
\end{aligned}
\end{equation}
where the last step (i) approximates \( \sum_{j=1}^{b-1} \nabla_\beta f_j( \hat \beta^{(b-1)} ) \approx 0_p\), and (ii) replaces the Hessian matrix \( \sum_{j=1}^{b-1} \nabla_\beta^2 f_j( \hat \beta^{(b-1)} )\) with \( \sum_{j=1}^{b-1} \nabla_\beta^2 f_j( \hat \beta^{(j)} )\).
For high-dimensional sparse settings, \citet{luo2023GLM} further propose to get the regularized estimator via optimizing an $\ell_1$-penalized surrogate loss:
\begin{equation}\label{eq: renewable}
\hat \beta^{(b)} \in \arg\min_{\beta \in \mathbb R^p} \left\{ \frac{1}{N_b} \left( f_b(  \beta )  +  \frac12 (\beta  - \hat \beta^{(b-1)} )^\top ~ \sum_{j=1}^{b-1} \nabla_\beta^2 f_j( \hat \beta^{(j)} )~ (\beta  - \hat \beta^{(b-1)} ) \right) + \lambda_b \| \beta\|_1 \right\}.
\end{equation}
Therefore, during the online learning process, only the cumulative Hessian \( \sum_{j=1}^{b-1} \nabla_\beta^2 f_j( \hat \beta^{(j)} )\) needs to be updated, requiring \(O(p^2)\) storage. 
In essence, the (penalized) renewable method constructs a surrogate score via first-order expansion and Hessian approximation. This enables effective utilization of historical information without re-accessing raw data.

\paragraph{Limitation}
The renewable method works well in low dimensions \citep{Luo2023inference, Ding2024renewable, Hu2025scollaborative}, however, it faces some challenges in high-dimensional settings. 
For instance, in model \eqref{eq: example}, under a restricted isometry property of $X_{\mI_1}$ and a $\sigma$-subGaussian property on $\xi_{\mI_1}$, on the support $\mathcal S^* = \text{supp}(\beta^*)$, a minimax (near) optimal estimator $\hat\beta^{(1)}$ with high probability satisfies \citep{Zhang2018HTP, Ndaoud2020}:
\begin{align*}
 \left\| \left( \frac{\nabla_\beta f_1(  \hat \beta^{(1)})}{N_1} \right)_{\mathcal S^*}    \right\|_2 
 \lesssim \left\|  \left(\frac{\xj{1}^\top \xj{1}}{N_1}  (\beta^*  - \hat \beta^{(1)} ) \right)_{\mathcal S^*}    \right\|_2 + \left\|  \left(\frac{\xj{1}^\top \xi_{\mathcal I_1}}{N_1}  \right)_{\mathcal S^*}   \right\|_2
\lesssim \| \hat\beta^{(1)} - \beta^* \|_2 + \sigma \sqrt{ \frac{s\log p}{N_1} }.
\end{align*}
Therefore, on set $\mathcal S^*$, the Taylor expansion in \eqref{eq: renew ini score decomposition} becomes: 
\begin{align*}
& \left( \nabla_\beta f_2( \beta ) + \nabla_\beta f_1( \beta) \right)_{\mathcal S^*} \\
= &  \left( \nabla_\beta f_2( \beta ) +  \nabla_\beta f_1(\hat \beta^{(1)})  + \nabla_\beta^2 f_1( \hat \beta^{(1)} )  (\beta  - \hat \beta^{(1)} )\right)_{\mathcal S^*} + N_1\mathcal O_p( \| \hat\beta^{(1)} - \beta \|_2^2 )\\ 
= & \left( \nabla_\beta f_2(  \beta ) + \nabla_\beta^2 f_1( \hat \beta^{(1)} )  (\beta  - \hat \beta^{(1)} )\right)_{\mathcal S^*} + N_1 \mathcal O_p\left(\| \hat\beta^{(1)} - \beta^* \|_2 +\| \hat\beta^{(1)} - \beta \|_2^2 +\sigma \sqrt{ \frac{s\log p}{N_1} } \right).
\end{align*}
Similarly, for the general $b$-th batch:
\begin{equation}\label{eq: renewable limit}
\begin{aligned}
& \left(\nabla_\beta f_b(  \beta ) + \sum_{j=1}^{b-1} \nabla_\beta f_j( \beta ) \right)_{\mathcal S^*}\\
=&  \left(\nabla_\beta f_b( \beta ) + \sum_{j=1}^{b-1} \nabla_\beta^2 f_j( \hat \beta^{(j)} ) ~(\beta  - \hat \beta^{(b-1)} )\right)_{\mathcal S^*}
+ N_{b-1} \mathcal O_p\left(\| \hat\beta^{(b-1)} - \beta^* \|_2 +\| \hat\beta^{(b-1)} - \beta \|_2^2 + \sigma \sqrt{ \frac{s\log p}{N_{b-1}} } \right) .
\end{aligned}
\end{equation}
This result indicates that the approximation error of the surrogate score is (at least) of the first order $\mathcal O_p( \| \hat\beta^{(b-1)} - \beta^* \|_2)$, instead of consisting only of the desired second-order term as in \eqref{eq: renewable batch b}.
These first-order errors accumulate across batches and potentially cause theoretical error bounds to grow with $b$. For instance, one may obtain $\|\hat\beta^{(b)}-\beta^*\|_2^2\le C^b (s\log p)/N_b$ for some constant $C>1$ \citep{luo2023GLM, Huang2024SIM, Kong2025SQR}, which limits non-divergent guarantees to relatively few batches, i.e., $b = o(\log N_b)$.

\subsection{Asynchronous decomposition framework}
To address the limitations mentioned above, we propose an asynchronous decomposition framework: 
In the $b$-th batch learning, instead of uniformly expanding every past score around the single estimate $\hat \beta^{(b-1)}$ as in \eqref{eq: renewable batch b}, we expand each past $\nabla_\beta f_j$ at its own estimate $\hat \beta^{(j)}$ respectively:
\begin{equation}\label{eq: asynchronous}
\begin{aligned}
& \nabla_\beta f_b(\beta) + \sum_{j=1}^{b-1}\nabla_\beta f_j(\beta) \\ 
=& \underbrace{ \nabla_\beta f_b(\beta) }_{\text{current batch}} +
\underbrace{ \sum_{j=1}^{b-1} \left\{ \nabla_\beta f_j(\hat \beta^{(j)})  + \nabla_\beta^2 f_j( \hat \beta^{(j)}) ~(\beta - \hat \beta^{(j)}) + \mathcal O_p( n_j 
 \| \beta - \hat \beta^{(j)} \|_2^2)\right\} }_{\text{Asynchronous decomposition}}\\
\approx& \nabla_\beta f_b(\beta) + \sum_{j=1}^{b-1} \nabla_\beta f_j(\hat \beta^{(j)}) + \left\{\sum_{j=1}^{b-1} \nabla_\beta^2 f_j( \hat \beta^{(j)})\right\} ~\beta - \sum_{j=1}^{b-1}\left\{ \nabla_\beta^2 f_j( \hat \beta^{(j)})~\hat \beta^{(j)} \right\}.
\end{aligned}
\end{equation}
This asynchronous decomposition offers two concrete improvements: (i) it retains the historical gradient $\sum_{j=1}^{b-1}\nabla_\beta f_j(\hat\beta^{(j)})$ instead of approximating it by zero, thereby eliminating the first-order approximation error presented in \eqref{eq: renewable limit}; and (ii) it avoids the additional Hessian approximation used in \eqref{eq: renewable batch b}.
Consequently, the approximation error in \eqref{eq: asynchronous} is reduced to a (weighted) second-order term $N_{b-1} \mathcal{O}_p(\sum_{j=1}^{b-1} n_j \|\hat\beta^{(j)} - \beta^*\|_2^2 / N_{b-1})$, which is smaller than the renewable expansion in \eqref{eq: renewable limit}. 
These differences underlie the enhanced stability of our framework. 
Furthermore, this decomposition leads to the following surrogate loss function for batch $b$:  
\begin{equation}\label{eq: surrogate loss}
\tilde L_b(\beta) := f_b(\beta) 
    + \sum_{j=1}^{b-1} 
    \beta^\top \left\{ \nabla_\beta f_j(\hat \beta^{(j)}) - \nabla_\beta^2 f_j( \hat \beta^{(j)})~\hat \beta^{(j)} \right\} 
    + \frac12 \sum_{j=1}^{b-1} \beta^\top \nabla_\beta^2 f_j(\hat \beta^{(j)}) ~\beta.
\end{equation}

\paragraph{Storage} 
Based on the decomposition in \eqref{eq: asynchronous} and \eqref{eq: surrogate loss}, it suffices to maintain and update the following two summary statistics: the cumulative Hessian \(\sum_{j=1}^{b-1} \nabla_\beta^2 f_j( \hat \beta^{(j)})\in \mathbb R^{p \times p}\), and the cumulative vector \(\sum_{j=1}^{b-1}\left\{\nabla_\beta f_j(\hat \beta^{(j)})- \nabla_\beta^2 f_j( \hat \beta^{(j)})~\hat \beta^{(j)} \right\} \in \mathbb R^p\). The total storage cost is \(O(p^2)\), maintaining the same order of efficiency as the renewable method. 

\begin{remark}
To establish asymptotic properties, several studies \citep{luo2023GLM, Kong2025SQR} have proposed using terms similar to those in \eqref{eq: asynchronous} to debias the renewable estimator \eqref{eq: renewable}. 
However, because the error bound of the initial renewable estimator diverges with $b$ (as we mentioned earlier), such debiasing procedures are theoretically guaranteed in the case $b=o(\log N_b)$.
In contrast, in Section \ref{sec: theory} we show that the asynchronous decomposition yields non-divergent error bounds for every $b\ge1$, which is the main advantage of the proposed framework. 
\end{remark}

\subsection{Algorithm implementation}
Building on the surrogate gradient \eqref{eq: asynchronous} and loss function \eqref{eq: surrogate loss}, we present an Iterative Hard Thresholding (IHT, \citet{BLUMENSATH2009265})-based procedure to obtain sparse online estimates.
Rather than employing a conventional M-estimator based on empirical risk minimization, we adopt this algorithm-based regularization approach primarily because it allows for a unified analysis of both statistical accuracy and computational (optimization) error incurred by iterative updates.  
This prevents computational error from accumulating uncontrollably across batches.

We first define the entrywise hard thresholding operator $\mathcal{T}^p_{\lambda}:\mathbb R^p\to\mathbb R^p$: 
\begin{equation*} 
\begin{aligned}
    \Big(\mathcal{T}^p_{\lambda}(z)\Big)_j &:= z_j \times \mathbf{1}\left( |z_j|\geq \lambda \right), \quad\text{for every } z\in\mathbb R^p \text{ and } j \in [p].
\end{aligned}
\end{equation*}
Let \(\hat \beta^{(b,t)}\) denote the parameter after the \(t\)-th iteration within the \(b\)-th batch. For each batch \(b \ge 1\), we initialize with a cold start, i.e., \(\hat \beta^{(b,0)} = \mathbf{0}_p\). The iterative procedure for batch \(b\) proceeds as follows:

\noindent\textbf{\mbox{Step 1:}} For each $t\ge0$, perform a gradient descent step based on \eqref{eq: asynchronous} to obtain an intermediate state $H^{(b,t+1)}$. 
Decay the threshold $\lambda_\beta^{(b,t+1)}$ from the previous iteration. 
The specific learning rate $\eta_b>0$ and the decay rate $\kappa\in(0,1)$ will be detailed in Section \ref{sec: theory}.

\noindent\textbf{\mbox{Step 2:}} Apply the entrywise hard thresholding operator $\mathcal{T}^p_{\lambda_\beta^{(b,t+1)}}$ to $H^{(b,t+1)}$ and obtain the updated sparse estimate $\hat{\beta}^{(b,t+1)}$. 

\noindent\textbf{\mbox{Step 3:}} Iterate Steps 1-2 until the threshold decays to a pre-specified limit $\lambda_\beta^{(b,\infty)}$. Then run additional $C_1 \log N_b$ refinement iterations with this fixed threshold. 
Output $\hat{\beta}^{(b)}=\hat\beta^{(b,t)}$.

\noindent\textbf{\mbox{Step 4:}} Update the cumulative summary statistics using \(\hat \beta^{(b)}\) and the data $D_b$. Then release data $D_b$ from the memory.

The procedure is detailed in Algorithm \ref{alg: asyn}, which uses the hard thresholding operator to enforce sparsity in the high-dimensional parameters. 
Inspired by \citet{Fan2018ILAMM, Ndaoud2020, she2023tit}, we employ a sequence of nonincreasing thresholds to control the number of selected variables at each iteration, thereby achieving an explicit trade-off between computational efficiency and statistical accuracy. 
Once the threshold decays to the prespecified floor $\lambda_\beta^{(b, \infty)}$, it remains fixed, and the algorithm runs additional $C_1 \log N$ iterations to fully refine the estimate. 
Notably, the initialization for each batch can also be set to the final output of the preceding batch, yielding a warm start $\hat{\beta}^{(b,0)} = \hat{\beta}^{(b-1)}$.
Both warm and cold starts share the same theoretical guarantees, so either initialization is feasible in practice.
Furthermore, as hard thresholding does not impose additional shrinkage on signal magnitudes \citep{john17gauss}, our procedure enables sharper error characterizations for (relatively) strong signals (see Theorems \ref{th: sharper batch} and \ref{th: glm sharper}).

\begin{algorithm}[!htbp]
	\caption{\textsc{Asynchronous Decomposition via IHT (AD-IHT)}}\label{alg: asyn}
	\begin{algorithmic}[1]
		\setstretch{1.5}
		\Require $\kappa$ 
		\State Set $N=0,~~\hat \beta^{(0)}= \mathbf 0_p$,~~$\text{Inter}  = \mathbf 0_p$,~~
		$\text{Hess}  = \mathbf 0_{p\times p}$ 
		\For{$b = 1,2,\ldots $,}
		\item[] \hspace{0.5em} \textbf{Input:} $D_b,~~ \eta_b,~ ~\lambda_\beta^{(b,0)},~  \lambda_\beta^{(b,\infty)} $
		\State $t \Leftarrow 0$,~~ $\hat \beta^{(b,0)} \Leftarrow \mathbf{0}_p$
		\State $N \Leftarrow N + \text{sample size of } D_b$ 
		\While{$t <  \log_\kappa (\lambda_\beta^{(b,\infty)}/\lambda_\beta^{(b,0)} )+ C_1 \log N$,}
		\State $\begin{aligned}[t] 
			H^{(b,t+1)} \Leftarrow \hat\beta^{(b,t)} - \eta_b \left\{ \nabla_\beta f_b(\hat\beta^{(b,t)}) + \text{Inter}  + \text{Hess}~ \hat\beta^{(b,t)}  \right\}
		\end{aligned}$
		\State $\lambda_\beta^{(b,t+1)} \Leftarrow \max\left(\kappa \lambda_\beta^{(b,t)},~ \lambda_\beta^{(b,\infty)} \right)$
		\State $\hat{\beta}^{(b,t+1)} \Leftarrow \mathcal{T}^p_{\lambda_\beta^{(b,t+1)}}\left( H^{(b,t+1)}\right)$
		\State $t \Leftarrow t + 1$
		\EndWhile
		\State $\hat{\beta}^{(b)} \Leftarrow  \hat\beta^{(b,t+1)}$
		\item[] \hspace{0.5em} \textbf{Output:} $\hat{\beta}^{(b)}$
		\State $\text{Inter} \Leftarrow \text{Inter} + \nabla_\beta f_b(\hat{\beta}^{(b)}) - \nabla_\beta^2 f_b(\hat{\beta}^{(b)})~ \hat{\beta}^{(b)}$, ~~
		$\text{Hess} \Leftarrow \text{Hess} + \nabla_\beta^2 f_b(\hat{\beta}^{(b)})$ 
		\State Release data $D_b$ from the memory
		\EndFor
	\end{algorithmic}
\end{algorithm}

\section{Non-asymptotic theory}\label{sec: theory}
This section presents theoretical guarantees for the asynchronous decomposition framework as implemented in Algorithm \ref{alg: asyn}.
In Section \ref{subsec: general} we state the underlying assumptions and establish general results. 
In Section \ref{subsec: glm}, we illustrate the theoretical advantages through a concrete example of generalized linear models (GLMs, \citet{mccullagh1989GLM}).

\subsection{General results}\label{subsec: general}

This subsection analyzes the estimation error of Algorithm \ref{alg: asyn} relative to a target sparse parameter $\bar{\beta} \in \mathbb{R}^p$ with sparsity $s := \|\bar{\beta}\|_0$. The vector $\bar{\beta}$ may represent the ground truth parameter $\beta^*$ or its best $s$-sparse approximation.
For notational simplicity, we omit the subscript $\beta$ in the gradient and Hessian operators.
We impose the following regularity conditions on the loss functions $f_j$ for each batch $j \ge 1$: 

\begin{assumption}[Restricted Strong convexity and Smoothness]\label{assump: rip}
Each $f_j$ is twice-differentiable, and for any index set $\mathcal S \subset [p]$ satisfying $| \mathcal S| \le Cs$, the eigenvalues of the restricted Hessian are bounded such that: 
\begin{equation}\label{eq: rip}
m n_j \le \Lambda_{\min} \left\{ \left(\nabla^2 f_j(\beta) \right)_{\mathcal S, \mathcal S} \right\}\le  \Lambda_{\max}\left\{ \left(\nabla^2 f_j(\beta) \right)_{\mathcal S, \mathcal S} \right\} \le M n_j,\quad
 \text{for each } j\ge 1, ~\beta \in \mathbb B_0^p(Cs),
\end{equation}
where $m$ and $M$ are universal positive constants.
We call $f_j$ satisfies RSS$(m,M,C s)$ iff \eqref{eq: rip} holds.
\end{assumption}

\begin{assumption}[Restricted second-order smoothness]\label{assump: lipschitz}
Each gradient $\nabla f_j(\beta)$ exhibits local second-order smoothness near $\bar{\beta}$. Specifically, there exist parameters $L_j, \delta_j > 0$ such that:
\begin{equation}\label{eq: gradsmooth}
\sup_{\mathcal S \subset [p]: ~|\mathcal S| \le (C+1)s} \left\| \left\{ \nabla f_j (\bar\beta) - \nabla f_j (\gamma) -  \nabla^2 f_j (\gamma)  (\bar \beta - \gamma) \right\}_{\mathcal S} \right\|_2 \le L_j \| \bar \beta - \gamma \|_2^2,
\end{equation}
for all $\gamma \in \mathbb{B}_0^p(Cs) \bigcap \mathbb B_2^p(\bar\beta, \delta_j)$. We say that $f_j$ satisfies RGS$(L_j, \delta_j, Cs)$ iff \eqref{eq: gradsmooth} holds.
\end{assumption}

\begin{assumption}[Progressive error control]\label{assump: N1}
 There exists a sequence $\{\alpha_j\}_{j \ge 1}$ satisfying $\alpha_j \ge \| \sum_{k=1}^j \nabla f_k(\bar{\beta}) \|_{\infty}$ for all $j \ge 1$. Furthermore, for every $b \ge 1$, the following condition holds:
\begin{equation}\label{eq: N1}
C_p \sum_{j=1}^b \frac{s L_j}{N_j^2} \cdot \alpha_j^2 \le \sqrt{s} \alpha_{b+1},
\end{equation}
where $C_p > 0$ is a sufficiently large universal constant.
\end{assumption}

Among these, Assumption \ref{assump: rip} ensures the contraction mapping behavior of the iterative algorithm, guaranteeing that the estimation error decays as the number of iterations increases. Such a condition is a standard requirement in the literature on iterative optimization \citep{jain2014iterative, Zhang2018HTP, Fan2018ILAMM}.
Assumption~\ref{assump: lipschitz} controls the restricted first-order approximation error of the gradient (score function) and thus characterizes the smoothness of the Hessian. This condition is commonly employed in the streaming data literature \citep{Song2020renewable}.
Assumption \ref{assump: N1} quantifies how the cumulative errors affect the next-batch estimate. It is the key assumption that controls the error propagation across batches. 
In Section \ref{subsec: glm} (GLM with sub-Gaussian random design), we show that under mild sample-size conditions, these three assumptions simultaneously hold in every batch with a high probability, thereby justifying their validity.

We first present the theoretical results for the first two data batches to establish the inductive basis. 
\begin{proposition}[Burn-in]\label{prop: 12}  
Suppose $f_b$ satisfies RSS$(m, M, (2C_s + 1)s )$ for $b=1,2$, and $f_1$ satisfies RGS$\left( L_1, C_e \sqrt s \lambda_\beta^{(1,\infty)}, (C_s +1)s \right)$.  
Assume Assumption \ref{assump: N1} holds with $C_p = \frac{4C_e^2 C_\beta}{m+M}$.
For $b = 1, 2$, let the learning rate $\eta_b \in \left[ \frac1{(m+M)N_b}, \frac2{(m+M)N_b} \right]$, the decay rate $\kappa \in ( \frac{M}{m+M},1) $.
If the regularization parameters satisfy:
$$
\begin{aligned}
    \lambda_\beta^{(b,0)} \ge  \frac{1}{2\sqrt s}\|  \bar \beta \|_2 ,
    \quad \lambda_\beta^{(b,\infty)} = C_\beta \cdot \frac{\alpha_b}{N_b} , 
\end{aligned}
$$  
then the following $\ell_2$ and $\ell_0$ bounds hold:
\begin{equation*} 
\begin{aligned}
\| \hat \beta^{(b,t)} - \bar \beta\|_2 \le C_e \cdot \sqrt s \lambda_\beta^{(b,t)},
    \quad   \| \hat \beta^{(b,t)}  \|_0 \le  (1+C_s) \cdot s,
    \quad\text{ for } b=1,2,~ t\ge 0,
\end{aligned}
\end{equation*}
where $C_\beta := \frac{8}{m+M} \cdot \frac{m+M+M/\kappa}{m+M - M/\kappa}$, $C_e := \frac{2(m+M) }{m+M - M/\kappa}$, and $C_s := (C_e -1)^2$ are universal constants that depend solely on $m$, $M$, and $\kappa$. 
\end{proposition}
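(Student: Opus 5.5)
Fix $b\in\{1,2\}$. The proof is an induction on the iteration index $t$, with the joint hypothesis
$$\|\hat\beta^{(b,t)}-\bar\beta\|_2\le C_e\sqrt s\,\lambda_\beta^{(b,t)},\qquad \|\hat\beta^{(b,t)}\|_0\le(1+C_s)s .$$
The base case $t=0$ is immediate. We have $\hat\beta^{(b,0)}=\mathbf 0_p$, so the sparsity bound is trivial. The $\ell_2$ bound follows from $\|\bar\beta\|_2\le 2\sqrt s\lambda_\beta^{(b,0)}$ together with $C_e\ge 2$, and $C_e\ge2$ holds because $m+M-M/\kappa\le m+M$.

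For the inductive step, the surrogate gradient at batch $b$ is first rewritten around $\bar\beta$.

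\textbf{Batch 1.} There is no history, so $H^{(1,t+1)}-\bar\beta=(\hat\beta-\bar\beta)-\eta_1\{\nabla f_1(\hat\beta)-\nabla f_1(\bar\beta)\}-\eta_1\nabla f_1(\bar\beta)$, where $\hat\beta=\hat\beta^{(1,t)}$.

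\textbf{Batch 2.} The surrogate gradient is $\nabla f_2(\beta)+\nabla f_1(\hat\beta^{(1)})+\nabla^2 f_1(\hat\beta^{(1)})(\beta-\hat\beta^{(1)})$. Adding and subtracting $\nabla f_1(\bar\beta)$ gives
$$G_2(\beta)=\nabla f_2(\beta)+\nabla^2f_1(\hat\beta^{(1)})(\beta-\bar\beta)+\nabla f_1(\bar\beta)+R_1,$$
$$R_1=\nabla f_1(\hat\beta^{(1)})+\nabla^2 f_1(\hat\beta^{(1)})(\bar\beta-\hat\beta^{(1)})-\nabla f_1(\bar\beta).$$
The batch-1 result gives $\hat\beta^{(1)}\in\mathbb B_0^p((C_s+1)s)\cap\mathbb B_2^p(\bar\beta,C_e\sqrt s\lambda_\beta^{(1,\infty)})$. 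Hence RGS applies and, on any $\mathcal S$ with $|\mathcal S|\le (C_s+2)s$,
$$\|(R_1)_{\mathcal S}\|_2\le L_1C_e^2 s(\lambda_\beta^{(1,\infty)})^2=L_1C_e^2C_\beta^2\,s\alpha_1^2/N_1^2 .$$
With the choice $C_p=4C_e^2C_\beta/(m+M)$, Assumption \ref{assump: N1} converts this into $\eta_2\|(R_1)_{\mathcal S}\|_2\lesssim \sqrt s\alpha_2/N_2$, which is a small multiple of $\sqrt s\lambda_\beta^{(2,\infty)}$.

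\textbf{Contraction.} Write the remaining difference as $\{\nabla f_2(\hat\beta)-\nabla f_2(\bar\beta)\}+\nabla^2 f_1(\hat\beta^{(1)})(\hat\beta-\bar\beta)$. The mean value theorem expresses it as $A(\hat\beta-\bar\beta)$, where $A$ is an integrated Hessian plus the batch-1 Hessian. Restricted to $\mathcal S$, $A$ has eigenvalues in $[mN_2,MN_2]$ by RSS applied to both $f_1$ and $f_2$. In both batches the noise term is $\nabla f_1(\bar\beta)+\nabla f_2(\bar\beta)$ (or $\nabla f_1(\bar\beta)$ alone), whose sup-norm is at most $\alpha_b$.

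Next take $\mathcal S=\operatorname{supp}(\bar\beta)\cup\operatorname{supp}(\hat\beta^{(b,t)})\cup\operatorname{supp}(\hat\beta^{(b,t+1)})$. Showing $|\mathcal S|\le(2C_s+3)s$ requires the $\ell_0$ step below. On $\mathcal S$ the standard estimate $\|(I-\eta_bA)_{\mathcal S\mathcal S}\|_2\le 1-\eta_b m N_b\le \frac{M}{m+M}$ holds for $\eta_b\in[\frac1{(m+M)N_b},\frac2{(m+M)N_b}]$. The precise endpoint bound needs $\max(|1-\eta m N|,|1-\eta MN|)$, and I would track it carefully, possibly with $M/(m+M)$ replaced by $\rho:=\max\{1-\frac{m}{m+M},\frac{2M}{m+M}-1\}$. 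This yields
$$\|(H^{(b,t+1)}-\bar\beta)_{\mathcal S}\|_2\le \tfrac{M}{m+M}C_e\sqrt s\lambda_\beta^{(b,t)}+\eta_b\sqrt{|\mathcal S|}\alpha_b+\text{(remainder)} .$$
The hard-thresholding error on $\mathcal S$ splits into two parts. Coordinates of $\bar\beta$ that are killed each contribute at most $\lambda+|H_i-\bar\beta_i|$. Coordinates kept outside the support contribute $|H_i|$. This gives
$$\|\hat\beta^{(b,t+1)}-\bar\beta\|_2\le \|(H-\bar\beta)_{\mathcal S}\|_2+\sqrt s\lambda_\beta^{(b,t+1)} .$$
Using $\lambda_\beta^{(b,t)}\le\lambda_\beta^{(b,t+1)}/\kappa$ and $\eta_b\alpha_b\le \frac{2}{m+M}\frac{\lambda_\beta^{(b,\infty)}}{C_\beta}$, the constants $C_e$ and $C_\beta$ are exactly the values that close the recursion to $C_e\sqrt s\lambda_\beta^{(b,t+1)}$, with the factor $m+M-M/\kappa$ in the denominators arising from $\frac{M}{\kappa(m+M)}<1$.

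\textbf{Sparsity step.} Each kept coordinate $i\notin\operatorname{supp}(\bar\beta)$ satisfies $|H_i-\bar\beta_i|\ge\lambda_\beta^{(b,t+1)}$. Let $K$ be the number of such coordinates. Summing squares gives $K(\lambda_\beta^{(b,t+1)})^2\le\|(H-\bar\beta)_{\mathcal S'}\|_2^2$, where $\mathcal S'$ consists of these coordinates together with the old support. The previous bound gives $\|(H-\bar\beta)_{\mathcal S'}\|_2\le (C_e-1)\sqrt s\lambda_\beta^{(b,t+1)}$, so $K\le (C_e-1)^2s=C_ss$ and $\|\hat\beta^{(b,t+1)}\|_0\le(1+C_s)s$.

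\textbf{Main obstacle.} The $\ell_0$ and $\ell_2$ steps are circular: the contraction bound needs $|\mathcal S|$ controlled, and the sparsity count needs the $\ell_2$ bound on $H-\bar\beta$. I would break the circularity by running the contraction argument on $\mathcal S'$, the old support together with the top $C_ss$ off-support entries of $H$, which have size $\le(2C_s+1)s$ and so stay within the RSS level. Arguing by contradiction shows that at most $C_ss$ off-support entries exceed the threshold. The same set-size accounting must keep the RGS level for $R_1$ within $(C_s+1)s$-sparse arguments. Beyond that, verifying the constants is routine.
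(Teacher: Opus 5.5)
Your plan follows the paper's proof. It uses induction on $t$ for $b=1,2$ and the same contraction/remainder/noise decomposition of $H^{(b,t+1)}-\bar\beta$, with $\eta_2\|(R_1)_{\mathcal S}\|_2\le\frac{\sqrt s}{2}\lambda_\beta^{(2,\infty)}$ obtained from RGS and Assumption~\ref{assump: N1}. Sparsity is shown by contradiction on a $C_s s$-element off-support set, the error is split as $\|(H-\bar\beta)_{\mathcal S}\|_2+\sqrt s\lambda_\beta^{(b,t+1)}$, and the identities $C_e-1=\sqrt{C_s}$ and $\frac{M/\kappa}{m+M}C_e=C_e-2$ close the recursion. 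Your $\rho$ is exactly $\frac{M}{m+M}$, since $\eta_b(m+M)N_b\le 2$ gives $\eta_bMN_b-1\le 1-\eta_bmN_b$.

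Two set-size points need tightening, and the paper handles both.

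\textbf{Inductive hypothesis.} It should be $\|\hat\beta^{(b,t)}_{\bar{\mathcal S}^c}\|_0\le C_s s$, not $\|\hat\beta^{(b,t)}\|_0\le(1+C_s)s$. With the weaker form, $\bar{\mathcal S}\cup\operatorname{supp}(\hat\beta^{(b,t)})$ plus $C_s s$ new off-support indices can reach $(2C_s+2)s$; your own count gives $(2C_s+3)s$. Either exceeds the RSS level $(2C_s+1)s$. Your sparsity step already produces the stronger off-support bound, so simply carry that as the hypothesis.

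\textbf{Where to evaluate $R_1$.} RGS controls $R_1$ only on index sets of size at most $(C_s+2)s$. So restrict $H-\bar\beta$ as follows:
\begin{itemize}
\item in the sparsity step, to the $C_s s$-element off-support test set;
\item in the error step, to $\bar{\mathcal S}\cup\operatorname{supp}(\hat\beta^{(b,t+1)})$, which is all the $\ell_2$ error involves.
\end{itemize}
Use the larger union that also contains $\operatorname{supp}(\hat\beta^{(b,t)})$ only for the restricted operator norm of $I-\eta_bA$. If you instead evaluate $R_1$ on a set of size up to $(2C_s+1)s$, you must split it. That costs a factor $\sqrt2$ on the $\frac{\sqrt s}{2}\lambda_\beta^{(2,\infty)}$ term, and with $C_p=\frac{4C_e^2C_\beta}{m+M}$ the constants no longer close.
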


Proposition \ref{prop: 12} demonstrates that the AD-IHT outputs of the first two batches yield $\ell_0$ sparse estimators, and their $\ell_2$ errors are controlled by the (cumulative) score function evaluated at $\bar \beta$. Building upon these initial results, we then generalize the estimation properties to all subsequent batches.

\begin{theorem}[General batch]\label{th: minimax batch}
For every $b \ge 1$, suppose $f_b$ satisfies RSS$(m, M, (2C_s + 1)s )$ and RGS$\left( L_b, C_e \sqrt s \lambda_\beta^{(b,\infty)}, (C_s +1)s \right)$. 
Suppose Assumption \ref{assump: N1} holds with $C_p =  \frac{4C_e^2 C_\beta}{m+M}$.
Let the learning rate $\eta_b \in \left[ \frac1{(m+M)N_b}, \frac2{(m+M)N_b} \right]$, the decay rate $ \kappa \in ( \frac{M}{m+M},1) $.
If the regularization parameters satisfy:
$$
\begin{aligned}
    \lambda_\beta^{(b,0)} \ge  \frac{1}{2\sqrt s}\|  \bar \beta \|_2 ,
    \quad \lambda_\beta^{(b,\infty)} \ge  C_\beta \cdot \frac{\alpha_b}{ N_b},
    \quad \text{ for every }b \ge 1,
\end{aligned}
$$
then the following $\ell_2$ and $\ell_0$ bounds hold: 
\begin{equation}\label{eq: general error}
\begin{aligned}
\| \hat \beta^{(b,t)} - \bar \beta\|_2 \le C_e \cdot \sqrt s \lambda_\beta^{(b,t)},
    \quad   \| \hat \beta^{(b,t)} \|_0 \le  (1+C_s)\cdot s,
\quad\text{ for every } b\ge 1,~ t\ge 0,
\end{aligned}
\end{equation}
where $C_\beta, C_e,$ and $C_s$ are the same universal constants from Proposition \ref{prop: 12}, which are determined solely on $m$, $M$, and $\kappa$.
\end{theorem}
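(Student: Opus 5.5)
We argue by strong induction on the batch index $b$. Proposition \ref{prop: 12} gives the base cases $b=1,2$. Fix $b\ge 3$ and assume \eqref{eq: general error} holds for all batches $j<b$. In particular, at termination, $\|\hat\beta^{(j)}-\bar\beta\|_2\le C_e\sqrt s\,\lambda_\beta^{(j,\infty)}$ and $\|\hat\beta^{(j)}\|_0\le (1+C_s)s$ for every $j<b$. Within batch $b$ we then run a second induction over the iteration counter $t$. The case $t=0$ is immediate: the cold start gives $\|\hat\beta^{(b,0)}-\bar\beta\|_2=\|\bar\beta\|_2\le 2\sqrt s\lambda_\beta^{(b,0)}\le C_e\sqrt s\lambda_\beta^{(b,0)}$, since $C_e\ge 2$.

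The key step is to rewrite the surrogate gradient in Algorithm \ref{alg: asyn} as the true cumulative gradient plus a remainder. Set $G_b(\beta)=\nabla f_b(\beta)+\text{Inter}+\text{Hess}\,\beta$. Adding and subtracting $\sum_{j\le b}\nabla f_j(\bar\beta)$ gives
$$G_b(\beta)=\nabla f_b(\beta)-\nabla f_b(\bar\beta)+\text{Hess}\,(\beta-\bar\beta)+\sum_{j=1}^{b}\nabla f_j(\bar\beta)-\sum_{j=1}^{b-1}R_j,$$
where $R_j:=\nabla f_j(\bar\beta)-\nabla f_j(\hat\beta^{(j)})-\nabla^2 f_j(\hat\beta^{(j)})(\bar\beta-\hat\beta^{(j)})$.

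The linear part $\beta\mapsto\nabla f_b(\beta)-\nabla f_b(\bar\beta)+\text{Hess}(\beta-\bar\beta)$ is, by the mean value theorem, multiplication by the matrix $\int_0^1\nabla^2 f_b+\sum_{j<b}\nabla^2 f_j(\hat\beta^{(j)})$. Every Hessian here is evaluated at a point of sparsity at most $(2C_s+1)s$, so RSS gives restricted eigenvalues in $[mN_b,MN_b]$. With $\eta_b\asymp 1/((m+M)N_b)$, the standard IHT contraction on the set $\mathcal S=\operatorname{supp}(\hat\beta^{(b,t)})\cup\operatorname{supp}(\hat\beta^{(b,t+1)})\cup\operatorname{supp}(\bar\beta)$ then yields a factor $M/(m+M)$.

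For the remainder we use RGS. It applies because $\hat\beta^{(j)}$ is $(1+C_s)s$-sparse and lies in $\mathbb B_2^p(\bar\beta,C_e\sqrt s\lambda_\beta^{(j,\infty)})$ by the outer hypothesis. Hence $\|(R_j)_{\mathcal S}\|_2\le L_j C_e^2 s(\lambda_\beta^{(j,\infty)})^2$. Taking $\lambda_\beta^{(j,\infty)}=C_\beta\alpha_j/N_j$ (the minimal admissible choice; larger ones need a monotonicity remark), we get
$$\eta_b\Big\|\sum_{j<b}(R_j)_{\mathcal S}\Big\|_2\le\frac{2C_e^2C_\beta^2}{(m+M)N_b}\sum_{j<b}\frac{sL_j\alpha_j^2}{N_j^2}\le\frac{C_\beta\sqrt s\,\alpha_b}{2N_b}.$$
The last inequality is Assumption \ref{assump: N1} at index $b-1$ with $C_p=4C_e^2C_\beta/(m+M)$. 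The remainder is therefore of the same order as the noise term $\eta_b\|(\sum_{j\le b}\nabla f_j(\bar\beta))_{\mathcal S}\|_2\le 2\sqrt{|\mathcal S|}\,\alpha_b/((m+M)N_b)$. Both are at most a small multiple of $\sqrt s\lambda_\beta^{(b,\infty)}\le\sqrt s\lambda_\beta^{(b,t+1)}$.

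This is the step I expect to be the main obstacle. The outer-induction error bound must feed RGS with constants tight enough that the cumulative remainder is absorbed by the threshold floor without enlarging $C_e$. Otherwise constants would compound over $b$, which is exactly the divergence we want to avoid. Assumption \ref{assump: N1} is designed for this, so the bookkeeping should close.

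Given these bounds, the inner step reproduces the argument of Proposition \ref{prop: 12} verbatim, since the per-iteration inequality has the same form as there, with the noise vector perturbed by a term of the same scale. The $\ell_0$ bound comes from the usual counting argument. Each false coordinate kept by thresholding has $|H_i^{(b,t+1)}|\ge\lambda_\beta^{(b,t+1)}$, which forces $\|(H^{(b,t+1)}-\bar\beta)_{\text{false}}\|_2\ge\sqrt{\#\text{false}}\,\lambda_\beta^{(b,t+1)}$. Bounding the left side by $(C_e-1)\sqrt s\lambda_\beta^{(b,t+1)}$ gives $\#\text{false}\le C_s s$. The $\ell_2$ bound comes from combining three pieces: the contraction $\frac{M}{m+M}C_e\sqrt s\lambda_\beta^{(b,t)}\le\frac{M}{\kappa(m+M)}C_e\sqrt s\lambda_\beta^{(b,t+1)}$, the thresholding bias of at most $\sqrt{(1+C_s)s}\,\lambda$ on kept and killed support coordinates, and the noise and remainder term. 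The definitions of $C_e$ and $C_\beta$ make the total at most $C_e\sqrt s\lambda_\beta^{(b,t+1)}$. This closes the inner induction for all $t$, hence the outer induction for batch $b$.
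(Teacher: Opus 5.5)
Your architecture is the paper's: strong induction on $b$ with an inner induction on $t$, the same split of the surrogate gradient into a contraction term, the remainders $R_j$ (the paper's $\Upsilon^{(j)}$) and the cumulative score, and the same use of RGS plus Assumption~\ref{assump: N1} at index $b-1$ to bound the remainder by $\frac{\sqrt s}{2}\lambda_\beta^{(b,\infty)}$. What fails as written is the closing $\ell_2$ bookkeeping, which is exactly where non-divergence is decided. Since $C_e(1-\frac{M/\kappa}{m+M})=2$, the contraction term $\frac{M/\kappa}{m+M}C_e\sqrt s\lambda_\beta^{(b,t+1)}$ leaves only $2\sqrt s\lambda_\beta^{(b,t+1)}$ for everything else. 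Your thresholding bias $\sqrt{(1+C_s)s}\,\lambda_\beta^{(b,t+1)}$ exceeds this on its own, because $\sqrt{C_s}=C_e-1=\frac{m+M+M/\kappa}{m+M-M/\kappa}>\frac{m+2M}{m}\ge 3$. Re-tuning constants does not help within this template. The sparsity step can only certify a $C_s$ with $\sqrt{C_s}>\frac{M/\kappa}{m+M}C_e$, and $\frac{M/\kappa}{m+M}>\frac12$, so a bias spread over all $(1+C_s)s$ coordinates can never be absorbed.

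The missing observation is that hard thresholding does not move kept coordinates. Hence $\|\hat\beta^{(b,t+1)}-\bar\beta\|_2$ is at most $\|(H^{(b,t+1)}-\bar\beta)_{\mathcal S^{(b,t+1)}\cup\bar{\mathcal S}}\|_2$ plus the norm of $H^{(b,t+1)}_i\mathbf 1\{|H^{(b,t+1)}_i|<\lambda_\beta^{(b,t+1)}\}$ over $i\in\bar{\mathcal S}$. The latter is below $\sqrt s\lambda_\beta^{(b,t+1)}$, since only true-support coordinates can be killed. Noise plus remainder must then fit into the one remaining unit, which is what $C_\beta=\frac{8}{m+M}\sqrt{C_s}$ is calibrated for: $\eta_b\sqrt{(1+C_s)s}\,\alpha_b\le 2\eta_b\sqrt{C_s s}\,\alpha_b\le\frac{\sqrt s}{2}\lambda_\beta^{(b,\infty)}$. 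The total is then $(1+(C_e-2)+1)\sqrt s\lambda_\beta^{(b,t+1)}=C_e\sqrt s\lambda_\beta^{(b,t+1)}$.

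There is also an ordering problem. You invoke RSS on $\mathcal S=\operatorname{supp}(\hat\beta^{(b,t)})\cup\operatorname{supp}(\hat\beta^{(b,t+1)})\cup\bar{\mathcal S}$ with $|\mathcal S|\le(2C_s+1)s$. But the size of $\operatorname{supp}(\hat\beta^{(b,t+1)})$ is what the sparsity step must establish. Your counting argument also bounds $\|H^{(b,t+1)}_{\text{false}}\|_2$ through a contraction over the a priori uncontrolled false set. The paper argues by contradiction instead. If at least $C_s s$ false coordinates survive, take a subset $\mathcal S'$ of exactly $C_s s$ of them. Then $\mathcal S'\cup\bar{\mathcal S}\cup\operatorname{supp}(\hat\beta^{(b,t)})$ has size at most $(2C_s+1)s$, and RSS controls the restricted operator norm. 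The vectors $R_j$ and $\sum_{k\le b}\nabla f_k(\bar\beta)$ are measured only on the row set $\mathcal S'$, and later, once sparsity is known, on $\mathcal S^{(b,t+1)}\cup\bar{\mathcal S}$, of size at most $(1+C_s)s$. These row sets lie within the range of RGS$(L_j,\cdot,(C_s+1)s)$; your triple union, of size up to $(2C_s+1)s$, does not.

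Finally, the ``monotonicity remark'' for $\lambda_\beta^{(j,\infty)}>C_\beta\alpha_j/N_j$ is not available. Writing $\lambda_\beta^{(j,\infty)}=c_jC_\beta\alpha_j/N_j$, the remainder bound requires $\max_{j<b}c_j^2\le c_b$, because Assumption~\ref{assump: N1} is quadratic in $\alpha_j$ on the left and linear on the right. The paper's proof likewise fixes $\lambda_\beta^{(b,\infty)}=C_\beta\alpha_b/N_b$, so only that case is actually covered by either argument.
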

Theorem \ref{th: minimax batch} implies that by setting the predetermined threshold $\lambda_\beta^{(b, \infty)} \asymp \frac{\alpha_b }{N_b}$, the final estimator for each batch $b$ satisfies:
\begin{equation*} 
\| \hat{\beta}^{(b)} - \bar{\beta} \|_2 \le C \sqrt{s} \cdot \frac{\alpha_b}{N_b}.
\end{equation*}
Crucially, the constant $C$ does not diverge as $b$ increases, indicating that the estimation error remains non-divergent regardless of the number of batches.
Furthermore, \eqref{eq: general error} provides a pathwise error guarantee along the algorithmic trajectory, and therefore simultaneously controls computational (optimization) error and statistical accuracy for every batch. This provides a more practically relevant guarantee compared to the canonical M-estimator theory.

For a fixed target $\bar{\beta}$, as the sample size increases, the minimum signal strength required for variable selection becomes weaker \citep{WJM07rec}, which potentially further improves estimation accuracy \citep{Fan2018ILAMM, Ndaoud2019interpaly, Zhou22quantile}.
We next characterize how the signal strength and sample accumulation benefit the estimation accuracy under a streaming data setting. Define 
$$
\begin{aligned}
b_1^*:=& \inf\left\{ b \ge 1: ~\min_{i \in \bar S} |\bar \beta_i| \ge C_2 \cdot \frac{ \alpha_b }{N_b} \right\}, 
\end{aligned}
$$
The index $b_1^*$ is the number of batches required for the signal to be "relatively strong". 
Assume there exists a sequence $\{\theta_j\}_{j \ge 1}$ that serves as upper bounds for the cumulative score function on the set $\bar{\mathcal{S}} := \text{supp}(\bar{\beta})$, such that $\theta_j \ge \left\| \sum_{k=1}^{j} \left(\nabla f_k(\bar \beta)\right)_{ \bar{\mathcal S} } \right\|_{2}$ for all $j \ge 1$. Furthermore, Define
$$
\begin{aligned}
b_2^*:=& \inf\left\{ b \ge 2: ~C_3 \sum_{j=1}^{b-1} \frac{ s L_j}{N_j^2} \cdot \alpha_j^2 \le  \theta_b \right\}.
\end{aligned}
$$
The index $b_2^*$ is the number of batches required for the accumulated historical error to be ``relatively weak'', and both $C_2, C_3$ are constants that depend solely on $m,~M$, and $\kappa$.
Let $b^* := b_1^* \vee b_2^*$.
For sufficiently large batch index $b \ge b^*$, we obtain the following sharper result.

\begin{theorem}[Sharper bound]\label{th: sharper batch}
Under all conditions of Theorem \ref{th: minimax batch}, for every $b \ge b^*$, the estimator $\hat{\beta}^{(b)}$ satisfies the sharper $\ell_2$ error bound: 
\begin{equation*} 
\| \hat \beta^{(b)} - \bar \beta\|_2 \le  C_{sharp} \cdot \frac{\theta_b}{N_b},
\end{equation*}
where $C_{sharp}>0$ is a universal constant depending solely on $m$, $M$, and $\kappa$.
\end{theorem}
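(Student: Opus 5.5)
We will combine the pathwise guarantee of Theorem \ref{th: minimax batch} with a support-recovery argument. The idea is that once signals are strong and the iterates are accurate, the thresholded iterates contain the true support. The algorithm then behaves like gradient descent restricted to a set that contains $\bar{\mathcal S}$ and is otherwise small. The noise that matters is then only the score on that set, which is bounded by $\theta_b$.

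\textbf{Step 1 (support containment).} Fix $b\ge b^*$. By Theorem \ref{th: minimax batch}, for every $t$ beyond the decay phase we have $\|\hat\beta^{(b,t)}-\bar\beta\|_2\le C_e\sqrt s\,\lambda_\beta^{(b,\infty)}\asymp \sqrt s\,\alpha_b/N_b$ and $\|\hat\beta^{(b,t)}\|_0\le(1+C_s)s$. We would bound the intermediate state $H^{(b,t+1)}-\bar\beta$ coordinatewise in $\ell_\infty$. Write
\[
H^{(b,t+1)}-\bar\beta=(I-\eta_b \mathcal H)(\hat\beta^{(b,t)}-\bar\beta)-\eta_b\Big(\sum_{k\le b}\nabla f_k(\bar\beta)+R_b\Big),
\]
where $\mathcal H$ collects the (mean-value) Hessians and $R_b=\sum_{j<b}\{\nabla f_j(\hat\beta^{(j)})+\nabla^2 f_j(\hat\beta^{(j)})(\bar\beta-\hat\beta^{(j)})-\nabla f_j(\bar\beta)\}$ is the asynchronous remainder. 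Assumption \ref{assump: lipschitz} and the earlier-batch bounds give $\|(R_b)_{\mathcal S}\|_2\le\sum_{j<b}L_j C_e^2 s\,(\lambda_\beta^{(j,\infty)})^2\lesssim\sum_{j<b} sL_j\alpha_j^2/N_j^2$. Then the choice $b\ge b_1^*$ ($\min_{i\in\bar{\mathcal S}}|\bar\beta_i|\ge C_2\alpha_b/N_b$ with $C_2$ large relative to $C_\beta$) gives that every $i\in\bar{\mathcal S}$ survives thresholding at level $\lambda_\beta^{(b,\infty)}$. Hence $\bar{\mathcal S}\subset \operatorname{supp}(\hat\beta^{(b,t)})$ for all late iterations. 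A cruder $\ell_2$ counting argument (entries surviving must exceed $\lambda$) suffices if a coordinatewise bound is unavailable.

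\textbf{Step 2 (restricted contraction with sharper noise).} On the late iterations, set $\mathcal S_t=\bar{\mathcal S}\cup\operatorname{supp}(\hat\beta^{(b,t)})\cup\operatorname{supp}(\hat\beta^{(b,t+1)})$, which has size $\le(2C_s+1)s$. Split the error into its part on $\bar{\mathcal S}$ and its part on the false positives $\mathcal S_t\setminus\bar{\mathcal S}$.
\begin{itemize}
\item \emph{On $\bar{\mathcal S}$:} hard thresholding is the identity there by Step 1, and RSS with $\eta_b\asymp 1/((m+M)N_b)$ gives contraction factor $\rho=\frac{M-m}{M+m}<1$. The noise is $\eta_b\|(\sum_{k\le b}\nabla f_k(\bar\beta))_{\bar{\mathcal S}}\|_2+\eta_b\|(R_b)_{\bar{\mathcal S}}\|_2\lesssim(\theta_b+\sum_{j<b}sL_j\alpha_j^2/N_j^2)/N_b\lesssim\theta_b/N_b$, using $b\ge b_2^*$.
\item \emph{Off $\bar{\mathcal S}$:} the false-positive coordinates are exactly where $|H_i|\ge\lambda$ while $\bar\beta_i=0$. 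Their contribution is bounded by the $\ell_2$ norm of $H-\bar\beta$ restricted to those coordinates, and contributes only the contraction term plus noise terms. The noise term must be measured on $\mathcal S_t$, not merely on $\bar{\mathcal S}$.
\end{itemize}

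\textbf{Main obstacle.} The noise $\sum_k\nabla f_k(\bar\beta)$ restricted to false-positive coordinates is only controlled by $\sqrt s\,\alpha_b$, not $\theta_b$. We would handle this by showing that false positives are eventually few or vanish. Any $i\notin\bar{\mathcal S}$ surviving must satisfy $|H_i|\ge\lambda_\beta^{(b,\infty)}=C_\beta\alpha_b/N_b$, while the noise contributes at most $\eta_b\alpha_b\le \frac{2}{m+M}\alpha_b/N_b$ per coordinate. Since $C_\beta>\frac{8}{m+M}$, such a coordinate requires a large contraction-term contribution. This yields a recursion of the form
\[
e_{t+1}\le \rho' e_t + C\theta_b/N_b,
\]
with $\rho'<1$ determined by $\kappa$, tracking $\|(\hat\beta^{(b,t)}-\bar\beta)\|_2$ including false positives through the standard IHT "false-positive mass $\le$ contraction/$(C_\beta-\text{noise})$" bound. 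Iterating over the $C_1\log N_b$ refinement steps shrinks the initial error $\sqrt s\,\alpha_b/N_b$ (polynomial in $N_b$) below $\theta_b/N_b$. This gives $\|\hat\beta^{(b)}-\bar\beta\|_2\le C_{sharp}\,\theta_b/N_b$. Making the false-positive recursion close cleanly, without reintroducing $\alpha_b$, is where I expect the real difficulty.
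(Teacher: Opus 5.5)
Your proposal has a genuine gap, located in Step~1, and Step~2's treatment of $\bar{\mathcal S}$ depends on it.

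\textbf{Why Step 1 fails.} The only controls available are $\ell_2$ bounds on sparse index sets: RSS for the Hessian blocks, and Assumption~\ref{assump: lipschitz} for $R_b$. Write
\[
v:=(I-\eta_b\mathcal H)(\hat\beta^{(b,t)}-\bar\beta)-\eta_b R_b .
\]
Nothing bounds an individual entry $|v_i|$ better than $\sup_{|\mathcal S|\lesssim s}\|v_{\mathcal S}\|_2$. On the late iterations this supremum is of order $\sqrt s\,\alpha_b/N_b$. Even at convergence it is of order $\theta_b/N_b$, which can still exceed $\alpha_b/N_b$. By contrast, $b\ge b_1^*$ only gives $\min_{i\in\bar{\mathcal S}}|\bar\beta_i|\ge C_2\alpha_b/N_b$, with no $\sqrt s$.

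So one large entry of $v$ can push a true coordinate below $\lambda:=\lambda_\beta^{(b,\infty)}$, and $\bar{\mathcal S}\subset\operatorname{supp}(\hat\beta^{(b,t)})$ cannot be derived. Your $\ell_2$ counting fallback shows only that $O(s)$ true coordinates are missed, not zero. This matches the paper, which obtains only almost-full recovery in Theorem~\ref{th: glm sharper}. Consequently ``thresholding is the identity on $\bar{\mathcal S}$'' is unjustified, and your recursion omits the bias term
\[
\sum_{i\in\bar{\mathcal S}}(H^{(b,t+1)}_i)^2\mathbf 1\{|H^{(b,t+1)}_i|<\lambda\}.
\]

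\textbf{The missing idea.} Do not try to exclude false negatives. Absorb them with the same per-coordinate ratio argument you already sketch for false positives. Write $H=\bar\beta+v-\Xi$ with $\Xi:=\eta_b\sum_{k\le b}\nabla f_k(\bar\beta)$, so that $|\Xi_i|\le\eta_b\alpha_b\le\frac{2}{(m+M)C_\beta}\lambda$. Set $c_1:=\frac{(m+M)C_\beta}{2}-1$ and take $C_2=\frac{2}{m+M}+\frac{m+M}{2}C_\beta^2$.

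\emph{False negatives.} If $i\in\bar{\mathcal S}$ is thresholded out, then $|v_i|\ge|\bar\beta_i|-|\Xi_i|-\lambda\ge c_1\lambda$. Hence $H_i^2\le\lambda^2\le v_i^2/c_1^2$.

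\emph{False positives.} If $i\notin\bar{\mathcal S}$ survives, then $|v_i|\ge\bigl(1-\frac{2}{(m+M)C_\beta}\bigr)\lambda$. Hence $\Xi_i^2\le v_i^2/c_1^2$. This is your own observation, and it closes immediately with no leftover $\alpha_b$: $\alpha_b$ enters both $\lambda$ and the per-coordinate noise, so it cancels in the ratio.

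Summing over $U:=\mathcal S^{(b,t+1)}\cup\bar{\mathcal S}$, and using RSS on $U\cup\operatorname{supp}(\hat\beta^{(b,t)})$, gives
\[
\|\hat\beta^{(b,t+1)}-\bar\beta\|_2\le(1+c_1^{-1})\|v_U\|_2+\|\Xi_{\bar{\mathcal S}}\|_2\le\frac{M}{m+M-2/C_\beta}\|\hat\beta^{(b,t)}-\bar\beta\|_2+(1+c_1^{-1})\eta_b\|(R_b)_U\|_2+\eta_b\theta_b .
\]
The contraction factor is below $1$ because $2/C_\beta<m$ by the definition of $C_\beta$. The remainder term is $\lesssim\theta_b/N_b$ by $b\ge b_2^*$. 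Iterating this recursion gives the claimed bound.

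A minor correction: over the admissible range of step sizes $\eta_b$, the RSS contraction factor is $\frac{M}{m+M}$, not $\frac{M-m}{M+m}$.
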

Compared with the bound in \eqref{eq: general error}, Theorem \ref{th: sharper batch} yields a refined rate for $b\ge b^*$ (since the upper bound of $\left\| \sum_{k=1}^{j} \left(\nabla f_k(\bar \beta)\right)_{ \bar{\mathcal S} } \right\|_{2}$ is usually sharper than that of $\sqrt s \| \sum_{k=1}^j \nabla f_k(\bar{\beta}) \|_{\infty}$).
This finding reflects the additional benefits of increasing sample size from the perspective of signal strength.
Algorithmically, this improvement stems from the fact that the hard thresholding operator imposes no shrinkage on strong signals, thereby enabling more accurate signal recovery and reducing the regularization bias typically found in the $\ell_1$-based methods \citep{lounici11lower, bellec2018noise}.

Overall, Theorems \ref{th: minimax batch} and \ref{th: sharper batch} reveal that the proposed procedure exhibits a signal-adaptive behavior: it achieves non-divergent error guarantees when the number of batches is small, and attains a sharper rate (as if the true support were known) when the number of batches is large.

\subsection{A GLM example}\label{subsec: glm}
We illustrate the theoretical advantages of our framework through the example of generalized linear models (GLMs).
For each streaming batch $b \ge 1$, let the data be $D_b = \{ (X_i, Y_i) \}_{i \in \mathcal{I}_b}$, where $\mathcal{I}_b$ denotes the index set of observations in this batch, $X_i \in \mathbb{R}^{1 \times p}$ is a row vector of covariates, and $Y_i \in \mathbb{R}$ is the response variable. 
Assume $Y_i$ follows an exponential family distribution with a natural parameter $\zeta_i$:
\begin{equation*} 
    p_{\zeta_i}(Y_i ) = \exp\left( \frac{Y_i \zeta_i - g(X_i \zeta_i)}{a} + c(Y_i,a) \right),
\end{equation*}
where $g(\cdot)$ is a twice-differentiable function and we apply the canonical link $\zeta_i = X_i \beta^*$, where $\beta^* \in \mathbb{B}_0^p(s)$ is the sparse target vector.
For each batch $b$, we define the loss function $f_b$ and its gradient via the negative log-likelihood: 
$$
f_b(\beta) = \sum_{i \in \mI_b}\Big\{g(X_i \beta) - Y_i X_i \beta \Big\},
\quad \nabla f_b(\beta) = \sum_{i \in \mI_b} X_i^\top \Big\{g'(X_i \beta) - Y_i \Big\} \in \mathbb R^{p \times 1}.
$$ 
To establish theoretical guarantees for the streaming GLM learning, we impose the following regularity assumptions.

\begin{assumption}[Design matrices]\label{assump: cov}
For each batch $j\ge 1$ and each index $i \in \mI_j$, the observation $X_i \in \mathbb R^{1\times p}$ is independent and sub-Gaussian: each $X_{i} \overset{d}{=} Z_{i} (\Sigma^{(j)})^{1/2} $, where $Z_{i } = (Z_{i1}, \cdots, Z_{ip}) \in \mathbb R^{1 \times p}$ and each $Z_{ij}$ is i.i.d. centered 1-sub-Gaussian random vectors such that $\mathbf E ( Z_{i}^\top Z_{i} ) = I_p$.
The covariance matrix $\Sigma^{(j)} \in \mathbb R^{p \times p}$ satisfies
$$
K^{-1} \le  \inf_{j \ge 1}\Lambda_{\min}(\Sigma^{(j)} ) \le \sup_{j \ge 1}\Lambda_{\max}(\Sigma^{(j)} ) \le K,
$$
where $K >1$ is a universal constant.
\end{assumption}

\begin{assumption}[Restricted strong convexity, smoothness, and local Lipschitz]\label{assump: g}
The function $g(\cdot)$ is twice-differentiable. There exists a quadruple $(U, C, C_\delta, C_g)$ such that:
\begin{enumerate}
\item For every $X_i \in \mathbb R^{1 \times p}$ and $\beta, \beta' \in \mathbb B_0^p(C s)$,
\begin{equation}\label{eq: rss rsc assump}
 \frac{U^{-1}}2 |X_i(\beta - \beta')|_2^2 \le g(X_i \beta) - g(X_i \beta') - g'(X_i \beta') X_i(\beta - \beta') \le  \frac U2 |X_i(\beta - \beta')|_2^2. 
\end{equation}

\item For every $X_i \in \mathbb R^{1 \times p}$ $X_i$ and $\beta, \beta' \in \mathbb{B}_0^p(Cs) \cap \mathbb{B}_2^p \left(\beta^*, C_\delta \sqrt{\frac{s \log p}{N_1}} \right)$,
\begin{equation}\label{eq: lip assump}
 |g''(X_i \beta) - g''(X_i \beta')|  \le C_g |X_i (\beta - \beta')|.
\end{equation}
\end{enumerate}
\end{assumption}

Assumption \ref{assump: cov} ensures the (restricted) spectral stability and isotropy of design matrices across streaming batches and, together with the restricted strong convexity and smoothness of $g$ in \eqref{eq: rss rsc assump} (a condition standard in high-dimensional GLM literature, e.g., \citet{abramovich2016model}), guarantees the contraction of the iterative error in our algorithm.
This guarantee directly corresponds to Assumption~\ref{assump: rip} in the general analysis. 
The local Lipschitz condition \eqref{eq: lip assump} characterizes the stability of the model’s curvature near the target $\beta^*$, providing the technical basis for verifying Assumption~\ref{assump: lipschitz}. A similar condition is also required in renewable methods for high-dimensional GLMs \citep{luo2023GLM}.

The following theorem establishes the estimation results for learning GLM with streaming data.
\begin{theorem}[Streaming GLM]\label{th: glm}
Suppose Assumptions \ref{assump: cov} and \ref{assump: g} hold with the quadruple $(U, ~2C_s'+1,~ 4C_e'C_\beta' \sqrt{aKU},~ C_g)$. 
Assume the batch sample sizes satisfy $\sqrt{n_1} \ge C_{n} s (\log n_1) \cdot \log (p \vee n_1)$ for initialization, and $n_b \ge C_n (s \log p + \log b)$ for all subsequent batches $b \ge 2$.
Let the regularization parameters be set as
$$
\lambda_\beta^{(b,0)} \ge  \frac{1}{2\sqrt s}\|  \beta^* \|_2 ,
\quad \lambda_\beta^{(b,\infty)}  = 4C_\beta' \sqrt{aKU } \cdot \sqrt{\frac{\log(bp)}{N_b}},
$$ 
and assume the learning rate satisfies $\eta_b \in \left[ \frac{2KU}{(1+4K^2 U^2)N_b}, \frac{4KU}{(1+4K^2 U^2)N_b} \right]$ with a fixed decay rate $\kappa \in \left( \frac{4K^2 U^2}{1+ 4K^2 U^2}, 1 \right)$.
Then, with probability at least \(1 - 11p^{-2}\), the following \(\ell_2\) and \(\ell_0\) bounds hold for every batch \(b \ge 1\):
\begin{equation}\label{eq: resglm} 
\| \hat \beta^{(b)} - \beta^*\|_2 \le 4\sqrt{aUK}C_e' C_\beta' \cdot \sqrt{\frac{s(\log p + \log b)}{N_b}} ,
\quad \| \hat \beta^{(b)} \|_0 \le  (1+C_s')\cdot s, 
\end{equation} 
where $C_n, C_\beta', C_e'$, and $C_s'$ are universal constants depending only on $K,U, C_g$, and $\kappa$, independent of the batch index $b$.
\end{theorem}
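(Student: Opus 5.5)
The plan is to derive Theorem \ref{th: glm} from Theorem \ref{th: minimax batch} with $\bar\beta = \beta^*$. This means exhibiting, on a single event of probability at least $1-11p^{-2}$ that is uniform over all $b\ge 1$, the three ingredients: RSS$(m,M,(2C_s+1)s)$ for every $f_b$, RGS$(L_b, C_e\sqrt s\lambda_\beta^{(b,\infty)}, (C_s+1)s)$ for every $f_b$, and Assumption \ref{assump: N1} with explicit sequences $\alpha_b$ and $L_b$.

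\textbf{Identifying the constants.} First I would identify $m = (2KU)^{-1}$ and $M = 2KU$, up to the normalization that makes the stated learning-rate interval and the $\kappa$-range coincide with those of Theorem \ref{th: minimax batch}; the interval $[\frac{2KU}{(1+4K^2U^2)N_b},\dots]$ is exactly $[\frac{1}{(m+M)N_b},\dots]$ with these choices. The primed constants $C_\beta', C_e', C_s'$ are then $C_\beta, C_e, C_s$ evaluated at these $m, M$.

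\textbf{RSS.} By \eqref{eq: rss rsc assump}, $U^{-1}\le g''\le U$ along sparse directions, so the restricted Hessian of $f_b$ is sandwiched between $U^{-1}X_{\mI_b}^\top X_{\mI_b}$ and $U X_{\mI_b}^\top X_{\mI_b}$ on sets of size $(2C_s'+1)s$. A sub-Gaussian restricted eigenvalue bound, via covering over supports plus a union bound, gives
\[
\tfrac{1}{2K} n_b \le \Lambda\bigl((X_{\mI_b}^\top X_{\mI_b})_{\mathcal S,\mathcal S}\bigr)\le 2K n_b
\]
with probability $1-2\exp(-c n_b)$ whenever $n_b\ge C_n s\log p$. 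Taking failure probability $p^{-2}b^{-2}$ at batch $b$ requires $n_b \ge C_n(s\log p+\log b)$, which is exactly the stated condition. Summing over $b$ costs a constant multiple of $p^{-2}$.

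\textbf{Score bound and the choice of $\alpha_b$.} Next, $\nabla f_k(\beta^*) = \sum_{i\in\mI_k} X_i^\top(g'(X_i\beta^*)-Y_i)$ is a sum of independent centered terms. Its coordinates are sub-exponential, with scale governed by $a$, $U$ and $K$ through the exponential-family moment generating function. The cumulative sum $\sum_{k\le j}\nabla f_k(\beta^*)$ is a martingale in $j$, although a direct union bound suffices: Bernstein plus a union bound over $p$ coordinates and over $j$ with level $p^{-2}j^{-2}$ gives
\[
\Bigl\|\sum_{k\le j}\nabla f_k(\beta^*)\Bigr\|_\infty \le 4\sqrt{aKU}\sqrt{N_j\log(jp)} =: \alpha_j
\]
for all $j$. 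This uses $N_j\gtrsim \log(jp)$ to stay in the sub-Gaussian regime. Then $\lambda_\beta^{(b,\infty)} = C_\beta\alpha_b/N_b$ is exactly the prescribed threshold, and Theorem \ref{th: minimax batch} gives \eqref{eq: resglm}, since $C_e\sqrt s\,\alpha_b/N_b$ matches the displayed rate.

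\textbf{RGS.} By the integral form of Taylor's theorem,
\[
\nabla f_j(\beta^*)-\nabla f_j(\gamma)-\nabla^2 f_j(\gamma)(\beta^*-\gamma) = \sum_i X_i^\top\!\int_0^1\bigl[g''(X_i\gamma_u)-g''(X_i\gamma)\bigr]\,du\; X_i(\beta^*-\gamma),
\]
where $\gamma_u$ is the interpolant. Using \eqref{eq: lip assump}, each summand is bounded by $C_g|X_i(\beta^*-\gamma)|^2|X_iv|$ for unit restricted $v$, so one needs a uniform bound on $\sup_{u,v}\sum_{i\in\mI_j}|X_iu|^2|X_iv|$ over sparse unit vectors. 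I expect this cubic empirical process to be the main obstacle. My first attempt would be to truncate with $\max_i\|X_{i,\mathcal S}\|_2\lesssim\sqrt{s\log(p\vee n_j)}$ and then use the RSS bound on the quadratic part, which gives $L_j\lesssim C_g n_j\sqrt{s\log(pn_j)}$.

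The radius $\delta_j$ must cover $C_e\sqrt s\lambda^{(j,\infty)}\lesssim\sqrt{s\log(jp)/N_j}\le C_\delta\sqrt{s\log p/N_1}$, where the last inequality needs care when $\log j$ is large relative to $N_j/N_1$. Here I would use $N_j\ge N_1+(j-1)C_n\log j$. This is why the quadruple fixes $C_\delta = 4C_e'C_\beta'\sqrt{aKU}$.

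\textbf{Progressive error control.} Finally, I would check \eqref{eq: N1}. With $n_j$ in place of $n_1$, the left side is
\[
\sum_{j\le b}\frac{sL_j\alpha_j^2}{N_j^2}\lesssim \sum_{j\le b}\frac{s^{3/2}n_j\sqrt{\log(pn_j)}\,\log(jp)}{N_j}.
\]
The right side is $\sqrt s\alpha_{b+1}\asymp\sqrt{sN_{b+1}\log((b+1)p)}$. The sum is bounded by roughly $s^{3/2}\log(pN_b)^{3/2}\log(N_b/N_1)$. This grows only logarithmically, whereas the right side grows like $\sqrt{N_b}$, so the inequality reduces to its worst case $b=1$. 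There it is implied by the initialization condition $\sqrt{n_1}\ge C_n s\log n_1\log(p\vee n_1)$ for $C_n$ large. Collecting the failure probabilities (RSS upper and lower bounds, the score bound, the max-norm truncation, and the cubic term) gives at most $11p^{-2}$.
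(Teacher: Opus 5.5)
Your outline is the paper's proof. It uses the same $m=(2KU)^{-1}$, $M=2KU$, with the primed constants equal to the unprimed ones at these values. It gets RSS from a restricted-eigenvalue bound with a $\log b$-summable union bound, and takes $\alpha_j=4\sqrt{aKU}\sqrt{N_j\log(jp)}$. RGS comes from Taylor, \eqref{eq: lip assump}, and the truncation $\max_i\|X_{i,\mathcal S}\|_2$ times the RSS quadratic form. Assumption~\ref{assump: N1} follows from $\sum_{j\le b}n_j/N_j\le 1+\log(N_b/N_1)$ and monotonicity of $\sqrt{x}/\log x$, which reduces everything to the condition on $n_1$.

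\textbf{The Lipschitz radius.} The step you flagged as delicate does not go through as proposed. Since $C_e'\sqrt s\,\lambda_\beta^{(j,\infty)}=C_\delta\sqrt{s\log(jp)/N_j}$ with exactly the $C_\delta$ fixed in the quadruple, there is no slack. You need $N_1\log(jp)\le N_j\log p$, i.e.\ $N_1\log j\le (N_j-N_1)\log p$.

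A lower bound of the form $N_j\ge N_1+(j-1)C_n\log j$ cannot deliver this, because $N_1$ is unconstrained from above. Already at $j=2$ it demands $n_2\log p\ge n_1\log 2$. This fails when $n_2\asymp s\log p$ (permitted) while $n_1\gtrsim s^2\log^2 n_1\log^2 p$ (required).

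The paper's own verification of Assumption~\ref{assump: lipschitz} applies \eqref{eq: lip assump} at radius $C_\delta\sqrt{s\log(jp)/N_j}$ without addressing this. So it is a defect of the hypothesis as stated, not something your route can repair. It is fixed by requiring \eqref{eq: lip assump} on the ball of radius $C_\delta\sqrt{s\log(pN_1)/N_1}$. This suffices because $N_j\ge\max\{N_1,j\}$ and $x\mapsto\log(xp)/x$ is decreasing for $x\ge 1$ (when $p\ge 3$), which give $\sup_j\log(jp)/N_j\le\log(pN_1)/N_1$.

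\textbf{The score bound.} Unconditional Bernstein on the sub-exponential products $X_{ik}\xi_i$ gives $\alpha_j$ only up to an unspecified absolute constant. The statement, however, hard-codes the factor $4$ in $\lambda_\beta^{(b,\infty)}$, and Theorem~\ref{th: minimax batch} needs $\lambda_\beta^{(b,\infty)}\ge C_\beta\alpha_b/N_b$.

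The paper gets exactly $4$ by conditioning on $X$:
\begin{itemize}
\item on the RSS event, $\|X^{(j)}_{\cdot,k}\|_2^2\le 2KN_j$ for every column;
\item given $X_i$, $\xi_i$ is $\sqrt{aU}$-sub-Gaussian, from the exponential-family MGF and $g''\le U$ (Lemma~\ref{lemma: subgaussian});
\item a Gaussian tail at level $(jp)^{-4}$, summed over $k\in[p]$ and $j\ge1$, then yields the bound with failure probability at most $3p^{-3}$.
\end{itemize}
You should use this conditional argument in place of Bernstein.
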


The $\ell_2$ error bound in \eqref{eq: resglm} exhibits a \textbf{log-additive} dependence on the batch index $b$, i.e., \(C  \sqrt{ s (\log p+ \log b)/{N_b}}\), which significantly improves upon the \textbf{exponentially-multiplicative} dependence \(C^b \sqrt{ (s  \log p)/{N_b}}\) that appears in existing high-dimensional renewable methods \citep{luo2023GLM, Huang2024SIM}.
This provides two advantages: (i) the estimation error remains stable and non-divergent as $b$ increases, and (ii) for moderate sparsity $s \lesssim p^{c_1}$, our estimator achieves minimax rate-optimality \citep{abramovich2016model} uniformly across a vast number of batches $b \lesssim p^{C_2}$, for arbitrary constants $c_1 \in (0,1)$ and $C_2 > 0$, while the renewable method requires that $b \prec \log N_b$.
In addition, our framework imposes relatively mild sample size requirements per batch, avoiding the exponential growth ($n_b \asymp 2^b s^2(\log p + \log b)$) necessitated by methods like RADAR \citep{Agarwal2012RADAR, Luo2023inference}. 
Together, these features demonstrate that our method algorithmically enjoys favorable theoretical guarantees for long-term streaming applications.

Building on the estimation results in Theorem \ref{th: glm}, we further explore the statistical gains as data accumulate across streaming batches. 
Consider the following condition on the cumulative sample size $N_b$:
\begin{equation}\label{eq: sharp glm sample main}
 N_b \ge C_{N} \max\left\{ s^2 (\log^2  N_b) \cdot  \log^3 (p \vee N_b)  ,~ \frac{ \log p + \log b}{\min_{i \in\mS^*} | \beta_i^* |^2 } \right\},
\end{equation}
where $C_N$ is a universal constant depending solely on $a, K, U, C_g$, and $\kappa$.
This condition guarantees that $N_b$ is sufficiently large relative to both the weakest signal strength and the errors accumulated over previous batches.

\begin{theorem}[Oracle accuracy and support recovery]\label{th: glm sharper}
Suppose all conditions of Theorem \ref{th: glm} are satisfied. 
Then the following results hold:
\begin{enumerate}
\item With probability at least $1 - \varrho - 11p^{-2}$, we obtain the refined error bound
\begin{equation*} 
\| \hat \beta^{(b)} - \beta^*\|_2  \le C_{sharp}' \cdot \sqrt{\frac{s +\log (2b^2 / \varrho)}{N_b}} , \quad \text{for any }b \text{ satisfies } \eqref{eq: sharp glm sample main},
\end{equation*}
where $C_{sharp}'$ is a universal constants depending only on $a, K, U, C_g$, and $\kappa$.

\item If $s \succ1$, then with probability at least $1 - 12p^{-2}$, the estimated support $\hat{\mathcal{S}}^{(b)} = \text{supp}(\hat \beta^{(b)})$ further satisfies
\begin{equation*}
| \hat{\mS}^{(b)} \setminus \mS^* | + | \mS^* \setminus \hat{\mS}^{(b)}  | \prec s,
 \quad \text{for any }b \text{ satisfies } \eqref{eq: sharp glm sample main}.
\end{equation*}
\end{enumerate}
\end{theorem}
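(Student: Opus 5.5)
The plan is to derive Theorem \ref{th: glm sharper} from the general Theorem \ref{th: sharper batch}, in the same way Theorem \ref{th: glm} is derived from Theorem \ref{th: minimax batch}. Work on the high-probability event $\mathcal E$ (probability at least $1-11p^{-2}$) built in the proof of Theorem \ref{th: glm}. On $\mathcal E$, RSS, RGS and Assumption \ref{assump: N1} hold for all $b$, with
\[
\alpha_b\asymp\sqrt{aKU N_b\log(bp)},\qquad L_j\lesssim C_g n_j\sqrt{\log(p\vee N_j)}\ \text{(up to logs)}.
\]
We still need a second sequence $\theta_b$ with $\theta_b\ge\|\sum_{k\le b}(\nabla f_k(\beta^*))_{\mS^*}\|_2$, uniformly in $b$. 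The vector $\sum_{k\le b}(\nabla f_k(\beta^*))_{\mS^*}=\sum_{i\le N_b}X_{i,\mS^*}^\top(g'(X_i\beta^*)-Y_i)$ is a sum of independent sub-exponential vectors in $\mathbb R^s$. A standard $1/2$-net argument over the sphere in $\mathbb R^s$ plus Bernstein gives, for each $b$, with probability at least $1-\varrho/(2b^2)$,
\[
\theta_b\asymp\sqrt{N_b\,(s+\log(2b^2/\varrho))}.
\]
We use $N_b\gtrsim s+\log b$ to stay in the sub-Gaussian regime of Bernstein. A union bound over $b$ costs $\sum_b\varrho/(2b^2)\le\varrho$, so the total failure probability is at most $\varrho+11p^{-2}$.

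Next, check that condition \eqref{eq: sharp glm sample main} implies $b\ge b^*=b_1^*\vee b_2^*$.

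For $b_1^*$ we need $\min_i|\beta_i^*|\ge C_2\alpha_b/N_b\asymp\sqrt{\log(bp)/N_b}$. This is exactly the second term in \eqref{eq: sharp glm sample main}. One point needs care: $b_1^*$ is defined as an infimum, so Theorem \ref{th: sharper batch} is really applied to those $b$ where the signal condition holds. I expect its proof to use the condition only at the current batch $b$.

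For $b_2^*$ we need $C_3\sum_{j<b}sL_j\alpha_j^2/N_j^2\le\theta_b$. Plugging in the bounds above, the left side is at most
\[
C\, s\sum_{j<b}\frac{n_j}{N_j}\,\log(jp)\sqrt{\log(p\vee N_j)}\ \lesssim\ s\log(bp)\,\sqrt{\log(p\vee N_b)}\,\log N_b,
\]
using $\sum_j n_j/N_j\le1+\log(N_b/N_1)$. Since $b\le N_b$, we have $\log b\le\log N_b$. So the condition holds once $\sqrt{N_b s}\gtrsim s\log N_b\log^{3/2}(p\vee N_b)$, i.e.\ $N_b\gtrsim s\log^2N_b\log^3(p\vee N_b)$, which the first term of \eqref{eq: sharp glm sample main} covers. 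Theorem \ref{th: sharper batch} then gives
\[
\|\hat\beta^{(b)}-\beta^*\|_2\le C\theta_b/N_b,
\]
which is part 1.

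For part 2, support recovery, take $\varrho=p^{-2}$. The refined bound becomes $\|\hat\beta^{(b)}-\beta^*\|_2\lesssim\sqrt{(s+\log p+\log b)/N_b}$.

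False negatives: every $i\in\mS^*\setminus\hat\mS^{(b)}$ contributes $|\beta_i^*|^2\ge(\log p+\log b)/(C_NN_b)$ to $\|\hat\beta^{(b)}-\beta^*\|_2^2$. Hence
\[
|\mS^*\setminus\hat\mS^{(b)}|\lesssim \frac{s+\log(bp)}{\log(bp)}\cdot\frac{1}{C_N}.
\]

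False positives: each surviving $i\notin\mS^*$ has $|\hat\beta_i^{(b)}|\ge\lambda_\beta^{(b,\infty)}\asymp\sqrt{\log(bp)/N_b}$, so the count is bounded the same way.

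Both counts are thus $O(s/\log(bp))+O(1)$. Since $s\succ1$ and $\log p\to\infty$, this is $\prec s$. This last step is the delicate one: if $p$ stays bounded it does not go through, and then I would exploit that $C_N$ can be taken large only relative to a fixed constant. I would therefore assume $p\to\infty$ as in the high-dimensional regime.

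The main obstacle is the $b_2^*$ verification. It requires an explicit bound on $L_j$ for the GLM from \eqref{eq: lip assump}, with the correct logarithmic factors, uniformly over $j$ and over sparse directions. This bound comes from $\max_i|X_i v|$ sub-Gaussian maxima times a restricted eigenvalue bound. It has to be tracked carefully to match the $\log^2N_b\log^3(p\vee N_b)$ factor.
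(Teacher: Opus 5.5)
Part 1 of your proposal follows the paper. You work on the event of Theorem \ref{th: glm}, take $\alpha_b\asymp\sqrt{N_b\log(bp)}$ and $\theta_b\asymp\sqrt{N_b(s+\log(2b^2/\varrho))}$, and pay $\sum_b\varrho/(2b^2)\le\varrho$ in a union bound. You then check the signal and accumulated-error conditions at the current $b$ and invoke Theorem \ref{th: sharper batch}. Its proof does use both conditions only at the current batch, so your reading of the infima is right.

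The paper gets $\theta_b$ differently. It conditions on the design and applies the Hsu--Kakade--Zhang quadratic-form tail to the conditionally sub-Gaussian noise on the restricted-eigenvalue event. This avoids your Bernstein side condition, which as written should also involve $\log(1/\varrho)$.

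There is one bookkeeping error: your $L_j$ is missing a factor $\sqrt s$. Since $\gamma=\hat\beta^{(j)}$ is data-dependent, the maximum over $i$ in your sketch must hold uniformly over $O(s)$-sparse directions. That costs $\max_i\sup_{|T|\lesssim s}\|X_{i,T}\|_2\asymp\sqrt{s\log p+\log N_j}$, so $L_j\asymp C_g n_j\sqrt{s\log p+\log N_j}$, as in \eqref{eq: L delta GLM}. Your sum then becomes $\lesssim s^{3/2}\log(bp)\sqrt{\log(p\vee N_b)}\log N_b$. Comparing with $\theta_b\gtrsim\sqrt{sN_b}$ gives exactly $N_b\gtrsim s^2\log^2N_b\log^3(p\vee N_b)$, the first term of \eqref{eq: sharp glm sample main}. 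Your conclusion stands, but that $s^2$ is needed, not slack.

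For Part 2 you take a genuinely different and simpler route. The paper goes back into the iteration. It counts indices in $\mathcal S^*$ with $|H^{(b,t+1)}_i|<\lambda_\beta^{(b,\infty)}$ and indices off $\mathcal S^*$ with $|H^{(b,t+1)}_i|\ge\lambda_\beta^{(b,\infty)}$. Via \eqref{eq: sa1} and \eqref{eq: sa2-1}, it bounds these counts by the contraction and asynchronous-error terms over $(\lambda_\beta^{(b,\infty)})^2$, then inserts the refined rate.

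You use only the output. A false negative costs at least $\min_i|\beta_i^*|^2\ge C_N\log(bp)/N_b$ in $\|\hat\beta^{(b)}-\beta^*\|_2^2$. Your display has $C_N$ on the wrong side, though your final count is right. A false positive costs at least $(\lambda_\beta^{(b,\infty)})^2\asymp\log(bp)/N_b$, because the last iterations threshold at the floor.

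Both routes give $O(s/\log(bp)+1)$. Yours needs only Part 1, the minimum-signal term, and the final threshold level. The paper's gives a statement along the iteration path.

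Your caveat about bounded $p$ is unnecessary. Since $s\le p$, the assumption $s\succ1$ already forces $\log(bp)\to\infty$, and that is all $s/\log(bp)+1\prec s$ requires.
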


Theorem \ref{th: glm sharper} delivers two non-asymptotic advantages of our estimator:
\begin{itemize}
    \item As the batches increase, the estimation error adaptively refines from the minimax rate to the oracle rate $C\sqrt{s/N_b} $  (up to a mild logarithmic factor that depends on the confidence level and batch count).
    In other words, our estimator becomes as accurate as if one had full access to the entire dataset and knew the true support set in advance.
    Figure \ref{fig:rate-descend} in Section \ref{sec: intro} illustrates how the estimation accuracy adaptively improves with the number of batches. 
 
    \item Leveraging this adaptive accuracy, our method further achieves almost full support recovery, i.e., control over both false discovery and false negative proportions. 
    Consequently, the variable selection results become increasingly reliable as more batches are processed. 
\end{itemize}
To the best of our knowledge, the adaptive performance in both estimation and selection constitutes a novel contribution to the literature on high-dimensional online learning.

\section{Discussion}
This study introduces an asynchronous decomposition framework for online learning with streaming data, and develops a dynamic iterative hard thresholding procedure to produce sequential estimates.
Under this framework, the $\ell_2$ error bounds remain non-divergent as the batch index $b$ increases, and the required per-batch sample sizes do not grow geometrically with $b$. Furthermore, the estimation accuracy adaptively refines as batches accumulate, achieving the oracle estimation rate as if the entire dataset were accessed and the true support were known. These theoretical advantages are illustrated through a GLM example.

Several directions may further extend the present work. 
First, a natural progression is to establish asymptotic inference properties that hold uniformly for all batches $b \ge 1$. We believe this can be achieved within our proposed framework by additionally estimating the online precision matrix and applying suitable debiasing techniques. 
Second, our framework currently requires the per-batch loss to be twice differentiable. This condition is not met by losses such as the Huber or quantile loss. Therefore, it would be of great interest to introduce second-order smoothing for these losses and embed them into our framework, thereby further broadening the scope of our theory. 
Another important direction concerns adaptively collected data, where the observations in the current batch may depend on information from previous batches, as in multi-armed bandit settings. Recent studies address inference under such adaptive schemes \citep{2025WainDebiaseInfer, 2025wainSemiadapt}, and extending our high-dimensional framework to accommodate this type of dependence would be a meaningful direction for future work.

\newpage
\section{Proof of Proposition \ref{prop: 12} and Theorem \ref{th: minimax batch}}

We provide the proof of the general result below, beginning with several technical preliminaries.

\subsection{Decomposition of gradient update}
By the gradient descent procedure, it is direct to check that
\begin{equation}\label{eq: grad}
\begin{aligned}
\mathbb R^p \ni H^{(b, t+1)} :=& \hat \beta^{(b,t)} - \eta_b \nabla f_b( \hat \beta^{(b,t)} ) - \eta_b \sum_{j=1}^{b-1}  \nabla f_j( \hat \beta^{(j)})- \eta_b  \sum_{j=1}^{b-1}\left\{  \nabla^2 f_j( \hat \beta^{(j)}) \right\} ( \hat \beta^{(b,t)}  - \hat\beta^{(j)})\\
= & \bar \beta  + ( \hat \beta^{(b,t)}  - \bar \beta)  - \eta_b \left\{ \nabla f_b ( \hat \beta^{(b,t)} ) -  \nabla f_b (\bar \beta)  \right\} - \eta_b \sum_{j=1}^{b-1} \nabla^2 f_j (\hat \beta^{(j)}) ( \hat \beta^{(b,t)}  - \bar \beta) \\
&+ \eta_b \sum_{j=1}^{b-1}  \Big\{ \nabla f_j (\bar \beta )  - \nabla f_j (\hat \beta^{(j)}) -\nabla^2 f_j (\hat \beta^{(j)})( \bar \beta - \hat\beta^{(j)}) \Big\} 
- \eta_b \sum_{j=1}^b \nabla f_j (\bar \beta) \\
= & \bar \beta  - \underbrace{ \left\{ \eta_b A_b(\bar \beta,  \hat \beta^{(b,t)} ) + \eta_b \sum_{j=1}^{b-1} \nabla^2 f_j (\hat \beta^{(j)})  - I_p \right\} ( \hat \beta^{(b,t)} - \bar \beta) }_{\text{compression mapping}} \\
&+ \underbrace{  \eta_b \sum_{j=1}^{b-1}  \Big\{ \nabla f_j (\bar \beta )  - \nabla f_j (\hat \beta^{(j)}) -\nabla^2 f_j (\hat \beta^{(j)})( \bar \beta - \hat\beta^{(j)}) \Big\}}_{\text{asynchronous optimization error}}
- \eta_b \sum_{j=1}^b \nabla f_j (\bar \beta),
\end{aligned}
\end{equation}
where the last equality follows from the mean value theorem for vector-valued functions: 
\begin{equation}\label{eq: mean value}
\nabla f_j (y)  - \nabla f_j (x) = \underbrace{\int_0^1 \nabla^2 f_j(x + t(y-x)) \mathrm d t}_{=: A_j(x,y) \in \mathbb R^{p \times p} }  (y-x).
\end{equation}
Next, for arbitrary $\bar \beta \in \mathbb B_0^p(s)$ and every $b\ge 1$, by starting at $\hat \beta^{(b, t=0)} = \mathbf 0_p$ and taking 
$$
\begin{aligned}
    \lambda_\beta^{(b,0)} \ge  \frac{1}{2\sqrt s}\|  \bar \beta \|_2 ,
    \quad \lambda_\beta^{(b,\infty)} := C_\beta \cdot \frac{ \alpha_b }{ N_b} \ge \left(\frac{8}{m+M} \cdot \frac{m+M+M/\kappa}{m+M - M/\kappa} \right) \cdot \frac{\left\|  \sum_{j=1}^b \nabla f_j (\bar \beta) \right\|_{\infty} }{ N_b} ,
\end{aligned}
$$
where $C_\beta = \frac{8}{m+M} \cdot \frac{m+M+M/\kappa}{m+M - M/\kappa} $ and $\alpha_b \ge\left\|  \sum_{j=1}^b \nabla f_j (\bar \beta) \right\|_{\infty} $.
We aim to show that, for all $t \ge 0,~ b \ge 1$, we have 
\begin{equation}\label{eq: res}
\begin{aligned}
    \quad \| \hat \beta^{(b,t)} - \bar \beta\|_2 \le \underbrace{\frac{2(m+M) }{m+M - M/\kappa}}_{=: C_e >2} \cdot \sqrt s \lambda_\beta^{(b,t)},
    \quad   \| \hat \beta^{(b,t)}_{\bar \mS^c } \|_0 \le   \underbrace{(C_e-1)^2}_{=: C_s >1}\cdot s.
\end{aligned}
\end{equation}

\paragraph{Proof sketch} The main proof sketch applies the complete induction:

\begin{enumerate}

\item First we give a proof of \eqref{eq: res} in the case $b=1$ and $b=2$ (Sections \ref{sec: b=1} and \ref{sec: b=2} respectively). To extend this result to all $b \ge 1$, we employ strong mathematical induction on $b$.

\item Inductive Hypothesis (Outer): Assume that for some $b_0 \ge 2$, \eqref{eq: res} holds for all previous batches $b \le b_0$ and all $t \ge 0$. 

\item Inductive Step (Outer): We then prove that \eqref{eq: res} holds for the subsequent batch $b = b_0 + 1$ for all $t \ge 0$. We proceed this by mathematical induction on the iteration index $t$ (Sections \ref{sec: L0}):
    \begin{enumerate}
        \item Inductive Hypothesis (Inner): For $b = b_0 + 1$ and $t=0$, \eqref{eq: res} follows directly from the definitions of $\hat \beta^{(b_0+1,0)}$, $C_e$ and $C_s$.
        \item Inductive Step (Inner): For some $t_0 \ge 0$, assume \eqref{eq: res} holds for $b = b_0 + 1$ at iteration $t_0$, and then we verify that \eqref{eq: res} holds for $t = t_0+1, b= b_0+1$.
    \end{enumerate}
\end{enumerate} 
Finally, by applying the complete induction, we show that \eqref{eq: res} holds for every $t \ge 0$ and $b\ge 1$.

\subsection{In b=1 case}\label{sec: b=1}
For arbitrary $s$-sparse vector $\bar \beta \in \mathbb B_0^p(s)$, it is direct to check that
$$
\begin{aligned}
\mathbb R^p \ni H^{(b=1, t+1)} =& \hat \beta^{(1,t)} - \eta_1 \nabla f_1(\hat \beta^{(1,t)}) \\
= & \bar \beta  + ( \hat \beta^{(1,t)}- \bar \beta)  - \eta_1 \left\{ \nabla f_1 (\hat \beta^{(1,t)}) -  \nabla f_1 (\bar \beta)  \right\} -  \eta_1  \nabla f_1 (\bar \beta) \\
= & \bar \beta  - \underbrace{ \left\{ \eta_1 A_1(\bar \beta, \hat \beta^{(1,t)} ) - I_p \right\} ( \hat \beta^{(1,t)} - \bar \beta) }_{\text{compression mapping}} - \eta_1  \nabla f_1 (\bar \beta).
\end{aligned}
$$

For arbitrary $\bar \beta \in \mathbb B_0^p(s)$, by starting at $\hat \beta^{(1,t=0)} = \mathbf 0_p$ and taking 
$$
\begin{aligned}
    \lambda_\beta^{(1,0)} \ge  \frac{1}{2\sqrt s}\|  \bar \beta \|_2 ,
    \quad \lambda_\beta^{(1,\infty)} = C_\beta \cdot \frac{ \alpha_1}{ N_1}\ge \left(\frac{8}{m+M} \cdot \frac{m+M+M/\kappa}{m+M - M/\kappa} \right) \cdot \frac{\left\|  \nabla f_1 (\bar \beta) \right\|_{\infty} }{ N_1} ,
\end{aligned}
$$
we aim to show that, for all $t \ge 0$: 
\begin{equation}\label{eq: res batch 1}
\begin{aligned}
 \| \hat \beta^{(1,t)} - \bar \beta\|_2 \le  C_e \cdot \sqrt s \lambda_\beta^{(1,t)},
    \quad   \| \hat \beta^{(1,t)} _{\bar \mS^c } \|_0 \le  C_s \cdot s.
\end{aligned}
\end{equation}
where $C_e >2,~ C_s = (C_e-1)^2>1$ are two constants introduced in \eqref{eq: res}.
Indeed, in the case $t = 0$, we find \eqref{eq: res batch 1} holds.
Therefore, by mathematical induction, for a $t_0 \ge 0$, we assume \eqref{eq: res batch 1} holds for $t = t_0$, and we next prove \eqref{eq: res batch 1} still holds for $t = t_0+1$.

\paragraph{Sparsity}
Define $ {\mathcal{S}}^{(b, t)} := \left\{j\in [p] : \hat \beta^{(b,t)}_j \neq 0 \right\}$ and now we prove by contradiction that $| {\mathcal{S}}^{(1,t_0+1)} \backslash \mathcal{\bar S} | < C_s s$.
If $| {\mathcal{S}}^{(1,t_0+1)} \backslash \mathcal{\bar  S} | \ge C_s s$, we can construct a set $\mathcal S' \subseteq {\mathcal{S}}^{(1,t_0+1)} \backslash \mathcal{\bar  S}$ satisfying $|\mathcal S' | = C_s \cdot s$, and on the selected $\mS'$ we have
\begin{equation}\label{eq: sparse batch 1}
\begin{aligned}
    \sqrt{C_s s} \cdot \lambda_{\beta}^{(1,t_0+1)}
    \le & \sqrt{ \sum_{i \in \mathcal S'} \left(  H^{(1,t_0+1)}_i \right)^2 } \\
    \le & (1-\eta_1 m_1  ) \| \hat \beta^{(1,t_0)} - \bar \beta \|_2  + \eta_{1} \sqrt{C_s s} \left\| \nabla f_1 (\bar \beta) \right\|_{\infty} \\
    \le & (1- \eta_1 m_1  ) C_e \sqrt s \lambda_\beta^{( 1, t_0)} + \eta_1 \sqrt{ C_s s } \left\| \nabla f_1 (\bar \beta) \right\|_{\infty}\\
   < & \frac{M/\kappa}{m+M}  C_e \sqrt s \lambda_\beta^{(1, t_0+1)}  + 4 \eta_{1} \sqrt{ C_s s} \left\|  \nabla f_1 (\bar \beta) \right\|_{\infty},
\end{aligned}
\end{equation}
where the second inequality follows from the relationship
\begin{equation}
\begin{aligned}
&\sup_{\mS \subset [p], ~ |\mS|\le (1+2C_s)s}\left\| \left(\eta_1 A_1(\bar \beta, \hat \beta^{(1,t)} ) - I_p\right)_{\mS,\mS} \right\|_2\\
=&  \sup_{\mS \subset [p], ~ |\mS|\le (1+2C_s)s}\left\|   \left( \int_0^1 \eta_1 \nabla^2 f_j \left(\bar \beta + u(\hat \beta^{(1,t)}-\bar \beta) \right) \mathrm d u - I_p\right)_{\mS,\mS} \right\|_2 \\
\le &  \int_0^1  \sup_{\mS \subset [p], ~ |\mS|\le (1+2C_s)s} \left\| \left(\eta_1 \nabla^2 f_j \left(\bar \beta + u(\hat \beta^{(1,t)}-\bar \beta) \right) -I_p  \right)_{\mS,\mS} \right\|_2  \mathrm d u \\
\le & \max\left( |\eta_1 m n_1 -1|,   |\eta_1 M n_1 -1| \right)\\
\le & 1- \eta_1 m n_1 
~ \le ~ \frac{M}{m+M} \in (0,1),
\end{aligned}
\end{equation}
for arbitrary learning rate $\eta_1 \in \left[ \frac1{(m+M)N_1}, \frac2{(m+M)N_1} \right]$.
However, by the fact
\begin{equation}\label{eq: fact batch 1}
\begin{aligned}
\sqrt{C_s s} \cdot \lambda_{\beta}^{(1,t_0+1)} - \frac{M/\kappa}{m+M}  C_e \sqrt s \lambda^{(1, t_0+1)}
= & \sqrt{s} \lambda_{\beta}^{(1,t_0+1)}\\
\ge & \sqrt{s} \lambda_{\beta}^{(1,\infty)} \\
\ge & \left(\frac{8}{m+M} \cdot \frac{m+M+M/\kappa}{m+M - M/\kappa} \right) \cdot \frac{ \sqrt s \left\|  \nabla f_1 (\bar \beta) \right\|_{\infty} }{N_1}  \\
\ge & 4 \eta_1 (C_e -1) \sqrt s \left\|  \nabla f_1 (\bar \beta) \right\|_{\infty},
\end{aligned}
\end{equation}
it can be proved that there does not exist such a set $\mathcal{S}'$ satisfying \eqref{eq: sparse batch 1}. This implies that $|{\mathcal{S}}^{(1,t_0+1)} \backslash \bar{\mathcal{S}} | \le C_s s$.

\paragraph{Error bound}
Consider the decomposition 
\begin{equation}\label{eq: decom of beta}
   \hat \beta^{(1,t_0+1)}_i = H^{(1,t_0+1)}_i - H^{(1,t_0+1)}_i\mathbf{1}\left\{ | H^{(1,t_0+1)}_i| <\lambda_\beta^{(1,t_0+1)}\right\}, ~\text{ for every } i \in [p].
\end{equation}
We then have
\begin{equation}\label{eq: batch 1 error}
\begin{aligned}
&\left\| \hat \beta^{(1,t_0+1)}  - \bar \beta \right\|_2 \\
=& \sqrt{\sum_{i\in\mathcal{\bar S} } \left(H^{(1,t_0+1)}_i - \bar \beta_i - H^{(1,t_0+1)}_i\mathbf{1}\{ |H^{(1,t_0+1)}_i|<\lambda_\beta^{(1,t_0+1)}\}\right)^2 + \sum_{i\in{\mathcal{S}}^{(1,t_0+1)} \backslash \mathcal{ \bar S} } \left(H^{(1,t_0+1)}_i  \right)^2 }\\
\le&   \underbrace{ \sqrt{\sum_{i \in \mathcal{S}^{(1,t_0+1)} \cup \mathcal{\bar S} } \left(H^{(1,t_0+1)}_i - \bar \beta_i \right)^2 } }_{I} +  \underbrace{ \sqrt{\sum_{i \in\mathcal{\bar S} } \left(H^{(1,t_0+1)}_i\right)^2\mathbf{1}\{ |H^{(1,t_0+1)}_i| <\lambda_\beta^{(1,t_0+1)}\} } }_{II}  ,
\end{aligned}
\end{equation}
where 
\begin{align*}
    I <& (1- \eta_1 m_1  ) \|\beta^{(t,u+1)} - \beta^*\|_2
    +  \eta_1 \sqrt{ C_s+1 } \sqrt s\left\| \nabla f_1 (\bar \beta) \right\|_{\infty} \\
    \le & \frac{M/\kappa}{m+M} C_e \sqrt s \lambda_\beta^{(1, t_0+1)}
    + 4 \eta_1  \sqrt{ C_s s } \left\| \nabla f_1 (\bar \beta) \right\|_{\infty},
\end{align*}
which comes from the sparse property, \eqref{eq: sparse batch 1}, and $C_s\ge1$; and
\begin{align*}
    II < \sqrt{ \sum_{i \in \mathcal{\bar S} } (\lambda_\beta^{(1,t_0+1)})^2 } \le \sqrt{s}\lambda_\beta^{(1,t_0+1)}.
\end{align*}
Therefore we have 
$$
\begin{aligned}
\left\| \hat \beta^{(1,t_0+1)}  - \bar \beta \right\|_2
<& \sqrt s \lambda_{\beta}^{(1,t_0+1)} + \frac{M/\kappa}{m+M} C_e \sqrt s \lambda^{(1, t_0+1)} + 4 \eta_1 \sqrt{ C_s s } \left\| \nabla f_1 (\bar \beta) \right\|_{\infty}\\
\le & \sqrt s \lambda_{\beta}^{(1,t_0+1)} + \frac{M/\kappa}{m+M} C_e \sqrt s \lambda_\beta^{(1, t_0+1)} + \sqrt{C_s s} \cdot \lambda_{\beta}^{(1,t_0+1)} - \frac{M/\kappa}{m+M}  C_e \sqrt s \lambda_\beta^{(1, t_0+1)}\\
=& C_e \sqrt s \lambda_{\beta}^{(1,t_0+1)},
\end{aligned}
$$
where the second inequality follows from the fact \eqref{eq: fact batch 1}.
Therefore, we prove that \eqref{eq: res batch 1} still holds in the $(t_0+1)$-th iteration.
By mathematical induction, it implies that \eqref{eq: res batch 1} holds for every $t\ge 0$ in the first batch. 

Finally, we terminate the iterations at a sufficiently large index $t_1^*$ such that $\lambda^{(1, t_1^*)} = \lambda^{(1, \infty)}$. 
We define $\hat \beta^{(1)} := \hat \beta^{(1, t_1^*)}$ as the first-batch estimator, which satisfies
\begin{equation*}
    \quad \| \hat \beta^{(1)} - \bar \beta\|_2 \le C_e \cdot \sqrt s \lambda_\beta^{(1,\infty)},
    \quad   \| \hat \beta^{(1)} _{\bar \mS^c } \|_0 \le C_s \cdot s.
\end{equation*}

\subsection{In b=2 case}\label{sec: b=2}
The proof in $b=2$ is quite similar to Section \ref{sec: b=1}.
From \eqref{eq: grad} we write the decomposition
$$
\begin{aligned}
H^{(b=2, t+1)} =& \hat \beta^{(2,t)} - \eta_2 \nabla f_2( \hat \beta^{(2,t)} ) - \eta_2 \nabla f_1( \hat \beta^{(1)})- \eta_2  \nabla^2 f_1( \hat \beta^{(1)}) ( \hat \beta^{(2,t)}  - \hat\beta^{(1)})\\ 
= & \bar \beta  - \left\{ \eta_2 A_2(\bar \beta,  \hat \beta^{(2,t)} ) + \eta_2 \nabla^2 f_1 (\hat \beta^{(1)})  - I_p \right\} ( \hat \beta^{(2,t)} - \bar \beta)  \\
&+ \eta_2  \left\{ \nabla f_1 (\bar \beta )  - \nabla f_1 (\hat \beta^{(1)}) -\nabla^2 f_j (\hat \beta^{(1)})( \bar \beta - \hat\beta^{(1)}) \right\} 
- \eta_2 \sum_{j=1}^2 \nabla f_j (\bar \beta)
\end{aligned}
$$
By starting at $\hat \beta^{(2,t=0)} = \mathbf 0_p$ and taking 
$$
\begin{aligned}
    \lambda_\beta^{(2,0)} \ge  \frac{1}{2\sqrt s}\|  \bar \beta \|_2 ,
    \quad \lambda_\beta^{(2,\infty)} = C_\beta \frac{\alpha_2}{N_2}  \ge C_\beta \frac{\left\| \sum_{j=1}^2 \nabla f_j (\bar \beta) \right\|_{\infty} }{ N_2} ,
\end{aligned}
$$
where $C_\beta = \frac{8}{m+M} \cdot \frac{m+M+M/\kappa}{m+M - M/\kappa}$ is a universal constant, we aim to show that, for all $t \ge 0$:
\begin{equation}\label{eq: res batch 2}
\begin{aligned}
    \quad \| \hat \beta^{(2,t)} - \bar \beta\|_2 \le  C_e \cdot \sqrt s \lambda_\beta^{(2,t)},
    \quad   \| \hat \beta^{(2,t)}_{\bar \mS^c } \|_0 \le  C_s \cdot s.
\end{aligned}
\end{equation}
Indeed, in the case $t = 0$, we find \eqref{eq: res batch 2} holds.
Therefore, we assume \eqref{eq: res batch 2} holds in $t = t_0$ for some $t_0 \ge 0$, and next prove \eqref{eq: res batch 2} still holds for $t = t_0+1$.

\paragraph{Sparsity} Similar to \eqref{eq: sparse batch 1}, if $| {\mathcal{S}}^{(2,t_0+1)} \backslash \mathcal{\bar  S} | \ge C_s s$, we can construct a set $\mathcal S' \subseteq {\mathcal{S}}^{(2,t_0+1)} \backslash \mathcal{\bar  S}$ satisfying $|\mathcal S' | = C_s \cdot s$, and then 
\begin{equation}\label{eq: sparse batch 2}
\begin{aligned}
& \sqrt{C_s s} \cdot \lambda_{\beta}^{(2,t_0+1)}
    \le \sqrt{ \sum_{i \in \mathcal S'} \left(  H^{(2,t_0+1)}_i \right)^2 } \\
\overset{(i)}\le & \frac{M}{m+M} \| \hat \beta^{(2,t_0)} - \bar \beta \|_2  + \eta_2 \left\|  \left\{ \nabla f_1 (\bar \beta )  - \nabla f_1 (\hat \beta^{(1)}) -\nabla^2 f_j (\hat \beta^{(1)})( \bar \beta - \hat\beta^{(1)}) \right\}_{\mathcal S'} \right\|_2 + \eta_2 \sqrt{C_s s }\left\| \sum_{j=1}^2 \nabla f_j (\bar \beta) \right\|_{\infty} \\
\overset{(ii)}\le& \frac{M/\kappa}{m+M} C_e \cdot \sqrt s \lambda_\beta^{(2,t_0+1)}
+ \eta_2 L_1 C_e^2 s (\lambda_\beta^{(1, \infty)})^2
+\eta_2 \sqrt{C_s s}\left\| \sum_{j=1}^2 \nabla f_j (\bar \beta) \right\|_{\infty} \\
\overset{(iii)}\le& \frac{M/\kappa}{m+M} C_e \cdot \sqrt s \lambda_\beta^{(2,t_0+1)}
+ \frac{\sqrt s}2 \lambda_\beta^{(2,\infty)}
+\eta_2 \sqrt{C_s s}\left\| \sum_{j=1}^2 \nabla f_j (\bar \beta) \right\|_{\infty},  
\end{aligned}
\end{equation}
where:
\begin{itemize}
    \item The inequality (i) follows from the relationship (comes from \eqref{eq: rip} and \eqref{eq: mean value}): 
\begin{equation}\label{eq: b=2 rip}
\begin{aligned}
& \sup_{\mS \subset [p], ~ |\mS|\le (1+2C_s)s}\left\| \left( \eta_2 A_2(\bar \beta,  \hat \beta^{(2,t)}   ) + \eta_2 \nabla^2 f_1 (\hat \beta^{(1)})  - I_p \right)_{\mS,\mS}  \right\|_2\\
\le &  \int_0^1  \sup_{\mS \subset [p], ~ |\mS|\le (1+2C_s)s} \left\| \left( \eta_2 \nabla^2 f_2 \left(\bar \beta + u (\hat \beta^{(2,t)}-\bar \beta) \right) + \eta_2 \nabla^2 f_1(\hat \beta^{(1)}) -I_p  \right)_{\mS,\mS} \right\|_2  \mathrm d u \\
\le & \max\left( |M(n_1 + n_2) \eta_2  -1|,   |m(n_1 + n_2)\eta_2 -1| \right)\\
\le & 1-  \eta_2 m N_2 
~ \le ~ \frac{M}{m+M} \in (0,1),
\end{aligned}
\end{equation}
for arbitrary learning rate $\eta_2 \in \left[ \frac1{(m+M)N_2}, \frac2{(m+M)N_2} \right]$.
We also use the notation $\mS_2':= \mS'\cup \bar \mS\cup \mS^{(1)}$, and the result in Section \ref{sec: b=1} leads that $| \mS_2'| \le (2C_s +1)s$.

\item The inequality (ii) follows from \eqref{eq: res batch 2} (assumed to hold at the $t_0$-th iteration on the second-batch learning, and the fact $\lambda_\beta^{(2,t_0+1)} \ge \kappa  \lambda_\beta^{(2,t_0 )} $), together with the first-batch learning result \eqref{eq: res batch 1}.
Additionally, it follows the result
\begin{equation}\label{eq: b=2 lip}
\begin{aligned}
\sup_{\mS \subset [p]: ~ |\mS|\le (1+C_s)s} \left\|  \left\{ \nabla f_1 (\bar \beta )  - \nabla f_1 (\hat \beta^{(1)}) -\nabla^2 f_j (\hat \beta^{(1)})( \bar \beta - \hat\beta^{(1)}) \right\}_{\mathcal S } \right\|_2
\le &  L_1\left\| \bar \beta - \hat \beta^{(1)}\right\|_2^2 \\
\le &L_1 C_e^2 s (\lambda_\beta^{(1, \infty)})^2,
\end{aligned}
\end{equation}
where the first inequality comes from the restricted Lipschitz Assumption \ref{assump: lipschitz}, and the last comes from \eqref{eq: res batch 1}.

\item The inequality (iii) follows from Assumption \ref{assump: N1}, specifically, assume that
$$
\frac{4C_e^2 C_\beta}{m+M} \cdot\frac{s L_1 \alpha_1^2 }{N_1^2} 
\le \sqrt s \alpha_2,
$$
which leads to
\begin{equation}\label{eq: batch 2 scale}
\eta_2 L_1 C_e^2 s (\lambda_\beta^{(1, \infty)})^2 \le \frac{2}{(m+M)N_2} C_e^2 C_\beta^2 \cdot \frac{s L_1 \alpha_1^2 }{N_1^2} 
\le \frac{\sqrt s \alpha_2 C_\beta}{2N_2} = \frac{\sqrt s}2 \lambda_\beta^{(2,\infty)}. 
\end{equation}

\end{itemize}

On the other hand, based on the definitions of $C_s$ and $C_e$ (introduced in \eqref{eq: res}), we establish the following fact in a manner analogous to \eqref{eq: fact batch 1}:
\begin{equation}\label{eq: fact batch 2}
\begin{aligned}
&\sqrt{C_s s} \cdot \lambda_{\beta}^{(2,t_0+1)} - \frac{M/\kappa}{m+M}  C_e \sqrt s \lambda^{(2, t_0+1)}
\ge  \sqrt{s} \lambda_{\beta}^{(2,\infty)} \\
\ge & \frac{\sqrt{s}}2 \lambda_{\beta}^{(2,\infty)} + \left(\frac{4}{m+M} \cdot \frac{m+M+M/\kappa}{m+M - M/\kappa} \right) \cdot \frac{\sqrt s \left\| \sum_{j=1}^2 \nabla f_j (\bar \beta) \right\|_{\infty}}{N_2}  \\
\ge & \frac{\sqrt{s}}2 \lambda_{\beta}^{(2,\infty)} + 2 \eta_2 \sqrt{C_s }\cdot  \sqrt s\left\|\sum_{j=1}^2 \nabla f_j(\bar \beta) \right\|_{\infty}.
\end{aligned}
\end{equation}
A contradiction between \eqref{eq: sparse batch 2} and \eqref{eq: fact batch 2} arises, demonstrating that $|{\mathcal{S}}^{(2, t_0+1)} \backslash \bar{\mathcal{S}} | \le C_s s$.

\paragraph{Error bound}
Similar to \eqref{eq: batch 1 error}, we have
\begin{equation}\label{eq: batch 2 error}
\begin{aligned}
&\left\| \hat \beta^{(2,t_0+1)}  - \bar \beta \right\|_2 \\
\le&  \sqrt{\sum_{i \in \mathcal{S}^{(2,t_0+1)} \cup \mathcal{\bar S} } \left(H^{(2,t_0+1)}_i - \bar \beta_i \right)^2 }  +  \sqrt{\sum_{i \in\mathcal{\bar S} } \left(H^{(2,t_0+1)}_i\right)^2\mathbf{1} \left\{ |H^{(2,t_0+1)}_i| <\lambda_\beta^{(2,t_0+1)} \right\} }\\
\le & \frac{M/\kappa}{m+M} C_e \cdot \sqrt s \lambda_\beta^{(2,t_0+1)}  + \eta_2 L_1 C_e^2 s (\lambda_\beta^{(1, \infty)})^2 +  \eta_2  \sqrt{ C_s+1 } \cdot \sqrt s\left\| \sum_{j=1}^2 \nabla f_j (\bar \beta) \right\|_{\infty}
+ \sqrt s \lambda_\beta^{(2,t_0+1)}  \\
\le & \left( 1+ \frac{M/\kappa}{m+M} C_e \right) \cdot \sqrt s \lambda_\beta^{(2,t_0+1)} + \frac{\sqrt s}2 \lambda_\beta^{(2,\infty)} + 2 \sqrt{C_s s} \cdot \eta_2  \left\| \sum_{j=1}^2 \nabla f_j (\bar \beta) \right\|_{\infty}\\
\le & C_e \sqrt s \lambda_\beta^{(2,t_0+1)},
\end{aligned}
\end{equation}
where the second inequality follows the same scaling as \eqref{eq: sparse batch 2}, the third inequality follows \eqref{eq: batch 2 scale}, and the last inequality again relies on the relationship \eqref{eq: fact batch 2} (recall $1+ \sqrt{C_s} = C_e$).
Therefore, we prove that \eqref{eq: res batch 2} still holds in the $(t_0+1)$-th iteration.
By mathematical induction, it implies that \eqref{eq: res batch 2} holds for every $t\ge 0$ in the second-batch learning ($b=2$). 

We terminate the iterations at a sufficiently large index $t_2^*$ such that $\lambda^{(2, t_2^*)} = \lambda^{(2, \infty)}$, and define $\hat \beta^{(2)} := \hat \beta^{(2, t_2^*)}$ as the second-batch estimator, which satisfies
\begin{equation*}
    \quad \| \hat \beta^{(2)} - \bar \beta\|_2 \le C_e \cdot \sqrt s \lambda_\beta^{(2,\infty)},
    \quad   \| \hat \beta^{(2)}_{\bar \mS^c } \|_0 \le C_s \cdot s.
\end{equation*}
Combining Section \ref{sec: b=1} and Section \ref{sec: b=1}, we complete the proof of Proposition \ref{prop: 12}.

\subsection{In general b case}\label{sec: L0}
Sections \ref{sec: b=1} and \ref{sec: b=2} verified \eqref{eq: res} for the base cases $b=1$ and $b=2$ (for all $t \ge 0$). To generalize this to all $b \ge 1$, we invoke strong induction: assume \eqref{eq: res} holds for all batches up to $b_0$ (i.e., for all $b \le b_0$ where $b_0 \ge 2$), and we aim to prove that \eqref{eq: res} still holds for the next batch $b = b_0 + 1$. 

In the $(b_0+1)$-th batch-learning, we apply a secondary induction on $t$: For the initial iteration $t=0$, \eqref{eq: res} holds trivially, like the analysis in Section \ref{sec: b=1}. Assuming the result holds at the $t_0$-th iteration for some $t_0 \ge 0$, we then show it remains valid for the $(t_0 + 1)$-th iteration.

\paragraph{Sparsity} First we prove by contradiction that $| {\mathcal{S}}^{(b_0+1,t_0+1)} \backslash \mathcal{\bar S} | < C_s s$, where recall $C_s >1$.
If not so, we can construct a set $\mathcal S' \subseteq {\mathcal{S}}^{(b_0+1,t_0+1)} \backslash \mathcal{\bar  S}$ satisfying $|\mathcal S' | = C_s \cdot s$.
Then by the decomposition \eqref{eq: grad}, similar to \eqref{eq: sparse batch 2}, we have
\begin{equation}\label{eq: sparse batch b}
\begin{aligned}
& \sqrt{C_s s} \cdot \lambda_{\beta}^{(b_0+1,t_0+1)}
\le \sqrt{ \sum_{i \in \mathcal S'} \left(  H^{(b_0+1,t_0+1)}_i \right)^2 } \\
\overset{(i)}\le & \frac{M}{m+M} \| \hat \beta^{(b_0+1,t_0)} - \bar \beta \|_2  +  \eta_{b_0+1} \sum_{j=1}^{b_0} \left\| \Big\{ \nabla f_j (\bar \beta )  - \nabla f_j (\hat \beta^{(j)}) -\nabla^2 f_j (\hat \beta^{(j)})( \bar \beta - \hat\beta^{(j)}) \Big\}_{\mathcal S'} \right\|_2\\ 
& + \eta_{b_0+1} \sqrt{C_s s}\left\| \sum_{j=1}^{b_0+1} \nabla f_j (\bar \beta) \right\|_{\infty} \\
\overset{(ii)}\le& \frac{M/\kappa}{m+M} C_e \cdot \sqrt s \lambda_\beta^{(b_0+1 ,t_0+1)}
+ \frac{\sqrt s}2 \cdot \lambda_\beta^{(b_0+1, \infty)}
+  \eta_{b_0+1} \sqrt{C_s s}\left\| \sum_{j=1}^{b_0+1} \nabla f_j (\bar \beta) \right\|_{\infty} \\
\end{aligned}
\end{equation}
where:
\begin{itemize}
    \item Inequality (i) follows in a similar manner to \eqref{eq: b=2 rip}: 
\begin{equation}\label{eq: batch b rip}
\begin{aligned}
& \sup_{ |\mS|\le (1+2C_s)s}\left\| \left( \eta_{b_0+1} A_{b_0+1}(\bar \beta,  \hat \beta^{(b_0+1, t_0)}   ) + \eta_{b_0+1} \sum_{j=1}^{b_0} \nabla^2 f_j (\hat \beta^{(j)})  - I_p \right)_{\mS,\mS}  \right\|_2\\
\le & \sup_{\substack{u \in [0,1] \\ |\mS|\le (1+2C_s)s}} \left\| \left( \eta_{b_0+1} \nabla^2 f_{b_0+1} \left(\bar \beta+ u(\hat \beta^{(b_0+1, t_0)} - \bar \beta) \right) + \eta_{b_0+1} \sum_{j=1}^{b_0} \nabla^2 f_j (\hat \beta^{(j)})  - I_p \right)_{\mS,\mS}  \right\|_2 \\
\le & \max\left( \left|-1+  M \eta_{b_0+1}\sum_{j=1}^{b_0+1} n_j  \right|, \left|-1+  m \eta_{b_0+1}\sum_{j=1}^{b_0+1} n_j  \right|\right)\\
\le & 1-  \eta_{b_0+1} m N_{b_0+1}
~ \le ~ \frac{M}{m+M} \in (0,1),
\end{aligned}
\end{equation}
for arbitrary learning rate $ \eta_{b_0+1} \in \left[ \frac1{(m+M)N_{b_0+1} }, \frac2{(m+M)N_{b_0+1} } \right]$.

\item Inequality (ii) follows from \eqref{eq: res} (assumed to hold at the $t_0$-th iteration on the $(b_0+1)$-th batch).
Additionally, it follows similarly to \eqref{eq: b=2 lip} that:
\begin{equation}\label{eq: upsilon}
\begin{aligned}
& \eta_{b_0+1}\sum_{j=1}^{b_0} \left\| \Big\{ \nabla f_j (\bar \beta )  - \nabla f_j (\hat \beta^{(j)}) -\nabla^2 f_j (\hat \beta^{(j)})( \bar \beta - \hat\beta^{(j)}) \Big\}_{\mathcal S'} \right\|_2\\
\le & \eta_{b_0+1} \sum_{j=1}^{b_0}  L_j \left\| \bar \beta - \hat \beta^{(j)}\right\|_2^2 \\
~\le& \frac{2 C_e^2 C_\beta^2}{(m+M) N_{b_0+1}} \sum_{j=1}^{b_0}  \frac{L_j  s \alpha_j^2 }{N_j^2}\\
\le& \frac{\sqrt s}2 \cdot \lambda_\beta^{(b_0+1, \infty)},
\end{aligned}
\end{equation}
where the first inequality comes from Assumption \ref{assump: lipschitz}, and the last comes from Assumption \ref{assump: N1}.

\end{itemize}

On the other hand, based on the fact $C_e - 1 =\sqrt{C_s}$, we establish the following fact in a manner analogous to \eqref{eq: fact batch 2}:
\begin{equation}\label{eq: fact batch b}
\begin{aligned}
&\sqrt{C_s s} \cdot \lambda_{\beta}^{(b_0+1,t_0+1)} - \frac{M/\kappa}{m+M}  C_e \sqrt s \lambda^{(b_0+1, t_0+1)}\\
\ge & \sqrt{s} \lambda_{\beta}^{(b_0+1,t_0+1)}\\
\ge &\frac{\sqrt s}2 \lambda_\beta^{(b_0+1, \infty)}
+  2 \eta_{b_0+1} (C_e -1) \sqrt{ s} \alpha_{b_0 +1}\\
\ge &\frac{\sqrt s}2 \lambda_\beta^{(b_0+1, \infty)}
+  2 \eta_{b_0+1} \sqrt{C_s s} \left\| \sum_{k=1}^{b_0+1} \nabla f_k (\bar \beta) \right\|_{\infty}.
\end{aligned}
\end{equation}
A contradiction between \eqref{eq: sparse batch b} and \eqref{eq: fact batch b} arises, demonstrating that $|{\mathcal{S}}^{(b_0+1, t_0+1)} \backslash \bar{\mathcal{S}} | \le C_s s$. 

\paragraph{Error bound}
Similar to \eqref{eq: batch 2 error}, we have
\begin{equation}\label{eq: batch b error}
\begin{aligned}
&\left\| \hat \beta^{(b_0+1,t_0+1)}  - \bar \beta \right\|_2 \\
\le&  \sqrt{\sum_{i \in \mathcal{S}^{(b_0+1,t_0+1)} \cup \mathcal{\bar S} } \left(H^{(b_0+1,t_0+1)}_i - \bar \beta_i \right)^2 }  +  \sqrt{\sum_{i \in\mathcal{\bar S} } \left(H^{(b_0+1,t_0+1)}_i\right)^2\mathbf{1} \left\{ |H^{(b_0+1,t_0+1)}_i| <\lambda_\beta^{(b_0+1,t_0+1)} \right\} }\\
\le & \frac{M/\kappa}{m+M} C_e \cdot \sqrt s \lambda_\beta^{(b_0+1,t_0+1)}  + {C_e^2 C_{\beta}^2} \cdot \eta_{b_0+1}  \sum_{j=1}^{b_0} \frac{s L_j \alpha_j^2}{N_j^2}   \\
&+  \eta_{b_0+1} \sqrt{ C_s+1 } \cdot \sqrt s \left\| \sum_{j=1}^{b_0+1} \nabla f_j (\bar \beta) \right\|_{\infty} + \sqrt s \lambda_\beta^{(b_0+1,t_0+1)}  \\
\le & \left( 1+ \frac{M/\kappa}{m+M} C_e \right) \cdot \sqrt s \lambda_\beta^{(b_0+1,t_0+1)} +
\frac{\sqrt s}2  \lambda_\beta^{(b_0+1, \infty)} +  \eta_{b_0+1} \sqrt{(C_s+1) s}\left\| \sum_{k=1}^{b_0+1} \nabla f_k (\bar \beta) \right\|_{\infty} \\
\le & C_e \sqrt s \lambda_\beta^{(b_0+1,t_0+1)},
\end{aligned}
\end{equation}
where the second inequality follows the same scaling as in \eqref{eq: sparse batch b}, the third inequality follows \eqref{eq: upsilon}, and the last inequality relies on the relationship \eqref{eq: fact batch b} and $C_e - 1 = \sqrt{C_s} \ge 1$.
Therefore, we prove that \eqref{eq: res} still holds in the $(t_0+1)$-th iteration of the $(b_0+1)$-th batch learning.
We terminate the iterations at a sufficiently large index $t_{b_0+1}^*$ such that $\lambda_\beta^{(b_0+1, t_{b_0+1}^*)} = \lambda_\beta^{(b_0+1, \infty)}$, and define $\hat \beta^{(b_0+1)} := \hat \beta^{(b_0+1, t_{b_0+1}^*)}$ as the $(b_0+1)$-th batch estimator, which satisfies
\begin{equation*}
\| \hat \beta^{(b_0+1)} - \bar \beta\|_2 \le C_e \cdot \sqrt s \lambda_\beta^{(b_0+1,\infty)},
    \quad   \| \hat \beta^{(b_0+1)}_{\bar \mS^c } \|_0 \le C_s \cdot s.
\end{equation*}
By mathematical induction, it implies that \eqref{eq: res} holds for every $t\ge 0$ and $b \ge 1$, and therefore we complete the proof of Theorem \ref{th: minimax batch}.

\section{Proof of Theorem \ref{th: sharper batch}}
Based on the results derived from Theorem \ref{th: minimax batch}, we proceed to a more refined analysis of the accuracy of $\ell_2$ error rate.
For ease of display, we use the abbreviations
$$
\begin{aligned}
\Phi^{(b,t)} :=& \eta_b A_b(\bar \beta,  \hat \beta^{(b,t)} ) + \eta_b \sum_{j=1}^{b-1} \nabla^2 f_j (\hat \beta^{(j)})  - I_p  \in \mathbb R^{p\times p}, \\
\Upsilon^{(j)} := & \nabla f_j (\bar \beta )  - \nabla f_j (\hat \beta^{(j)}) -\nabla^2 f_j (\hat \beta^{(j)})( \bar \beta - \hat\beta^{(j)}) \in \mathbb R^p,\\
\Xi^{(b)} := &   \eta_b  \sum_{j=1}^b \nabla f_j (\bar \beta) \in \mathbb R^p.
\end{aligned}
$$
Recall 
\begin{equation}\label{eq: b*}
\begin{aligned}
  b_1^*:=& \inf\left\{ b \ge 1: ~\min_{i \in \bar S} |\bar \beta_i| \ge \left( \frac{2}{m+M} + \frac{m+M}2 C_\beta^2 \right) \cdot \frac{ \alpha_b }{N_b} \right\},\\
b_2^*:=& \inf\left\{ b \ge 2: ~C_e C_\beta^2  \sum_{j=1}^{b-1} \frac{ s L_j \alpha_j^2 }{N_j^2}  \le  \theta_b \right\},\\
b^* :=& b_1^* \vee b_2^*.
\end{aligned}
\end{equation}
And in the GLM setting, this can be achieved when $N_b \gtrsim \max\left\{ s^2 (\log^2N_b ) \cdot \log^3(p \vee N_b),~ \frac{\log p + \log b }{\min_{i \in \mS^*} |\beta_i^*|^2 }  \right\}$, see \eqref{eq: sharp glm sample}.
For the first $b^*-1$ batches ($1\le b \le b^*-1$), the results are the same as in Theorem \ref{th: minimax batch}:
\begin{equation*} 
\begin{aligned}
\| \hat \beta^{(b)} - \bar \beta\|_2 \le \underbrace{\frac{2(m+M) }{m+M - M/\kappa}}_{=: C_e >2} \cdot \underbrace{ C_\beta \frac{  \sqrt s  \alpha_b}{ N_b} }_{=: \sqrt s \lambda_\beta^{(b,\infty)}} ,
    \quad   \| \hat \beta^{(b,t)}_{\bar \mS^c } \|_0 \le   \underbrace{(C_e-1)^2}_{=: C_s >1}\cdot s.
\end{aligned}
\end{equation*}

Now consider the $\ell_2$ error at the batch $b^*$ and all subsequent batches. 
Fix an arbitrary batch $b \ge b^*$. Suppose that after $t_b$ iterations within this batch, the threshold $\lambda_\beta^{(b,t)}$ first attains the level $\lambda_\beta^{(b,\infty)}$. 
Thereafter, the algorithm keeps the threshold fixed (at $\lambda_\beta^{(b,\infty)}$) and performs several additional hard-thresholding iterations. 
We next refine the error bound attained after this stage.

For $t \ge t_b$, from \eqref{eq: batch b error}, we learn that the updated error can be decomposed as
\begin{equation*}
\begin{aligned}
&\| \hat \beta^{(b ,t+1)} - \bar \beta\|_2^2 \\
=& \sum_{i \in \mathcal{\bar S} } \left\{  \langle \Phi_{\cdot i}^{(b,t)},  \bar \beta - \hat \beta^{(b,t)}  \rangle + \eta_b \sum_{j=1}^{b-1} \Upsilon_i^{(j)}  - \Xi^{(b)}_i  -H^{(b,t+1)}_i \mathbf{1} \left( |H^{(b,t+1)}_i| <\lambda_\beta^{(b,\infty)} \right) 
  \right\}^2\\
&+\sum_{i \in \mS^{(b,t+1)}\setminus \mathcal{\bar S} } \left\{ \langle \Phi_{\cdot i}^{(b,t)},  \bar \beta - \hat \beta^{(b,t)}  \rangle + \eta_b \sum_{j=1}^{b-1} \Upsilon_i^{(j)}  - \Xi^{(b)}_i \right\}^2,
\end{aligned}
\end{equation*}
leading that
\begin{equation}\label{eq: refine b b}
\begin{aligned}
\left\| \hat \beta^{(b,t+1)}  - \bar \beta \right\|_2 
\le&  \sqrt{\sum_{i \in \mathcal{S}^{(b,t+1)} \cup \mathcal{\bar S} } \left( \langle \Phi_{\cdot i}^{(b,t)},  \bar \beta - \hat \beta^{(b,t)}  \rangle + \eta_b \sum_{j=1}^{b-1} \Upsilon_i^{(j)}\right)^2 }  + \|\Xi_{\mathcal{\bar S}}^{(b)} \|_2\\
& +  \sqrt{\sum_{i \in\mathcal{\bar S} } \left(H^{(b,t+1)}_i\right)^2\mathbf{1} \left\{ |H^{(b,t+1)}_i| <\lambda_\beta^{(b,\infty)} \right\}
+ \sum_{i \in \mS^{(b,t+1)}\setminus \mathcal{\bar S} } \left( \Xi^{(b)}_i \right)^2}\\
\le & \frac{M}{m+M}\| \hat \beta^{(b,t)} - \bar \beta \|_2 + \eta_b\sum_{j=1}^{b-1}\left\| \Upsilon^{(j)}_{\mathcal{S}^{(b,t+1)} \cup \mathcal{\bar S}} \right\|_2  + \|\Xi_{\mathcal{\bar S}}^{(b)} \|_2\\
& +  \sqrt{\sum_{i \in\mathcal{\bar S} } \left(H^{(b,t+1)}_i\right)^2\mathbf{1} \left\{ |H^{(b,t+1)}_i| <\lambda_\beta^{(b,\infty)} \right\}
+ \sum_{i \in \mS^{(b,t+1)}\setminus \mathcal{\bar S} } \left( \Xi^{(b)}_i \right)^2},
\end{aligned}
\end{equation}
where the last inequality follows from \eqref{eq: batch b rip} and the fact $\| \hat\beta^{(b,t)}_{\bar \mS^c } \|_0 \le C_s s$ for all batches $b \ge 1$ and iterations $t \ge 0$ (as proved in Theorem 1).
Now, by the definition of $b^*$, on the set $\mathcal{\bar S}$ we have
\begin{equation}\label{eq: sa1}
\begin{aligned}
&\sum_{i \in\mathcal{\bar S} } \left(H^{(b,t+1)}_i\right)^2\mathbf{1} \left\{ |H^{(b,t+1)}_i| <\lambda_\beta^{(b,\infty)} \right\}\\
\le&\sum_{i \in\mathcal{\bar S} } \left(\lambda_\beta^{(b,\infty)}\right)^2 \cdot \mathbf{1} \left\{ |\bar \beta_i| - \left|\langle \Phi_{\cdot i}^{(b,t)},  \bar \beta - \hat \beta^{(b,t)}  \rangle + \eta_b \sum_{j=1}^{b-1} \Upsilon_i^{(j)} \right| - |\Xi^{(b)}_i| <\lambda_\beta^{(b,\infty)} \right\}\\
\le&\sum_{i \in\mathcal{\bar S} } \left(\lambda_\beta^{(b,\infty)}\right)^2 \cdot \mathbf{1} \left\{ \left( \frac{(m+M)C_\beta}2-1\right)\cdot 
 \lambda_\beta^{(b,\infty)}  \le \left|\langle \Phi_{\cdot i}^{(b,t)},  \bar \beta - \hat \beta^{(b,t)}  \rangle + \eta_b \sum_{j=1}^{b-1} \Upsilon_i^{(j)} \right|  \right\}\\
 \le&  \frac1{\left( \frac{(m+M)C_\beta}2-1  \right)^2} \sum_{i \in\mathcal{\bar S} } \left|\langle \Phi_{\cdot i}^{(b,t)},  \bar \beta - \hat \beta^{(b,t)}  \rangle + \eta_b \sum_{j=1}^{b-1} \Upsilon_i^{(j)} \right|^2.
\end{aligned}
\end{equation}
Additionally, by the definition of $\mS^{(b,t+1)}$ and $\lambda_\beta^{(b, \infty)}$, on the set $\mS^{(b,t+1)}\setminus \mathcal{\bar S}$ we have
\begin{equation}\label{eq: sa2-1}
\begin{aligned}
& \left|\langle \Phi_{\cdot i}^{(b,t)},  \bar \beta - \hat \beta^{(b,t)}  \rangle + \eta_b \sum_{j=1}^{b-1} \Upsilon_i^{(j)}  - \Xi^{(b)}_i\right| \ge \lambda_\beta^{(b,\infty)}\\
\Rightarrow & \left|\langle \Phi_{\cdot i}^{(b,t)},  \bar \beta - \hat \beta^{(b,t)}  \rangle + \eta_b \sum_{j=1}^{b-1} \Upsilon_i^{(j)} \right| \ge \lambda_\beta^{(b,\infty)} - |\Xi^{(b)}_i| \ge \left(1- \frac2{(m+M)C_\beta} \right)\lambda_\beta^{(b,\infty)},
\end{aligned}
\end{equation}
leading that
\begin{equation}\label{eq: sa2}
\begin{aligned}
\sum_{i \in \mS^{(b,t+1)}\setminus \mathcal{\bar S} } \left( \Xi^{(b)}_i \right)^2 
\le&   \sum_{i \in \mS^{(b,t+1)}\setminus \mathcal{\bar S} } \left(  \frac2{m+M} \frac{\alpha_b}{ N_b} \right)^2 
= \sum_{i \in \mS^{(b,t+1)}\setminus \mathcal{\bar S} } \left(  \frac2{(m+M) C_\beta} \lambda_\beta^{(b, \infty)} \right)^2  \\
\le&\frac1{\left( \frac{(m+M)C_\beta}2-1  \right)^2} \sum_{i \in \mS^{(b,t+1)}\setminus \mathcal{\bar S} } \left|\langle \Phi_{\cdot i}^{(b,t)},  \bar \beta - \hat \beta^{(b,t)}  \rangle + \eta_b \sum_{j=1}^{b-1} \Upsilon_i^{(j)} \right|^2.
\end{aligned}
\end{equation}
Combining \eqref{eq: refine b b}, \eqref{eq: sa1}, and \eqref{eq: sa2}, we have
\begin{equation}\label{eq: up sharp} 
\begin{aligned}
&\left\| \hat \beta^{(b,t+1)}  - \bar \beta \right\|_2 \\
\le&  \frac{M}{m+M}\| \hat \beta^{(b,t)} - \bar \beta \|_2 + \eta_b\sum_{j=1}^{b-1}\left\| \Upsilon^{(j)}_{\mathcal{S}^{(b,t+1)} \cup \mathcal{\bar S}} \right\|_2  + \|\Xi_{\mathcal{\bar S}}^{(b)} \|_2\\
& + \frac1{ \frac{(m+M)C_\beta}2-1 } \sqrt{ \sum_{i \in \mS^{(b,t+1)}\cup\mathcal{\bar S} } \left|\langle \Phi_{\cdot i}^{(b,t)},  \bar \beta - \hat \beta^{(b,t)}  \rangle + \eta_b \sum_{j=1}^{b-1} \Upsilon_i^{(j)} \right|^2 } \\
\le& \frac{M}{m+M-(2/C_\beta)}\| \hat \beta^{(b,t)} - \bar \beta \|_2 + \frac{m+M}{m+M -2/C_\beta}\cdot \eta_b \sum_{j=1}^{b-1}\left\| \Upsilon^{(j)}_{\mathcal{S}^{(b,t+1)} \cup \mathcal{\bar S}} \right\|_2  + \|\Xi_{\mathcal{\bar S}}^{(b)} \|_2\\ 
\le& \frac{M}{m+M-(2/C_\beta)}\| \hat \beta^{(b,t)} - \bar \beta \|_2 
+ \frac{2}{m+M-(2/C_\beta)} \frac{C_e^2 C_\beta^2}{N_b}\sum_{j=1}^{b-1} \frac{sL_j \alpha_j^2 }{N_j^2}  +  \eta_b \theta_b \\
\le & \underbrace{\left( \frac{M}{m+M-(2/C_\beta)} \right)}_{=: \delta}\| \hat \beta^{(b,t)} - \bar \beta \|_2 
+\underbrace{ \left( \frac{2C_e}{m+M-(2/C_\beta)} + \frac2{m+M} \right)}_{=: C_r}  \frac{\theta_b}{N_b} ,
\end{aligned}
\end{equation}
where the third inequality follows from \eqref{eq: batch b rip} and \eqref{eq: upsilon}, and the last inequality follows from the definition of $b^*$ and $\theta_b$. 
By $C_\beta = \frac{8}{m+M} \cdot \frac{m+M+M/\kappa}{m+M - M/\kappa}$, we get $m >2 /C_\beta$ and consequently $ \delta\in(0,1)$. 
Therefore, after the $t_b$-th iteration, we perform additional $\Delta t$ further iterations with the fixed threshold $\lambda_\beta^{(b,\infty)}$, yielding 
\begin{equation}\label{eq: conv sharp}
    \left\| \hat \beta^{(b,t_b + \Delta t)}  - \bar \beta \right\|_2 \le \delta^{\Delta t} \left\| \hat \beta^{(b,t_b)}  - \bar \beta \right\|_2 + \frac{C_r}{1- \delta} \cdot  \frac{\theta_b}{N_b} .
\end{equation}
We then choose $\Delta t > \log_{1/\delta} \left(C_e \sqrt s N_b \lambda_\beta^{(b,\infty)} \right)$ and take $\hat \beta^{(b)} = \hat \beta^{(b, t_b + \Delta t)}$ as the output estimator for the renewable learning in the $b$-th batch.

In summary, for every $b \ge b^*$, the IHT-based algorithm demonstrates the refined error bound: 
$$
\left\| \hat \beta^{(b)}  - \bar \beta \right\|_2 \le \left(1+ \frac{C_r}{1- \delta} \right) \cdot \frac{\theta_b}{N_b}  .
$$

\section{Proof of GLM example}
Recall $N_b = \sum_{j=1}^b n_j$. Denote by $X_i \in \mathbb R^{1 \times p}$ the i-th observation of all $p$ covariates, and denote by $X_{i,S } \in \mathbb R^{1\times |S|}$ the i-th observation of the covariates indexed by set $S$.
Recall $f_j(\beta) = \sum_{i \in \mI_j}\Big\{g(X_i \beta) - Y_i X_i \beta \Big\}$, $\nabla f_j(\beta) = \sum_{i \in \mI_j} X_i^\top\Big\{g'(X_i \beta) - Y_i \Big\}$, and $\nabla^2 f_j(\beta) = \sum_{i \in \mI_j} X_i^\top X_i g''(X_i \beta) $.
Define $\xi_i = Y_i - g'(X_i \beta^*)$.
It remains to verify Assumptions \ref{assump: rip}, \ref{assump: lipschitz}, and \ref{assump: N1}, and to establish upper bounds on the gradient of the cumulative loss function at $\beta^*$. 

\subsection{Verify Assumption \ref{assump: rip}}
Define $C_s' : = \left( \frac{ 1+ 4K^2 U^2 (1+1/\kappa)}{1+ 4K^2 U^2 (1-1/\kappa)} \right)^2>1 $ is a universal constant depend solely on $K, U$, and $\kappa$, where we can choose the decay rate $\kappa \in \left( \frac{ 4K^2 U^2 }{1+ 4K^2 U^2  } ,~1\right)$. 
By Lemma \ref{le: rip and max} and Weyl's inequality, with probability at least $1 - 3\exp(-6 s \log p)$ we have that,
\begin{equation}\label{eq: weyl}
\sup_{j\in \mathbb N_+} ~\sup_{\mS\subset [p]: |\mS| \le (2C_s'+1) s}~ \max_{1\le k \le |\mS|}~ \left| \Lambda_k\left( \frac1{n_j} \sum_{i \in \mI_j} X_{i,\mS}^\top X_{i,\mS}\right) - \Lambda_k (\Sigma_{\mS, \mS}^{(j)})\right| \le \frac{1}{2K}.
\end{equation}
Following Assumption \ref{assump: g}, we get $g''(X_i \beta) \in [U^{-1}, U]$ for every $X_i \in \mathbb R^{1 \times p}$ and $\beta \in \mathbb B_0^p ( (1+2C_s')s)$, By applying \eqref{eq: weyl}, for every $j \in \mathbb N_+$ we get 
$$
\begin{aligned} 
\sup_{\beta \in \mathbb B_0^p ( (1+2C_s')s)}~ \sup_{\mS\subset [p]: |\mS| \le (2C_s'+1) s} \left\| \frac{1}{n_j} \sum_{i\in \mI_j } g''(X_i \beta ) X_{i, \mS}^\top X_{i, \mS}  \right\|_2
& \le U \left( \| \Sigma^{(j)}\|_2 + \frac1{2K} \right) \le 2KU,
\end{aligned}
$$
A similar technique leads to 
$$
\begin{aligned} 
\inf_{\beta \in \mathbb B_0^p ( (1+2C_s')s)}~ \inf_{\mS\subset [p]: |\mS| \le (2C_s'+1) s} \Lambda_{\min} \left( \frac{1}{n_j} \sum_{i\in \mI_j } g''(X_i \beta ) X_{i, \mS}^\top X_{i, \mS} \right)
& \ge U^{-1} \left( \Lambda_{\min} (\Sigma^{(j)}) - \frac1{2K} \right) \ge \frac{1}{2KU}.
\end{aligned}
$$
Therefore, with each sample size $n_j \ge 32 C_K (2C_s'+1)s \log p +  32 C_K \log j$ and under a probability greater than $1-3p^{-6s}$, we verify that Assumption \ref{assump: rip} holds in the GLM setting , i.e., each $f_j$ satisfies the condition RSS$\left( m=(2KU)^{-1}, ~M=2KU, ~(2C_s'+1)s\right)$.

\subsection{Verify Assumption \ref{assump: lipschitz}}
Define two universal constants $C_e' := 1+ \sqrt{C_s'} $ and $C_\beta' := \frac{8}{m+M} \sqrt{C_s'} = \frac{16KU}{1+4K^2 U^2} \sqrt{C_s'}$.
Under the GLM setting, we first establish the fact 
$$
\begin{aligned}
&\nabla f_j ( \beta^*) - \nabla f_j (\gamma) -  \nabla^2 f_j (\gamma)  ( \beta^* - \gamma)\\
=& \sum_{i \in \mI_j} X_i^\top \Big\{ g'(X_i \beta^*) - g'(X_i \gamma) - g''(X_i \gamma) \cdot X_i (\beta^* - \gamma) \Big\}\\
=& \sum_{i \in \mI_j} X_i^\top \Big\{ g''(X_i \tilde\beta_i)\cdot X_i(\beta^* -  \gamma) - g''(X_i \gamma) \cdot X_i (\beta^* - \gamma) \Big\},
\end{aligned}
$$
where, because \(g\) is twice differentiable (Assumption \ref{assump: g}), the mean value theorem guarantees for each \(i \in \mI_j\) the existence of
\(\tilde\beta_i = u_i\beta^*+(1-u_i)\gamma\) with \(u_i\in(0,1)\) (depending on \(X_i\beta^*\) and \(X_i\gamma\)) so that the last equality holds.
Then, for each $j \in \mathbb N_+$ and each $\gamma \in \mathbb B_0^p ((2C_s'+1)s)$ satisfying $\| \beta^* - \gamma\|_2 \le 4C_e' C_\beta' \sqrt{aKU} \cdot \sqrt{\frac{s \log(jp)}{N_j}}$, we have
$$
\begin{aligned}
&\sup_{\mathcal S \subset [p]: ~|\mathcal S| \le (2C_s'+1)s} \left\| \left\{ \nabla f_j ( \beta^*) - \nabla f_j (\gamma) -  \nabla^2 f_j (\gamma)  ( \beta^* - \gamma) \right\}_{\mathcal S} \right\|_2 \\
=& \sup_{\mathcal S \subset [p]: ~|\mathcal S| \le (2C_s'+1)s} \left\|  \sum_{i \in \mI_j} X_{i,\mathcal S}^\top \Big\{ g''(X_i \tilde\beta_i) - g''(X_i \gamma)  \Big\} \cdot X_i (\beta^* - \gamma) \right\|_2 \\
\le & \sup_{\mathcal S \subset [p]: ~|\mathcal S| \le (2C_s'+1)s}   \sum_{i \in \mI_j} \left\|X_{i,\mathcal S}^\top\right\|_2 \cdot \left| g''(X_i \tilde\beta_i) - g''(X_i \gamma)  \right| \cdot \left| X_i (\beta^* - \gamma) \right| \\
\overset{(i)}{\le} & \sup_{\mathcal S \subset [p]: ~|\mathcal S| \le (2C_s'+1)s} ~\max_{i' \in \mI_j} \left\|X_{i',\mathcal S}^\top\right\|_2 \cdot  \sum_{i \in \mI_j} C_g u_i  \left| X_i (\beta^* - \gamma) \right|^2\\
\overset{(ii)}<& C_g\sqrt{12 K (2C_s' +1)} \sqrt{s \log p + \log N_j}~  (\beta^* - \gamma)^\top  \left(\sum_{i \in \mI_j} X_i^\top X_i \right) (\beta^* - \gamma) \\
\le & 8 C_g K\sqrt{ K (2C_s' +1)} \cdot n_j\sqrt{s \log p + \log N_j} \cdot \left\|\beta^* - \gamma \right\|_2^2,
\end{aligned}
$$
where:
\begin{itemize}
\item Inequality (i) follows from the local Lipschitz property as introduced in Assumption \ref{assump: g} (with $C_\delta = 4C_e'C_\beta' \sqrt{aKU}$).
This inequality also follows from the relationships $X_i \tilde \beta_i -X_i \gamma = u_i X_i(\beta^* - \gamma)$ and $\tilde \beta_i - \beta^* = (u_i-1) (\beta^* - \gamma)$.

\item Inequality (ii) follows from inequality \eqref{eq: sparse Lj} in Lemma \ref{le: rip and max}, which holds for every $j \in \mathbb N_+$ with a probability greater than $1-2p^{-4s}$.

\item The last inequality follows from the relationship $\| \beta^* - \gamma\|_0 \le (2C_s' +1)s$, and from inequality \eqref{eq: sparse operator} in Lemma \ref{le: rip and max}, which requires each sample size satisfying $n_j \ge 32 C_K (2C_s'+1)s \log p +  32 C_K \log j$, and holds with a probability greater than $1-3p^{-6s}$.
\end{itemize}

Furthermore, by Lemma \ref{le: error}, we can take $\alpha_j = 4\sqrt{aUK } \cdot \sqrt{N_j\log(jp)}$ as an upper bound of $\| \sum_{k=1}^j \nabla f_k (\beta^*)\|_\infty$, which holds for every $j \in \mathbb N_+$ with a probability greater than $1-3p^{-2s}-3p^{-3}$. 
Then, by taking $\lambda_\beta^{(j,\infty)} = C_\beta' \frac{\alpha_j}{N_j} = 4C_\beta' \sqrt{aUK } \cdot \sqrt{\frac{\log(jp)}{N_j}}$, we get 
$$
C_e' \sqrt s \lambda_\beta^{(j,\infty)} = C_e' C_\beta' \frac{\sqrt s \alpha_j}{N_j} =4C_\beta' C_e' \sqrt{aKU } \cdot \sqrt{\frac{s\log(jp)}{N_j}}. 
$$
Therefore, with a probability greater than $1-O(p^{-2})$, we learn that each $f_j$ satisfies the condition RGS$\left(L_j,~\delta_j, ~  (C_s'+1) s \right)$ with 
\begin{equation}\label{eq: L delta GLM}
\begin{aligned}
L_j &= 8 C_g K\sqrt{ K (2C_s' +1)} \cdot n_j\sqrt{s \log p + \log N_j},\\
\quad \delta_j &= C_e' \sqrt s \lambda_\beta^{(j,\infty)}=4C_\beta' C_e' \sqrt{aKU } \cdot \sqrt{\frac{s\log(jp)}{N_j}}.
\end{aligned}
\end{equation}

\subsection{Verify Assumption \ref{assump: N1}}
Finally, we verify Assumption \ref{assump: N1} with a proper constant $C_p'$.
For every $b\ge 1$, we establish that
\begin{equation}\label{eq: s L alpha}
\begin{aligned}
\sum_{j=1}^b \frac{sL_j}{N_j^2} \cdot \alpha_j^2
=&\underbrace{ 128a C_g  U K^2 \sqrt{ K (2C_s' +1)} }_{=: ~C_\alpha}  \cdot \sum_{j=1}^b \frac{s n_j \sqrt{s\log p + \log N_j} \cdot ( \log p + \log j)}{N_j}\\
\le& C_\alpha \sum_{j=1}^b \frac{ n_j (s\log p + s \log N_j )\cdot \sqrt{ s\log p + s\log j} }{N_j}\\
\le & C_\alpha \sqrt{ s\log p + s\log b} \cdot (s\log p + s \log N_b) \cdot \sum_{j=1}^b \frac{n_j}{N_j} \\
\le &   C_\alpha \sqrt{ s\log p + s\log b} \cdot (s\log p + s \log N_b) \cdot \left( 1+ \log (N_b/N_1) \right),
\end{aligned}
\end{equation}
where the last inequality follows from $x/y< \log \frac{y}{y-x}$ for every $0<x<y$.
Therefore, to verify Assumption \ref{assump: N1}, we only need to show that 
$$
\frac{C_p C_\alpha}{4\sqrt{aUK }}  \cdot (s\log p + s \log N_b) \cdot \log N_b \le \sqrt{N_{b} }
$$
holds for every $b \ge 1$, where $C_p =  \frac{4(C_e')^2 C_\beta'}{m+M} =  \frac{8KU(C_e')^2 C_\beta'}{1+ 4K^2 U^2}$ as required by Proposition 1 and Theorem 1.

Define the function $h(x):= \frac{\sqrt x}{\log x}$ and $h$ is monotonically increasing in the case $x \ge e^2$. Therefore, when the initial sample size satisfies $\sqrt N_1 \ge\max\left( \frac{C_p C_\alpha }{2 \sqrt{aKU}} s (\log p) (\log N_1),~e  \right)$, we conclude that
$$
\frac12 \cdot \frac{\sqrt{N_b}}{\log N_b} \ge \frac12 \cdot \frac{\sqrt{N_1}}{\log N_1} \ge \frac{C_p C_\alpha }{4 \sqrt{aKU}}  s \log p.
$$
Similarly, the function $x \mapsto \frac{\sqrt{x}}{(\log x)^2}$ is monotonically increasing in the case $x \ge e^4$, and thus under the setting $\sqrt N_1 \ge\max\left( \frac{C_p C_\alpha }{2 \sqrt{aKU}} s (\log N_1)^2,~e^2  \right)$, we have
$$
\frac12 \cdot \frac{\sqrt{N_b}}{(\log N_b)^2} \ge \frac12 \cdot \frac{\sqrt{N_1}}{( \log N_1)^2} \ge \frac{C_p C_\alpha }{4 \sqrt{aKU}}  s .
$$
Therefore, with the initial sample size satisfying
$$
n_1 \ge \max\left( \frac{(C_p C_\alpha)^2 }{4aKU} s^2 (\log n_1)^4,~ \frac{(C_p C_\alpha)^2 }{4aKU} s^2 (\log p)^2 (\log n_1)^2,~ e^4  \right)
\asymp s^2 (\log^2n_1) \log^2(p+n_1),
$$
we establish that Assumption \ref{assump: N1} holds.

\subsection{Proof of Theorem \ref{th: glm}}\label{sec: proof t3}
Based on the verification above, we establish that under Assumptions \ref{assump: cov} and \ref{assump: g} (with constants \(U\), \(2C_s' + 1\), \(C_\delta = 4C_e' C_\beta' \sqrt{aKU}\), and \(C_g\)), with a probability greater than \(1 - 11p^{-2}\), Assumptions \ref{assump: rip}-\ref{assump: N1} hold simultaneously for every batch \(j \in \mathbb{N}_+\).
Specifically, the corresponding parameters in these assumptions are given by \(M = 2KU\), \(m = (2KU)^{-1}\), \(C_p = \frac{8KU (C_e')^2 C_\beta'}{1 + 4K^2 U^2}\), and \(\alpha_j = 4\sqrt{aUK} \cdot \sqrt{N_j \log(jp)}\). The smoothness parameters \(L_j\) and \(\delta_j\) are defined as in \eqref{eq: L delta GLM}.

Then, suppose the batch sample sizes satisfy $n_1 \ge \max\left( \frac{(C_p C_\alpha)^2 }{4aKU} s^2 (\log^2 n_1) \log^2(p+n_1), ~ e^4  \right) $ for the initial batch and $n_j \ge 32 C_K (2C_s' + 1) (s \log p + \log j)$ for all $j \ge 2$.
Under these conditions, by setting the learning rate $\eta_j \in \left[ \frac{2KU}{(1+4K^2 U^2) N_j},~\frac{4KU}{(1+4K^2 U^2) N_j}\right]$, the decay rate $\kappa \in \left( \frac{ 4K^2 U^2 }{1+ 4K^2 U^2  } ,~1\right)$, and the regularization thresholds such that
\begin{equation}\label{eq: glm lambda}
     \lambda_\beta^{(j,0)} \ge  \frac{1}{2\sqrt s}\|  \beta^* \|_2 ,
    \quad \lambda_\beta^{(j,\infty)}  = 4C_\beta' \sqrt{aKU } \cdot \sqrt{\frac{\log(jp)}{N_j}},
\end{equation}
following the results in Theorem 1, we establish the following joint estimation bounds for all $j \in \mathbb{N}_+$:
$$
\| \hat \beta^{(j)} - \beta^* \|_2 \le 4\sqrt{aUK}C_e' C_\beta' \cdot \sqrt{\frac{s(\log p + \log j)}{N_j}} ,
\quad \| \hat \beta^{(j)} \|_0 \le  (1+C_s')\cdot s,
$$
with probability exceeding $1-11p^{-2}$.
Therefore, we complete the proof of Theorem \ref{th: glm}.

\subsection{Proof of Theorem \ref{th: glm sharper}}
By Theorem \ref{th: sharper batch}, it suffices to clarify the $b^*$, specifically defined through \eqref{eq: b*}.
In the GLM setting, by Lemma \ref{le: error}, we can take $\alpha_b = 4\sqrt{aKU} \sqrt{N_b \log(bp)}$ and $\theta_b = \sqrt{6aKU} \sqrt{N_b (s+ \log(2b^2/ \varrho ) )}$, which leads that 
$$
\begin{aligned}
\left\|\sum_{k=1}^b \nabla f_k(\beta^*) \right\|_\infty  &= \left\| \sum_{ i \in [N_b]} X_{i}^\top \xi_i \right\|_\infty \le \alpha_b,\\
\left\|\sum_{k=1}^b \left\{ \nabla f_k(\beta^*) \right\}_{\mathcal S^*} \right\|_2  &= \left\| \sum_{ i \in [N_b]} X_{i, \mathcal S^*}^\top \xi_i \right\|_2 \le \theta_b,\\
\end{aligned}
$$
simultaneously hold for every $b \ge 1$ with a probability greater than $1-\varrho- 6p^{-2}$.
Then, following the parameters introduced in Section \ref{sec: proof t3}, we first rewrite $b_1^*$ as 
\begin{equation}\label{eq: b1 glm}
    b_{1,GLM}^*:=  \inf\left\{ b \ge 1: ~ \sqrt{N_b} \ge \frac{16\sqrt a (KU)^{3/2} \left( 1 + 16 C_s' \right)}{1+4K^2 U^2} \cdot \frac{ \sqrt{\log p + \log b}}{\min_{i \in\mS^*} | \beta_i^* | } \right\}.
\end{equation} 

By the inequality \eqref{eq: s L alpha}, we have
$$
\sum_{j=1}^b \frac{sL_j}{N_j^2} \cdot \alpha_j^2
\le \sqrt2  C_\alpha \left\{ (s\log p)^{3/2} \cdot \log N_b + s^{3/2} \cdot \log^{5/2} N_b  \right\},
$$
therefore, it suffices to find an batch index ${b_{2,GLM}^*}$ such that
\begin{equation}\label{eq: b2 glm}
\sqrt2 C_e' (C_\beta')^2   C_\alpha \left\{ (s\log p)^{3/2} \cdot \log N_{b} + s^{3/2} \cdot \log^{5/2} N_{b}  \right\}
\le  \sqrt{6aKU} \sqrt{s N_b }, \quad\text{for every } b \ge {b_{2,GLM}^*}.
\end{equation}

Since $h(x) = \frac{\sqrt x}{\log x}$ is monotonically increasing in the case $x \ge e^2$, there must exist an index $b'\ge 1$ such that 
$$
\frac{\sqrt{N_b}}{\log N_b} \ge \frac{2\sqrt2 C_e' (C_\beta')^2   C_\alpha}{\sqrt{6aKU}} s  \log^{3/2} p, \quad\text{for every } b \ge b'.
$$
Similarly, there must exist an index $b''\ge 1$ such that 
$$
\frac{\sqrt{N_b}}{\log^{5/2} N_b} \ge \frac{2\sqrt2 C_e' (C_\beta')^2   C_\alpha}{\sqrt{6aKU}} s , \quad\text{for every } b \ge b''.
$$
Taking $b_{2,GLM}^* = b' \vee b''$ leads that \eqref{eq: b2 glm} holds. 

Consequently, for any batch index $b$ satisfying the sample size condition
\begin{equation}\label{eq: sharp glm sample}
 N_b \ge C_{4} \max\left\{ s^2 (\log^2  N_b) \cdot  \log^3 (p \vee N_b)  ,~ \frac{ \log p + \log b}{\min_{i \in\mS^*} | \beta_i^* |^2 } \right\},
\end{equation}
the following refined $\ell_2$ error bound consistently holds with probability at least $1 - \varrho - 11p^{-2}$:
\begin{equation}\label{eq: sharp glm}
\| \hat \beta^{(b)} - \beta^*\|_2 \le C_{sharp}' \cdot \sqrt{\frac{s +\log (2b^2 / \varrho)}{N_b}} ,
\end{equation}
where $C_4$ and $C_{sharp}'$ are universal constants depending solely on the parameters $a, K, U, C_g,$ and $\kappa$.

\paragraph{Almost full recovery}
We next consider the support recovery performance under the setting $s \succ 1$.
From Theorem \ref{th: glm} we learn that the estimator $\hat \beta^{(b)}$ is $(1+C_s') s$-sparse. 
For any batch \(b\) satisfying \eqref{eq: sharp glm sample}, within this batch, suppose we first perform \(t_b\) iterations to decay the threshold parameter from \(\lambda_\beta^{(b,0)}\) to \(\lambda_\beta^{(b,\infty)}\). Then, for any \(t \ge t_b\), following the proof of Theorem \ref{th: sharper batch}, we obtain
$$
\begin{aligned}
&|\mathcal S^* \setminus \mathcal S^{(b,t+1)}| + | \mathcal S^{(b,t+1)} \setminus \mathcal S^* | \\
\le & \sum_{i \in \mS^*} \mathbf{1} \left( |H^{(b,t+1)}_i| <\lambda_\beta^{(b,\infty)} \right)  + \sum_{i \in \mathcal S^{(b,t+1)} \setminus \mathcal S^*} \mathbf{1} \left( |H^{(b,t+1)}_i| \ge \lambda_\beta^{(b,\infty)} \right) \\
\le & \frac1{\left( \frac{(m+M)C_\beta'}2-1\right)^2} \cdot \frac1
{\left(\lambda_\beta^{(b,\infty)}\right)^2 } \sum_{i \in \mS^*}  \left|\langle \Phi_{\cdot i}^{(b,t)}, \beta^* - \hat \beta^{(b,t)}  \rangle + \eta_b \sum_{j=1}^{b-1} \Upsilon_i^{(j)} \right|^2 \\ 
&~+\frac1{\left( 1-\frac2{(m+M)C_\beta'}\right)^2} \cdot \frac1
{\left(\lambda_\beta^{(b,\infty)}\right)^2 } \sum_{i \in \mathcal S^{(b,t+1)} \setminus \mathcal S^*}  \left|\langle \Phi_{\cdot i}^{(b,t)}, \beta^* - \hat \beta^{(b,t)}  \rangle + \eta_b \sum_{j=1}^{b-1} \Upsilon_i^{(j)} \right|^2 \\
\le&\frac2{\left( 1-\frac2{(m+M)C_\beta'}\right)^2 \cdot\left(\lambda_\beta^{(b,\infty)}\right)^2} \cdot
\left\{ \sum_{i \in \mathcal S^{(b,t+1)} \bigcup\mathcal S^* } \left\langle \Phi_{\cdot i}^{(b,t)}, \beta^* - \hat \beta^{(b,t)} \right \rangle^2 
+ \left( \eta_b \sum_{j=1}^{b-1} \left\| \Upsilon_{\mathcal S^{(b,t+1)} \bigcup\mathcal S^*}^{(j)} \right\|_2 \right)^2 \right\}
\end{aligned}
$$
where the second inequality follows from inequalities \eqref{eq: sa1} and \eqref{eq: sa2-1}, with taking $m = (2KU)^{-1}$ and $M = 2KU$.
Therefore, by following \eqref{eq: up sharp} and \eqref{eq: glm lambda}, with a constant $C_{rec}$ (solely depending on $K, U, a, \kappa$, and $C_g$), under a probability greater than $1- 11p^{-2}$,
$$
|\mathcal S^* \setminus \mathcal S^{(b,t+1)}| + | \mathcal S^{(b,t+1)} \setminus \mathcal S^* | 
\le  \frac{C_{rec}N_b}{\log p + \log b} \cdot \left\{ \left\| \beta^* - \hat \beta^{(b,t)} \right \|_2^2 
+ \left( \frac{\theta_b}{N_b}\right)^2 \right\}.
$$
Furthermore, by \eqref{eq: conv sharp}, with sufficiently large $t \ge t_b + C \log N_b$, we can get the refined estimation rate as introduced in \eqref{eq: sharp glm}.
We take $\varrho = p^{-2}$ and $\theta_b = \sqrt{6aKU} \sqrt{N_b (s+ \log(2b^2 p^2  ) )}$. Then by following \eqref{eq: sharp glm sample} and \eqref{eq: sharp glm}, with a probability greater than $1- 12p^{-2}$ we obtain
$$
\begin{aligned}
|\mathcal S^* \setminus \mathcal S^{(b,t+1)}| + | \mathcal S^{(b,t+1)} \setminus \mathcal S^* | 
\le&  \frac{C_{rec}N_b}{\log p + \log b} \cdot \left( C_{sharp}'^2 + 6aKU\right) \frac{s + \log(2b^2 p^2)}{N_b}  \\
\asymp & \frac{s + \log b +\log p}{\log p + \log b} \\
= & \frac{s }{\log p + \log b} + 1\\
\prec & s,
\end{aligned}
$$
where the last inequality follows from $s \succ 1$.
Therefore, we prove the almost full support recovery, which completes the proof of Theorem \ref{th: glm sharper}.

\section{Technical lemmas}
Recall we denote by $X_i \in \mathbb R^{1 \times p}$ the i-th observation of all $p$ covariates, and denote by $X_{i,S } \in \mathbb R^{1\times |S|}$ the i-th observation of the covariates indexed by set $S$. 
\begin{lemma}[Designs]\label{le: rip and max}
Suppose Assumption \ref{assump: cov} holds. For arbitrary two constants $C_a, C_b \ge 1$, with the sample size in each batch satisfying $n_j \ge 32 C_K ( C_a s \log p + \log j )$ (where $C_K>0$ is a constant depending solely on $K$), we have
    \begin{equation}\label{eq: sparse operator}
    \mathbf P  \left\{ \sup_{j \in \mathbb N_+} \sup_{S \subset[p]: |S| \le C_{a} s }  \left\| \frac1{n_j} \sum_{i \in \mI_j} X_{i,S}^\top X_{i,S} - \Sigma_{SS}^{(j)} \right\|_2 >\frac{1}{2K} \right\} \le 3 e^{-2 C_a s \log p } ,
\end{equation} 
and
\begin{equation}\label{eq: sparse Lj}
\mathbf P  \left\{ \sup_{j \in \mathbb N_+}~ \sup_{S \subset[p]: |S| \le C_b s} ~\max_{i \in \cup_{k\in[j]} \mI_k}\left(\left\| X_{i,S} \right\|_2 -  \sqrt{12K}  \sqrt{ C_b s\log  p  + \log N_j } \right)>0 \right\} \le 2e^{-2C_b s \log  p } .
\end{equation}
\end{lemma}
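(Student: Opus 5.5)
The plan is to prove both bounds by a uniform sub-Gaussian concentration inequality on each batch, followed by a union bound over the supports $S$ and then over the batches $j\in\mathbb N_+$. The sample-size condition $n_j \ge 32C_K(C_a s\log p+\log j)$ is exactly what makes the per-batch failure probabilities summable in $j$.

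\textbf{Step 1: the operator-norm bound \eqref{eq: sparse operator}.} Fix $j$ and a set $S$ with $|S|=k\le C_a s$. Write $X_{i,S}=Z_i(\Sigma^{(j)})^{1/2}_{\cdot,S}$, so the rows are independent, centered and sub-Gaussian with $\psi_2$-norm at most $C\sqrt K$. I would use a $1/4$-net $\mathcal N$ of the unit sphere in $\mathbb R^{k}$, which satisfies $|\mathcal N|\le 9^{k}$. For each $v\in\mathcal N$, the variable $(X_{i,S}v)^2$ is sub-exponential with norm $\lesssim K$. Bernstein's inequality then gives
$$\mathbf P\Big(\Big|\tfrac1{n_j}\sum_{i\in\mathcal I_j} (X_{i,S}v)^2-v^\top\Sigma^{(j)}_{SS}v\Big|>\tfrac1{4K}\Big)\le 2\exp(-c n_j/K^4).$$
The standard net argument upgrades this to operator norm $>1/(2K)$. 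A union bound over the $9^k\binom{p}{k}\le \exp(C k\log p)$ pairs $(S,v)$ and over $k\le C_a s$ gives a per-batch failure probability of at most $2\exp(C C_a s\log p - c n_j/K^4)$.

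Choosing $C_K\gtrsim K^4$ and using the sample-size condition, this is at most $2\exp(-4C_a s\log p-2\log j)$. Summing over $j$ with $\sum_j j^{-2}\le 2$ yields the bound $\le 3e^{-2C_a s\log p}$.

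\textbf{Step 2: the row-norm bound \eqref{eq: sparse Lj}.} Here no sample-size condition is needed. For a fixed row $i$ and a fixed $S$ with $|S|\le C_b s$, I would write $\|X_{i,S}\|_2=\sup_{v\in\mathcal N}X_{i,S}v$ up to a factor of $2$. Each $X_{i,S}v$ is $\sqrt K$-sub-Gaussian, so
$$\mathbf P\big(\|X_{i,S}\|_2>\sqrt{12K}\,x\big)\le 9^{C_b s}\cdot 2e^{-3x^2}.$$
I would take $x^2=C_b s\log p+\log N_j$, union over the $\binom{p}{\le C_b s}\le e^{C_b s\log p}$ sets $S$, and union over the $N_j$ rows $i\le N_j$. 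For batch $j$ this gives a failure probability of about $N_j e^{-3\log N_j}e^{-2C_b s\log p}$, after absorbing $9^{C_b s}$ using $\log p\ge \log 9$ or by adjusting constants.

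The supremum over $j$ needs care, because the event for batch $j$ involves all rows $i\le N_j$ and therefore the events are nested. To handle this, I would organize the union by row instead. Row $i$ belongs to batch $j(i)$, and for every $j\ge j(i)$ the threshold $\log N_j\ge \log i$. So it suffices to control, for each row $i$, the event $\|X_{i,S}\|_2>\sqrt{12K}\sqrt{C_b s\log p+\log i}$. This has probability $\lesssim i^{-3}e^{-2C_b s\log p}$, and summing over $i$ gives the $2e^{-2C_b s\log p}$ bound.

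\textbf{Main obstacle.} The main obstacle is the bookkeeping of constants, not any new idea. In Step 1, the Bernstein exponent $c n_j/K^4$ must dominate both the entropy $C C_a s\log p$ and the extra $2\log j$, with the specific factor $32C_K$. In Step 2, the net factor $9^{C_b s}$ must be absorbed into $e^{C_b s\log p}$ given the constant $12K$. If these do not close as stated, I would inflate $C_K$, or use a $1/2$-net with the bound $\|A\|\le 2\sup_{v\in\mathcal N}|v^\top Av|$, which is standard and changes only constants.
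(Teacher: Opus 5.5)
\textbf{Step 1 and the union bookkeeping.} Your Step 1 is essentially the paper's argument. The paper fixes $(j,S)$, invokes Vershynin's Remark 5.40 with $u_j=3C_as\log p+4\log j$, and then unions over $|S|=C_as$ and $j$. The net-plus-Bernstein computation you write out is exactly the proof behind that remark, and since $C_K$ is yours to choose, the constants close. One small slip: the bound $\|A\|_2\le 2\sup_{v\in\mathcal N}|v^\top Av|$ needs a $1/4$-net, because the general factor is $(1-2\epsilon)^{-1}$, so your ``$1/2$-net'' fallback is vacuous. Your main text uses a $1/4$-net and is fine. Your row-wise reorganization in Step 2 is valid but unnecessary. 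The paper simply unions over $j$, $S$ and all $i\le N_j$, and absorbs the $N_j$-fold multiplicity through the $4\log N_j$ term in $u$, so nesting causes no difficulty.

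\textbf{Where the plan falls short of the statement.} The issue is your tail bound for $\|X_{i,S}\|_2$ in \eqref{eq: sparse Lj}, where the constants $\sqrt{12K}$ and $2e^{-2C_bs\log p}$ are fixed. Neither fallback you offer helps here: $C_K$ does not enter \eqref{eq: sparse Lj}, and a coarser net only worsens the exponent. A net costs a multiplicative $e^{cC_bs}$ in probability. With a $1/4$-net the approximation factor is $4/3$, not $2$, and with $k=C_bs$ one gets $9^{k}e^{-27x^2/8}$. After the union over $\binom{p}{k}\le p^k$ supports, with $x^2=k\log p+\log i$, the surviving factor is $(9p^{-3/8})^k p^{-2k}i^{-27/8}$. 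This is $\lesssim p^{-2k}$ uniformly in $k$ only when $p\gtrsim 9^{8/3}\approx 350$. Your suggested absorption of $9^{k}$ via $\log p\ge\log 9$, applied to your $9^kp^{-2k}$, leaves only $p^{-k}$, i.e.\ $e^{-C_bs\log p}$. So your plan proves \eqref{eq: sparse Lj} only with $12$ replaced by a larger absolute constant. That is harmless downstream, where it only changes $C_\alpha$, but it is not the lemma as stated.

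\textbf{How the paper closes it.} The paper avoids the entropy cost by using the quadratic-form tail bound of Hsu, Kakade and Zhang (their Theorem 2.1):
\[
\|X_{i,S}\|_2^2\le \mathrm{tr}(\Sigma_{SS})+2\sqrt{\mathrm{tr}(\Sigma_{SS}^2)\,u}+2\|\Sigma_{SS}\|_2\,u\le 2KC_bs+3Ku
\]
with probability at least $1-e^{-u}$. Here the dimension enters additively in the threshold through the trace, not multiplicatively in the probability. Taking $u=3C_bs\log p+4\log N_j$ and using $2\le 3\log p$ for $p\ge 2$ gives exactly the threshold $12K(C_bs\log p+\log N_j)$. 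The union bound then yields $p^{-2C_bs}\sum_j N_j^{-3}\le 2p^{-2C_bs}$.
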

\begin{proof}[Proof of Lemma \ref{le: rip and max}]
For a fixed batch index $j$ and a fixed set $S$ satisfying $S\subset [p]$ and $|S| = C_a s$, by Remark 5.40 in \cite{vershynin2010introduction}, we obtain
\begin{equation}\label{eq: spec norm}
\mathbf P \left( \left\| \frac1{n_j} \sum_{i \in \mI_j} X_{i,S}^\top X_{i,S} - \Sigma_{SS}^{(j)} \right\|_2 > \max(\iota_j , \iota_j^2)  \right) \le 2e^{- u_j},
\end{equation}
where $\iota_j = \left(C \vee \frac1{\sqrt c}\right) \cdot \left(\sqrt{\frac{C_1 s}{n_j} } +\sqrt{\frac{u_j}{n_j} } \right)$ and $C,c$ are two constants depending only on $ \| \Sigma^{(j)}\|_2$.
By taking $u_j =3C_a s\log p + 4\log j $ and assuming $n_j \ge 32K^2 \left(C^2 \vee \frac1{ c}\right)( C_a s \log p + \log j )$, we get $\iota_j \vee \iota_j^2 = \iota_j \le 1/(2K) $, which yields that
\begin{equation}\label{eq: X Sigma}
\begin{aligned}
&\mathbf P \left( \sup_{j\in \mathbb N_+}\sup_{S\subset [p]:~ |S| \le C_a s} \left\| \frac1{n_j} \sum_{i \in \mI_j} X_{i,S}^\top X_{i,S} - \Sigma_{SS}^{(j)} \right\|_2 > \frac1{2K} \right)\\
\le&  \sum_{j \ge 1}  \sum_{ S\subset [p]:~ |S| = C_a s} \mathbf P \left( \left\| \frac1{n_j} \sum_{i \in \mI_j} X_{i,S}^\top X_{i,S} - \Sigma_{SS}^{(j)} \right\|_2 > \max(\iota_j , \iota_j^2)  \right)\\
\le& \sum_{j \ge 1} 2\binom p{ C_a s} \exp\left(- 3C_a s\log p - 4 \log j\right)\\
\le&  2 \exp\left( -2C_a s\log p \right) \sum_{j \ge 1}\frac{1}{j^4}\\
\le & 3 \exp\left( -2C_a s\log p \right),
\end{aligned}
\end{equation}
where the last inequality follows from $\sum_{j\ge 1} j^{-4} < 3/2$. Hence, we prove \eqref{eq: sparse operator}.

Similarly, for a fixed $j \in \mathbb N_+$ and an observation index $i \in [N_j] = \bigcup_{k=1}^j \mathcal{I}_k$, let $b(i)$ denote the batch membership of observation $i$, such that $b(i) = k$ if $i \in \mathcal{I}_k$.
Then for a fixed set $S$ satisfying $S\subset [p]$ and $|S| = C_b s$, by Assumption 4 and Theorem 2.1 in \citet{hsu2012tail}, we obtain
\begin{equation} 
\mathbf P \left( \left\| X_{i,S}\right\|_2^2 > tr(\Sigma_{SS}^{b(i)}) +  2 \sqrt{tr \left(\Sigma_{SS}^{b(i)} \Sigma_{SS}^{b(i)} \right) u_i } + 2 \| \Sigma_{SS}^{b(i)} \|_2 u_i  \right) \le e^{- u_i}.
\end{equation}
Since $ \sup_{j \ge 1} \| \Sigma^{(j)}\|_2 \le K$, we further have
$$
tr(\Sigma_{SS}^{b(i)}) +  2 \sqrt{tr \left(\Sigma_{SS}^{b(i)} \Sigma_{SS}^{b(i)} \right) u_i } + 2 \| \Sigma_{SS}^{b(i)} \|_2 u_i
\le2 C_b Ks + 3K u_i.
$$
Then by taking $u_i = 3C_b s \log p + 4 \log N_j$, we further have
$$
\begin{aligned}
& \mathbf P  \left\{ \sup_{j \in \mathbb N_+} \sup_{S \subset[p]: |S| \le C_b s} \max_{i \in [N_j]} \left(\left\| X_{i,S} \right\|_2 - \sqrt{12K}  \sqrt{ C_b s\log  p  + \log N_j } \right)>0 \right\} \\
\le&  \sum_{j \ge 1} ~ \sum_{ S\subset [p]:~ |S| = C_b s} ~\sum_{i \in [N_j] } \mathbf P \left( \left\| X_{i,S} \right\|_2^2 \ge 12 K C_b s \log p + 12 K \log N_j \right)\\
\le& \sum_{j \ge 1} \binom p{ C_b s} N_j  \exp\left(- 3C_b s\log p - 4 \log N_j\right)\\
\le& \exp\left( -2C_b s\log p \right) \sum_{j \ge 1}\frac{1}{N_j^3}\\
\le& 2\exp\left( -2C_b s\log p \right),
\end{aligned}
$$
where the last inequality follows from $\sum_{j\ge 1} N_j^{-3} < \sum_{j\ge 1} j^{-3} < 2$. Hence, we prove \eqref{eq: sparse Lj} and complete the proof of Lemma \ref{le: rip and max}.
\end{proof}

\begin{lemma}[Sub-Gaussian errors]\label{lemma: subgaussian}
Suppose that the GLM setting holds and define $\xi_i : =Y_i - g'(X_i \beta^*)$. Then based on Assumption \ref{assump: g}, each $\xi_i$ is sub-Gaussian with zero mean and sub-Gaussian parameter $\sqrt{aU}$, that is,
\begin{equation*}
    \mathbf{P}\left( |Y_i- b'(\zeta_i^*)| \ge t\right)
    \le 2 \exp\left( -\frac{t^2}{2aU} \right), ~\text{for all}\ t \ge 0,~ i \ge 1.
\end{equation*}
\end{lemma}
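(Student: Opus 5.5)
The approach is to reduce the claim to a sub-Gaussian tail bound for each $\xi_i$ conditional on $X_i$, using the exponential-family structure. Conditional on $X_i$, write $\zeta_i^* = X_i\beta^*$ and $b'(\zeta_i^*) = g'(X_i\beta^*)$, so that $\xi_i = Y_i - b'(\zeta_i^*)$. The zero mean is the standard score identity. Differentiating $\int p_{\zeta}(y)\,\mathrm d y = 1$ in $\zeta$ at $\zeta_i^*$ gives $\mathbf E[Y_i \mid X_i] = g'(\zeta_i^*)$.

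For the tail, I would compute the conditional moment generating function directly from the density. Normalization of $p_\zeta$ for every natural parameter $\zeta$ in the domain gives
\begin{equation*}
\mathbf E\left[ e^{\lambda \xi_i} \mid X_i \right] = \exp\left( \frac{g(\zeta_i^* + a\lambda) - g(\zeta_i^*) - a\lambda\, g'(\zeta_i^*)}{a} \right), \quad \lambda \in \mathbb R.
\end{equation*}
Next I bound the exponent using a second-order Taylor expansion with $\sup g'' \le U$. By Assumption \ref{assump: g}, and as used in the verification of Assumption \ref{assump: rip}, we have $g'' \in [U^{-1},U]$ on the relevant range. The exponent is then at most $\frac{1}{a}\cdot\frac{U}{2}(a\lambda)^2 = \frac{aU\lambda^2}{2}$. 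Hence $\xi_i$ is conditionally $\sqrt{aU}$-sub-Gaussian.

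The tail bound follows from Chernoff's inequality. Optimizing $\lambda = t/(aU)$ gives $\mathbf P(\xi_i \ge t \mid X_i) \le \exp(-t^2/(2aU))$, and the same argument with $-\lambda$ handles the lower tail. Summing the two tails gives the factor 2, and integrating over $X_i$ removes the conditioning.

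The main obstacle is justifying the curvature bound $g''\le U$ along the whole segment between $\zeta_i^*$ and $\zeta_i^*+a\lambda$ for all $\lambda$. Assumption \ref{assump: g} states its quadratic bounds only for arguments of the form $X_i\beta$ with $\beta$ sparse. I would argue that, as $X_i$ and $\beta$ range over sparse vectors, these arguments cover all of $\mathbb R$; for instance, scaling a single coordinate of $\beta$ does this. So \eqref{eq: rss rsc assump} amounts to a global bound $U^{-1}\le g''\le U$ on $\mathbb R$. This also guarantees that the natural parameter space is all of $\mathbb R$, so the MGF identity holds for every $\lambda$.
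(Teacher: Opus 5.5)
Your proposal is correct and follows the paper's proof. Both obtain the conditional moment generating function from the exponential-family normalization (the paper cites Theorem 5.10 of Lehmann for this), then bound the exponent $\{g(\zeta_i^*+a\lambda)-g(\zeta_i^*)-a\lambda\, g'(\zeta_i^*)\}/a$ by $aU\lambda^2/2$ via a second-order Taylor expansion and $g''\le U$ from Assumption \ref{assump: g}. Your Chernoff step for the two-sided tail, and your observation that the ``for every $X_i$'' clause makes the curvature bound global (so the identity holds for all $\lambda\in\mathbb R$), make explicit what the paper leaves implicit.
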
 
\begin{proof}[Proof of Lemma \ref{lemma: subgaussian}]
Under the GLM setting, we have $\mathbf E_{Y_i | X_i }(Y_i) = g'(X_i \beta^*)$. From Theorem 5.10 of \citet{lehmann2006theory}, we also have
\begin{equation*}
    \mathbf E_{Y_i | X_i }\left( \exp(\lambda Y_i) \right) = \exp\left( \frac{ g(X_i \beta^* + \lambda a )- g(X_i \beta^*) }{a}\right),~ \forall \lambda \in \mathbb R,
\end{equation*}
which leads to
\begin{equation*}
\begin{aligned}
\mathbf E_{Y_i,~ X_i } \left( e^{\lambda (Y_i- g'(X_i \beta^*))} \right) 
&= \mathbf E_{ X_i } \mathbf E_{Y_i | X_i } \left( e^{\lambda (Y_i- g'(X_i \beta^*))} \right) \\
&=\mathbf E_{ X_i } \exp\left( \frac{ g( X_i \beta^* + \lambda a )- g(X_i \beta^*) -\lambda a\cdot g'( X_i \beta^*)}{a}\right)\\
&\overset{(i)}{=} \mathbf E_{ X_i } \exp\left( \frac{ \lambda^2 a g''( \zeta_i)}{2 }\right)\\
&\le \exp\left( \frac{ \lambda^2 a U}{2 }\right), ~\forall \lambda \in \mathbb R,
\end{aligned}
\end{equation*}
where in equality (i), $ \zeta_i$ is between $X_i \beta^*$ and $X_i \beta^* + \lambda a$ based on Taylor's Theorem, and the last inequality follows from Assumption \ref{assump: g}.
Hence, we prove that $Y_i - g'( X_i \beta^* )$ is sub-Gaussian with zero mean and sub-Gaussian parameter $ \sqrt{aU}$, which completes the proof of Lemma \ref{lemma: subgaussian}. 
\end{proof}

\begin{lemma}[Stochastic errors]\label{le: error}
 Suppose Assumption \ref{assump: g} holds. Under all conditions of Lemma \ref{le: rip and max}, we have
 \begin{equation}\label{eq: concentrate grad}
\mathbf P  \left\{ \sup_{j \in \mathbb N_+}  \left(\left\| \sum_{ i \in [N_j]} X_{i}^\top \xi_i \right\|_\infty- 4 \sqrt{aU} \sqrt{ K N_j \log(jp)} \right)>0 \right\} \le 3p^{-2s} + 3p ^{-3},
\end{equation}
and 
\begin{equation}\label{eq: XiS*}
\mathbf P \left\{ \sup_{j \in \mathbb N_+}  \left( \left\| \sum_{ i \in [N_j]} X_{i, \mS^*}^\top \xi_i \right\|_2  - \sqrt{6aKU}  \sqrt{ N_j \left(s + \log(2j^2/\varrho) \right)} \right) >0 \right\} \le 3p^{-2s}+\varrho,
\end{equation}
where recall $\mathcal S^* = \text{supp}(\beta^*)$.
\end{lemma}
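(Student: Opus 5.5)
The argument combines the two batch-wise bounds in the standard way: a union bound over batches, where each batch's concentration event is given a failure probability that is summable in $j$.

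For \eqref{eq: concentrate grad}, write $\sum_{i\in[N_j]} X_i^\top\xi_i$ coordinatewise. Fix a coordinate $k\in[p]$. By Lemma \ref{lemma: subgaussian}, conditionally on the design each $\xi_i$ is independent, centred, and $\sqrt{aU}$-sub-Gaussian. Hence $\sum_{i\in[N_j]} X_{ik}\xi_i$ is conditionally sub-Gaussian with variance proxy $aU\sum_{i} X_{ik}^2$.

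On the event of \eqref{eq: sparse operator} applied batch by batch, with a single-coordinate support, we have $\sum_{i\in\mI_l} X_{ik}^2\le n_l(K+1/(2K))\le 2Kn_l$. Summing over $l\le j$ gives $\sum_{i\in[N_j]}X_{ik}^2\le 2KN_j$. This step costs probability $3p^{-2s}$ uniformly over all $j$.

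The Hoeffding-type tail then gives
\[
\mathbf P\Big(\Big|\sum_{i\in[N_j]}X_{ik}\xi_i\Big|>t\Big)\le 2\exp\big(-t^2/(4aUKN_j)\big).
\]
Choose $t=4\sqrt{aUKN_j\log(jp)}$, so the exponent equals $-4\log(jp)$. A union bound over $k\in[p]$ and $j\ge1$ yields failure probability at most $\sum_j 2p\,(jp)^{-4}\le 3p^{-3}$. Adding the design failure probability $3p^{-2s}$ gives the claim.

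For \eqref{eq: XiS*}, condition on the design again and set $v=\sum_{i\in[N_j]}X_{i,\mS^*}^\top\xi_i\in\mathbb R^s$. This is a sub-Gaussian vector of the form $A\xi$, where $A\in\mathbb R^{s\times N_j}$ has columns $X_{i,\mS^*}^\top$. I will use the quadratic-form tail of \citet{hsu2012tail} (Theorem 2.1, as in Lemma \ref{le: rip and max}):
\[
\|A\xi\|_2^2\le aU\Big(\mathrm{tr}(A^\top A)+2\sqrt{\mathrm{tr}((A^\top A)^2)u}+2\|A^\top A\|_2u\Big)
\]
holds with probability at least $1-e^{-u}$.

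On the event of \eqref{eq: sparse operator} with $C_a=1$, the matrix $AA^\top=\sum_{i}X_{i,\mS^*}^\top X_{i,\mS^*}$ has operator norm at most $2KN_j$. Hence its trace is at most $2KsN_j$. It follows that $\|v\|_2^2\le aU\cdot 2KN_j(s+2\sqrt{su}+2u)\le 6aKUN_j(s+u)$.

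Take $u=\log(2j^2/\varrho)$. Summing $e^{-u}=\varrho/(2j^2)$ over $j$ gives at most $\varrho\pi^2/12\le\varrho$. Adding the design failure $3p^{-2s}$ completes the bound.

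The only delicate point is to control the conditional variance proxies uniformly in $j$ on a single design event. Otherwise one might re-pay the design probability at every $j$. The point is handled because \eqref{eq: sparse operator} is already uniform over $j\in\mathbb N_+$, so all $j$ are covered by one event.
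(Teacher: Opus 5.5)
Your proposal is correct and follows the paper's proof essentially step for step. The paper also works on a single design event $\mathcal E_X$ from Lemma \ref{le: rip and max} (with $C_a=1$, uniform in $j$), which gives $\sum_{i\in[N_j]}X_{ik}^2\le 2KN_j$ and an operator-norm bound of $2KN_j$ on $\sum_{i\in[N_j]} X_{i,\mS^*}^\top X_{i,\mS^*}$. It then uses a conditional sub-Gaussian tail with a union bound over $k\in[p]$ and $j$ at level $(jp)^{-4}$ for \eqref{eq: concentrate grad}, and the Hsu--Kakade--Zhang quadratic-form bound with $u_j=\log(2j^2/\varrho)$ for \eqref{eq: XiS*}, with the same constants you obtain.
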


\begin{proof}[Proof of Lemma \ref{le: error}]
Define $X^{(j)} \in \mathbb R^{N_j \times p}$ the cumulated design matrix up to the $j$-th batch, and thus $X^{(j)} = \left( X_1^\top , \cdots, X_{N_j}^\top \right)^\top$. 
Also define $\xi^{(j)} = (\xi_1, \cdots, \xi_{N_j})^\top \in \mathbb R^{N_j \times 1 }$ the cumulated sub-Gaussian vector (proved by Lemma \ref{lemma: subgaussian}) up to the $j$-th batch.
Therefore $\sum_{ i \in [N_j]} X_{i}^\top \xi_i  = \left( X^{(j)} \right)^\top \xi^{(j)} \in \mathbb R^{p \times 1}$.

\textbf{On inequality \eqref{eq: concentrate grad}.}
For a given batch index $j$ and a given design $X^{(j)}$, it is straightforward that $ \left( X^{(j)}_{\cdot, k} \right)^\top \xi^{(j)}$ is a sub-Gaussian random variable with sub-Gaussian parameter $ \sigma = \sqrt{aU} \| X^{(j)}_{\cdot, k} \|_2$ (see Chapter 2 in \citep{wainwright2019high}), where $X^{(j)}_{\cdot, k} \in \mathbb R^{N_j \times 1}$ is the $k$-th column of $X^{(j)}$.
Define the event 
$$
\mathcal E_X = \left\{ \sup_{j \in \mathbb N_+} \sup_{S \subset[p]: |S| \le s}  \left\| \frac1{n_j} \sum_{i \in \mI_j} X_{i,S}^\top X_{i,S} - \Sigma_{SS}^{(j)} \right\|_2 \le \frac{1}{2K} \right\},
$$
and by Lemma \ref{le: rip and max} we have $\mathbf P(\mathcal E_X) \ge 1-3e^{-2s \log p}$.
And under $\mathcal E_X$ we derive that $ \sum_{i \in \mI_\ell} X_{i,k}^2 \le 2Kn_\ell$ holds for every $\ell \ge 1, k \in [p]$, leading that $\| X^{(j)}_{\cdot, k} \|_2^2 = \sum_{i \in[N_j]} X_{i,k}^2 \le 2K N_j$.
Therefore,
\begin{align*}
&\mathbf P  \left\{ \sup_{j \in \mathbb N_+} \max_{k \in [p]} \left(\left|\left( X^{(j)}_{\cdot, k} \right)^\top \xi^{(j)} \right| - 4 \sqrt{aU} \sqrt{ K N_j \log(jp)}  \right)>0 \right\}\\
\le& \mathbf P (\mathcal E_{X}^c) + \mathbf E_X \left\{ \mathbf1(\mathcal E_{X}) \cdot \sum_{j \ge 1} \sum_{k \in [p]}\mathbf P_{\xi^{(j)}|X } \left(  \left|\left( X^{(j)}_{\cdot, k} \right)^\top \xi^{(j)} \right|  > 4 \sqrt{aU} \sqrt{ K N_j \log(jp)}   ~\Big|~ X\right) \right\}\\ 
\le& 3p^{-2s} + p \sum_{j \ge 1} 2e^{-4 \log(j p)}\\
~\le& 3p^{-2s} + 3p ^{-3},
\end{align*}
where the first inequality applies the union bound, and the last inequality follows from $\sum_{j\ge 1} j^{-4} < 3/2$.

\textbf{On inequality \eqref{eq: XiS*}.}
By using Theorem 2.1 of \citet{hsu2012tail} and Lemma \ref{lemma: subgaussian}, for a given $j$ and given $X^{(j)}$, we have
\begin{equation*} 
  \mathbf P \left\{ \frac{\left\| \left( X^{(j)}_{\cdot, \mS^*} \right)^\top \xi^{(j)} \right\|_2^2}{aUN_j} \ge tr\left(\hat \Sigma_{\mS^*, \mS^*}^{(j)}\right)+ 2\left\|\hat \Sigma_{\mS^*, \mS^*}^{(j)} \right\|_F\sqrt{u_j} + 2\Lambda_{\max}\left( \hat \Sigma_{\mS^*, \mS^*}^{(j)} \right) u_j ~ \Bigg | j, X^{(j)}\right\} \le e^{-u_j},
\end{equation*}
where we define $\hat \Sigma_{\mS^*, \mS^*}^{(j)} : = \frac1{N_j} \sum_{i \in [N_j]} X_{i, \mS^*}^\top X_{i, \mS^*}$.
Combining with $|\mS^*|= s$, under event $\mathcal E_{X}$ we have
\begin{equation*}
\begin{aligned}
tr\left( \hat \Sigma_{\mS^*, \mS^*}^{(j)}\right) =&
 \sum_{k=1}^{s} \Lambda_k (\hat \Sigma_{\mS^*, \mS^*}^{(j)})
\le 2K s,\\
\left\|\hat \Sigma_{\mS^*, \mS^*}^{(j)} \right\|_F^2 = &
\sum_{k=1}^{s} \Lambda_k^2 \left( \hat \Sigma_{\mS^*, \mS^*}^{(j)} \right)
\le 4K^2s.
\end{aligned}
\end{equation*} 
Therefore, by taking $u_j = \log(2/\varrho) + 2 \log j$ (where $\varrho \in (0,1)$), we obtain
\begin{equation*}
\begin{aligned}
&  \mathbf P \left\{ \sup_{j \in \mathbb N_+}  \left( \left\| \sum_{ i \in [N_j]} X_{i, \mS^*}^\top \xi_i \right\|_2  - \sqrt{6aKU}  \sqrt{ N_j \left(s + \log(2j^2/\varrho) \right)} \right) >0 \right\} \\
\le&  \mathbf P (\mathcal E_{X}^c) + \mathbf E_X \left\{ \mathbf1(\mathcal E_{X}) \cdot \sum_{j \ge 1} \mathbf P \left( \left\| \sum_{ i \in [N_j]} X_{i, \mS^*}^\top \xi_i \right\|_2^2 \ge 6aKU\cdot  N_j\left( s + \log(2j^2/\varrho) \right)  ~\Big|~ X\right) \right\}\\
\le & 3p^{-2s} + \sum_{j \ge 1} \frac{\varrho}{2j^2}\\
\le & 3p^{-2s} +  \varrho,
\end{aligned}
\end{equation*}
where the last inequality follows from $\sum_{j\ge 1} j^{-2} < 2$. 
Hence, we complete the proof of \eqref{eq: XiS*} and the proof of Lemma \ref{le: error}.
\end{proof}

\bibliographystyle{unsrtnat}
\bibliography{sample}

\end{document}